\newcommand{\indep}{\rotatebox[origin=c]{90}{$\models$}}
\newcommand{\bX}{{\boldsymbol X}}
\newcommand{\bY}{{\boldsymbol Y}}
\newcommand{\by}{{\boldsymbol y}}
\newcommand{\bx}{{\boldsymbol x}} 
\newcommand{\bz}{{\boldsymbol z}}
\newcommand{\bZ}{{\boldsymbol Z}}
\newcommand{\be}{{\boldsymbol e}}
\newcommand{\bw}{{\boldsymbol w}}
\newcommand{\ba}{{\boldsymbol a}}
\newcommand{\mK}{\mathcal{K}}
\newcommand{\bV}{{\boldsymbol V}}
\newcommand{\mG}{\mathcal{G}}
\newcommand{\mX}{\mathcal{X}}
\newcommand{\mM}{\mathcal{M}}
\newcommand{\bU}{{\boldsymbol U}}
\newcommand{\bu}{{\boldsymbol u}}
\newcommand{\bs}{{\boldsymbol s}}
\newcommand{\bS}{{\boldsymbol S}}
\newcommand{\Var}{{\rm Var}}
\newcommand{\diag}{{\rm Diag}}
\newcommand{\bxi}{{\boldsymbol \xi}}
\newcommand{\bbeta}{{\boldsymbol \beta}}
\newcommand{\btheta}{{\boldsymbol \theta}}
\newcommand{\bepsilon}{{\boldsymbol \epsilon}}
\newcommand{\bSigma}{{\boldsymbol \Sigma}}
\newtheorem{theorem}{Theorem}[section]
\newtheorem{lemma}{Lemma}[section]
\newtheorem{remark}{Remark}
\newtheorem{assumption}{Assumption}
\newenvironment{proof}{\trivlist\item[\hskip \labelsep{\sc Proof:}]}
 {\unskip\nobreak\ \lower.3ex\hbox{$\Box$}\endtrivlist}
\begin{document}
 

\title{Extended Fiducial Inference for  Individual Treatment Effects via Deep Neural Networks} 

\author{Sehwan Kim$^{\dag}$ and Faming Liang\thanks{Correspondence author: Faming Liang, email: fmliang@purdue.edu.
 $^\dag$ \textcolor{black}{Department of Statistics, Ewha Womans University, Seoul 03760, Republic of Korea.}
  $^*$ Department of Statistics, Purdue University, West Lafayette, IN 47907, USA.
 } }
 

\date{} 

\date{\today}
 
\maketitle

\begin{abstract}
Individual treatment effect estimation has gained significant attention in recent data science literature. This work introduces the Double Neural Network (Double-NN) method to address this problem within the framework of extended fiducial inference (EFI). In the proposed method, deep neural networks are used to model the treatment and control effect functions, while an additional neural network is employed to estimate their parameters. The universal approximation capability of deep neural networks ensures the broad   applicability of this method. Numerical results highlight the superior  performance of the proposed Double-NN method compared to the conformal quantile regression (CQR) method in individual treatment effect estimation. From the perspective of statistical inference, this work advances the theory and methodology for statistical inference of large models. Specifically, it is theoretically proven that the proposed method permits the model size to increase with the sample size $n$ at a rate of $O(n^{\zeta})$ for some  $0 \leq \zeta<1$, while still maintaining proper quantification of uncertainty in the model parameters. This result marks a  significant improvement compared to the range $0\leq \zeta < \frac{1}{2}$ required by the classical central limit theorem.  Furthermore, this work provides a rigorous framework for quantifying the uncertainty of deep neural networks under the neural scaling law, representing a substantial contribution to the statistical understanding of large-scale neural network models.
\end{abstract}

\noindent
{\bf Keywords}: Causal Inference, Deep Learning, Fiducial Inference, Stochastic Gradient MCMC, Uncertainty Quantification

\section{Introduction}

Causal inference is a fundamental problem in many disciplines such as medicine, econometrics, and social science. Formally, let $\{(y_1,\bx_1,t_1), (y_2,\bx_2,t_2),\ldots, (y_n,\bx_n,t_n)\}$ denote a set of observations drawn from the following data-generating equations: 
\begin{equation} \label{dataGeq}
y_i=c(\bx_i) +\tau(\bx_i) t_i+\sigma z_i,  \quad i=1,2,\ldots,n,
\end{equation} 
where $\bx_i \in \mathbb{R}^d$ represents a vector of covariates of subject $i$, 
$t_i \in \{0,1\}$ represents the treatment assignment to subject $i$; 
$c(\cdot)$ represents the expected outcome of 
subject $i$ if assigned to the control group (with $t_i=0)$, and $\tau(\bx_i)$ is the 
expected treatment effect of subject $i$ if assigned to the treatment group (with $t_i=1$); $\sigma>0$ is the standard deviation,  and $z_i$ represent a standardized  random error that is not necessarily Gaussian. Under the potential outcome framework \citep{Rubin1974EstimatingCE}, each individual  
receives only one assignment of the treatment with $t_i=0$ or 1, but not both. 
The goal of causal inference is to make inference for the average treatment effect (ATE) or individual treatment effect (ITE). 

The ATE is defined as 
\begin{equation}\label{ATEeq}
\tau_0=\mathbb{E}(\tau(\bx))=\int_{\mX} \tau(\bx) dF(\bx),
\end{equation}
where $\mX$ denotes the sample space of $\bx$, and 
$F(\bx)$ denotes the cumulative distribution function of $\bx$.
 To estimate ATE, a variety of methods, including outcome regression, augmented/inverse probability weighting (AIPW/IPW) and matching, have been developed. See \cite{Imbens2004NonparametricEO} and \cite{Rosenbaum2002Book} for overviews. 
 
The ITE is often defined as the conditional average treatment effect (CATE):   
\begin{equation} \label{CATEeq}
\tau(\bx)=\mathbb{E}(Y|T=1,\bx)-\mathbb{E}(Y|T=0,\bx),
\end{equation}
see e.g., \cite{shalit2017pehe} and \cite{Lu2018EstimatingIT}. 
Recently, \cite{lei2021ite} proposed to make 
predictive inference of the ITE by quantifying the uncertainty of 
\begin{equation} \label{ITEeq}
\tilde{\tau}_i:=Y(T=1,\bx_i)-Y(T=0,\bx_i):=Y_i(1)-Y_i(0),
\end{equation}
where $Y_i(t_i)$ denotes the potential outcome of subject  
$i$ with treatment assignment $t_i \in \{0,1\}$. Henceforth, we will call $\tilde{\tau}_i$ the predictive ITE.  

It is known that ATE and ITE are identifiable if 
the conditions {\it `strong ignorability'} and {\it `overlapping'} are satisfied. \textcolor{black}{The former means that, after accounting for observed covariates, the treatment assignment is   
independent of potential outcomes;
and the latter ensures that every subject in the study has a positive 
probability of receiving either assignment, allowing for meaningful comparisons between treatment and control groups.}  
Mathematically, the two conditions can be expressed as: 
\[
(i) \ \mbox{\it strong ignorability:} 
\ \{Y(1),Y(0)\} \indep T|\bx; \ \ 
(ii) \ \mbox{\it overlapping:} \   0<P(T=1|\bx)<1, \ \  \forall \bx \in \mX,
\]
where $T\in \{0,1\}$ represents the treatment assignment variable, and $\indep$ denotes conditional independence.  
Together, they ensure that the causal effect can be 
correctly estimated without bias.
See e.g. \cite{Guan2019AUF} for more discussions on this issue. 

However, even under these assumptions, accurate inference for ATE and ITE can still be challenging. 
Specifically, the inference task can be complicated by unknown nonlinear forms 
of $c(\bx)$ and $\tau(\bx)$.
To address \textcolor{black}{these} issues, some authors have proposed to approximate them 
using a machine learning model, such as random forest (RF) \citep{breiman2001random}, Bayesian additive regression trees (BART) \citep{Chipman2010BARTBA}, and neural networks.  Refer to e.g.,
\cite{Foster2011SubgroupIF}, \cite{Hill2011bart},  \cite{shalit2017pehe},   \cite{wager2018nonpar2}, and \cite{hahn2020nonpar3} for the \textcolor{black}{details}. 
\textcolor{black}{Unfortunately, these methods often yield point estimates for  the ATE and ITE, while failing to correctly quantifying their uncertainty due to the complexity of the machine learning models.} Quite recently, \cite{lei2021ite} proposed to quantify the uncertainty of the predictive ITE  using the conformal inference method \citep{Vovk2005AlgorithmicLI,Shafer2008}. This method provides coverage-guaranteed confidence intervals for 
the predictive ITE,  but the intervals may become overly wide when the machine learning model is not consistently estimated. \textcolor{black}{In short, 
while machine learning models, particularly neural networks, can effectively model complex, nonlinear functions such as $c(\cdot)$ and $\tau(\cdot)$ for causal inference,  performing accurate uncertainty quantification with 
these models remains a significant challenge.
This is because these models typically have a complex functional form and involve a large number of parameters.} 
 
 In this paper, we propose to conduct causal inference using an extended fiducial inference (EFI) method \citep{LiangKS2024EFI}, with the goal of addressing the uncertainty quantification issue associated with treatment effect estimation.  
{\color{black} EFI provides an innovative framework for inferring model uncertainty based solely on observed data, aligning with the goal of fiducial inference \citep{Fisher1935The,hannig2009gfi}. 
Specifically, it aims to solve the data-generating equations by explicitly imputing the unobserved random errors and approximating the model parameters from the observations and imputed random errors using a neural network; it then infers the uncertainty of the model parameters based on the learned neural network function and the imputed random errors (see Section \ref{sect:EFI} for a brief review).}
To make the EFI method feasible for causal effect estimation with accurate uncertainty quantification, we extend the method in two key aspects:
\begin{itemize} 
\item[(i)]  We approximate each of the unknown functions, $c(\bx)$ and $\tau(\bx)$, by a deep neural network (DNN) model. 
\textcolor{black}{The DNN possesses universal approximation capability \citep{HornikSW1989,
Hornik1991ApproximationCO,Kidger2020UniversalAW}, meaning it can approximate any continuous function to an arbitrary degree of accuracy, provided it is sufficiently wide and deep.} This property makes the proposed method applicable to a wide range of data-generating processes.
 

\item[(ii)] We theoretically prove that the dimensions (i.e., the number of parameters) 
of the DNN models used to approximate $c(\bx)$ and $\tau(\bx)$ are allowed to increase with the sample size $n$ at a rate of $O(n^{\zeta})$ for some $0< \zeta<1$, while the uncertainty of the DNN models can still be correctly quantified. That is, we are able to correctly quantify 
the uncertainty of the causal effect although it has to be approximated using large models. 
\end{itemize} 

In this paper, we regard a model as `large' if its dimension increases with $n$ at a rate of $1/2\leq \zeta < 1$. 
We note that part (ii) represents a significant theoretical innovation in  statistical inference for large models.  
 In the literature on this area,  most efforts have focused on linear models, featuring techniques such as desparsified Lasso \citep{Javanmard2014,vandeGeer2014,ZhangZhang2014}, post-selection inference \citep{LeeTaylor2016}, 
 and Markov neighborhood regression \citep{LiangXJ2020MNR}.
For nonlinear models, the research landscape appears to be more scattered. 
\cite{Portnoy1986OnTC,Portnoy1988} showed that for independently and identically 
distributed (i.i.d) random vectors with the dimension $p$ increasing with the sample 
size $n$, the central limit theorem (CLT) holds if $p=O(n^{\zeta})$ for some $0\leq \zeta<1/2$. 
It is worth noting that Bayesian methods, despite being sampling-based, do 
not permit the dimension of the true model to increase with $n$ 
at a higher rate. For example, even in the case of generalized linear models, 
to ensure the posterior consistency, the dimension of the true model 
is only allowed to increase with $n$ at a rate $0 \leq \zeta< 1/4$ (see Theorem 2 and Remark 2 of \cite{jiang2007bayesian}).  
Under its current theoretical framework developed by \cite{LiangKS2024EFI}, 
EFI can only be applied to make inference for 
the models whose dimension is fixed or increases with $n$ at a very low rate. 
This paper extends the theoretical framework of EFI further, 
establishing its applicability for statistical inference of large models. 

It is worth noting that a DNN model with size $p=O(n^{\zeta})$, where $\zeta$ is close to (but less than) 1,  
has been shown to be sufficiently large for approximating many data generation processes. 
This is supported by the theory established in \cite{SunSLiang2021} and \cite{Farrell2021DeepNN}.
 In \cite{SunSLiang2021}, it is shown that, 
 as $n \to \infty$,
 a sparse DNN model of this size 
can provide accurate approximations for multiple classes 
of functions, such as bounded $\alpha$-H\"older smooth functions \citep{Schmidt-Hieber2017Nonparametric},  piecewise smooth functions with fixed input dimensions \citep{petersen2018optimal}, and functions representable  
by an affine system \citep{bolcskei2019optimal}. 
Similar results have also been obtained in 
 \cite{Farrell2021DeepNN}, where it is shown that a  
multi-layer perceptron (MLP) with this model size and the ReLU activation function   
can provide an accurate approximation to the functions that lie in a Sobolev ball with certain smoothness.  
The approximation capability of DNNs of this size has also been empirically 
validated by \cite{Hestness2017DeepLS},  where a neural 
scaling law of $p =O(n^{\zeta})$ with $0.5 \leq \zeta <1$ was 
identified through extensive studies across various model architectures in machine translation, language modeling, image processing, and speech recognition.
 
To highlight the strength of EFI in uncertainty quantification and to facilitate comparison with the conformal inference method, this study focuses on inference for predictive ITEs, although the proposed method can also be extended to ATE and CATE. Our numerical results demonstrate the superiority of the proposed method over the conformal inference method.

The remaining part of this paper is organized as follows. Section 2 provides a brief review
of the EFI method. Section 3 extends EFI to statistical inference for large statistical models.  Section 4 provides an illustrative example for EFI. 
Section 5 applies the proposed method to 
statistical inference for predictive ITEs, with both simulated and real data examples. Section 6 concludes the paper with a brief discussion.

 \section{A Brief Review of the EFI Method} \label{sect:EFI}


While fiducial inference was widely considered as a big blunder by R.A. Fisher, the goal he initially set ---inferring the uncertainty of model parameters on the basis of observations --- has been continually pursued by many statisticians, see e.g. \cite{Zabell1992Fisher},  \cite{hannig2009gfi}, \cite{hannig2016gfi},  \cite{Murph2022GeneralizedFI}, and \cite{Martin2023FiducialIV}.
To this end, \cite{LiangKS2024EFI} 
developed the EFI method based on the fundamental concept of structural inference 
\citep{Fraser1966StructuralPA,Fraser1968Book}. 
Consider a regression model: 
\begin{equation} \label{modeleq}
Y=f(\bX,Z,\btheta), 
\end{equation}
where $Y\in \mathbb{R}$ and $\bX\in \mathbb{R}^{d}$ represent the response and explanatory variables, respectively; $\btheta\in \mathbb{R}^p$ represents the vector of  parameters; 
and $Z\in \mathbb{R}$ represents a scaled random error following  
 a known distribution $\pi_0(\cdot)$.  
 For the model (\ref{dataGeq}),  the treatment assignment $T$ should be included as a part of $\bX$. 

Suppose that a random sample of size $n$ has been collected from the model, denoted by $\{(y_1,\bx_1), (y_2,\bx_2),\ldots,(y_n,\bx_n)\}$. In the point of view of structural inference \textcolor{black}{ \citep{Fraser1966StructuralPA,Fraser1968Book}}, they can be expressed in the data generating equations as follow:  
\begin{equation} \label{dataGeneqg}
y_i=f(\bx_i,z_i,\btheta), \quad i=1,2,\ldots,n.
\end{equation}
This system of equations consists of $n+p$ unknowns, namely, $\{\btheta, z_1, z_2, \ldots, z_n
\}$, while there are only $n$ equations. Therefore, the values of $\btheta$ cannot be uniquely determined by the data-generating equations, and this lack of uniqueness of unknowns introduces uncertainty in $\btheta$.  

Let $\bZ_n=\{z_1,z_2,\ldots,z_n\}$ denote the unobservable random errors,
which are also called latent variables in EFI.  
Let $G(\cdot)$ denote an inverse function/mapping for the parameter $\btheta$, i.e., 
\begin{equation} \label{Inveq}
\btheta=G(\bY_n,\bX_n,\bZ_n).
\end{equation}
It is worth noting that the inverse function is generally non-unique. For example, it can be constructed by solving any $p$ equations in (\ref{dataGeneqg}) for $\btheta$. 
As noted by \cite{LiangKS2024EFI}, this non-uniqueness of inverse function 
mirrors the flexibility of frequentist methods, where different 
estimators of $\btheta$ can be designed for different purposes. 

As a general method, \cite{LiangKS2024EFI} proposed to approximate the inverse function $G(\cdot)$ using a sparse DNN, see Figure \ref{EFInetwork} for illustration.
They also introduced an adaptive stochastic gradient Langevin dynamics (SGLD) algorithm, which facilitates the  simultaneous 
training of the sparse DNN 
 and simulation of the latent variables $\bz$. This is briefly described  
 as follows.

\begin{figure}[htbp]
    \centering
    \includegraphics[width=0.8\textwidth]{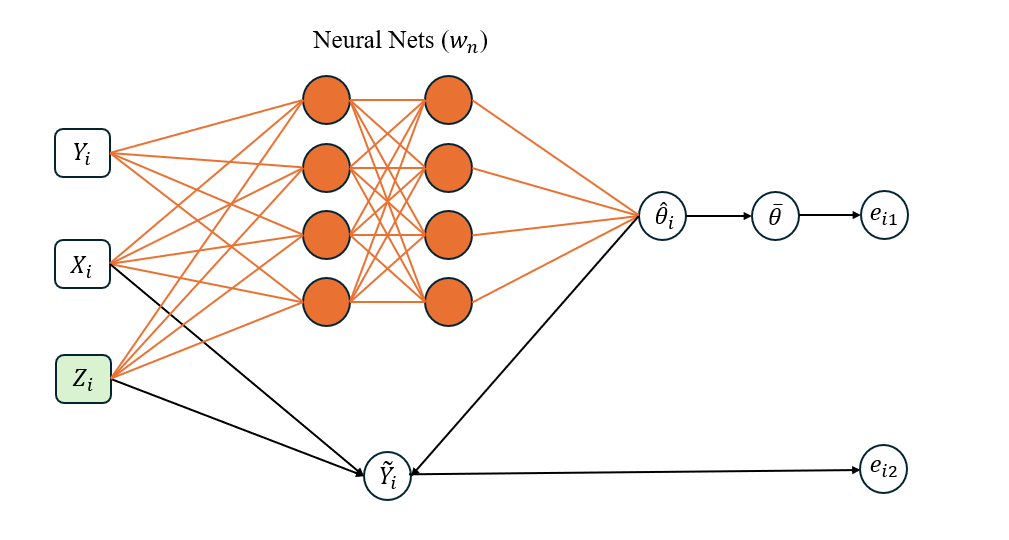}
    \caption{Illustration of the EFI network \citep{LiangKS2024EFI}, where the orange nodes and orange links form a DNN (parameterized by the weights $\bw_n$, with the subscript $n$ indicating its dependence on the training sample size $n$), the green node represents latent variable to impute, and the black lines represent deterministic functions. }
    \label{EFInetwork}
\end{figure}

Let $\hat{\btheta}_i:=\hat{g}(y_i,\bx_i,z_i,\bw_n)$ denote the DNN prediction function parameterized by the weights $\bw_n$ in the EFI network, and let 
\begin{equation} \label{thetabareq}
\bar{\btheta}:=\frac{1}{n} \sum_{i=1}^n \hat{\btheta}_i=\frac{1}{n} \sum_{i=1}^n \hat{g}(y_i,\bx_i,z_i,\bw_n),
\end{equation}
which serves as an estimator of $G(\cdot)$.  
The EFI network has two output nodes defined, respectively, by 
\begin{equation} \label{outputeq}
e_{i1} :=\|\hat{\btheta}_i-\bar{\btheta}\|^2, \quad 
e_{i2} :=d(y_i,\tilde{y}_i):=d(y_i,\bx_i, z_i, \bar{\btheta}), 
\end{equation}
where $\tilde{y}_i=f(\bx_i,z_i,\bar{\btheta})$,  $f(\cdot)$ is as specified in (\ref{dataGeneqg}), and $d(\cdot)$ is a function that measures the difference between $y_i$ and $\tilde{y}_i$. For example, for a normal linear/nonlinear regression, 
it can be defined as 
\begin{equation} \label{deq0}
d(y_i,\bx_i,z_i,\bar{\btheta})=\|y_i-f(\bx_i,z_i,\bar{\btheta})\|^2.
\end{equation}
For logistic regression, it is defined as a squared ReLU function, see  \cite{LiangKS2024EFI} for the \textcolor{black}{details}. 
Furthermore, EFI defines an energy function as follows: 
\begin{equation} \label{energyfunction11}
 U_n(\bY_n,\bX_n,\bZ_n,\bw_n) =  \sum_{i=1}^n d(y_i,\bx_i,z_i,\bar{\btheta}) + 
 \eta \sum_{i=1}^n\| \hat{\btheta}_i- \bar{\btheta} \|^2, 
\end{equation}
for some regularization coefficient $\eta>0$,
\textcolor{black}{where first term measures the fitting error of the model as implied by equation (\ref{deq0}),
and the second term regularizes the variation of 
$\hat{\btheta}_i$, ensuring that the neural network forms a proper estimator of the inverse function.} 
Given this energy function, we define the likelihood function as
\begin{equation} \label{likelihoodeq} 
\pi_{\epsilon}(\bY_n|\bX_n,\bZ_n,\bw_n) \propto  e^{- U_n(\bY_n,\bX_n,\bZ_n,\bw_n)/\epsilon},
\end{equation}
for some constant $\epsilon$ close to 0. 
As discussed in \cite{LiangKS2024EFI}, the choice of $\eta$ does not \textcolor{black}{have much affect} on the performance of EFI as long as 
$\epsilon$ is sufficiently small. 

Subsequently, the posterior of $\bw_n$ is given by 
\begin{equation} \label{eqA12}
\begin{split}
\pi_{\epsilon}(\bw_n|\bX_n,\bY_n,\bZ_n) &  
\propto \pi(\bw_n) e^{-U_n(\bY_n,\bX_n,\bZ_n,\bw_n)/\epsilon }, 
 \end{split} 
\end{equation} 
where $\pi(\bw_n)$ denotes the prior  of $\bw_n$;
and the predictive distribution of $\bZ_n$ is given by  
\begin{equation} \label{eqA13}
\begin{split}
 \pi_{\epsilon}(\bZ_n|\bX_n,\bY_n,\bw_n) &  
 \propto \pi_0^{\otimes n}(\bZ_n) e^{-U_n(\bY_n,\bX_n,\bZ_n,\bw_n)/\epsilon}.
 \end{split} 
\end{equation}

 In EFI, $\bw_n$ is estimated through maximizing the posterior 
 $\pi_{\epsilon}(\bw_n|\bX_n,\bY_n)$ given the observations $\{\bX_n,\bY_n \}$.
  By the Bayesian version of Fisher's identity \citep{SongLiang2020eSGLD}, 
 the gradient equation $\nabla_{\bw_n} \log \pi_{\epsilon}(\bw_n|\bX_n,\bY_n)$ $=0$ can be re-expressed as
 \begin{equation} \label{identityeq}
\nabla_{\bw_n} \log \pi_{\epsilon}(\bw_n|\bX_n,\bY_n)=\int  \nabla_{\bw_n} \log \pi_{\epsilon}(\bw_n|\bX_n,\bY_n,\bZ_n) \pi_{\epsilon}(\bZ_n|\bX_n,\bY_n,\bw_n) d\bw_n=0,
\end{equation} 
which can be solved using  an adaptive stochastic gradient MCMC algorithm 
\citep{LiangSLiang2022,deng2019adaptive}. 
The algorithm 
works by iterating between  two steps: 
\begin{itemize} 
\item[(a)] {\it Latent variable sampling}: draw $\bZ_n^{(k+1)}$ according to a Markov transition kernel that leaves $\pi_{\epsilon}(\bz|\bX_n,\bY_n,\bw_n^{(k)})$ to be invariant; 

\item[(b)] {\it Parameter updating}:  update $\bw_n^{(k)}$ toward the maximum of $\log \pi_{\epsilon}(\bw_n|\bX_n,\bY_n,\bZ_n)$ using stochastic approximation \citep{RobbinsM1951}, based on the sample $\bZ_n^{(k+1)}$.
\end{itemize}
See Algorithm \ref{EFIalgorithm} for the pseudo-code.  
 This algorithm is termed ``adaptive'' because the transition kernel in the latent variable sampling step changes with the working parameter estimate of  
 $\bw_n$. The parameter updating step can be implemented using mini-batch SGD, and the latent variable sampling step can be executed in parallel for each observation $(y_i,\bx_i)$. Hence, the algorithm is scalable with respect to  large datasets.

 \begin{algorithm}[!ht]
 \caption{Adaptive SGHMC for Extended Fiducial Inference}
 \label{EFIalgorithm}
 \SetAlgoLined 
 {\bf (i) (Initialization)} Initialize
 $\bw_n^{(0)}$, $\bZ_n^{(0)}$, $M$ (the number of fiducial samples to collect), and $\mK$ (burn-in iterations).

\For{k=1,2,\ldots,\mbox{$\mK+M$}}{

 {\bf (ii) (Latent variable sampling)} Given $\bw_n^{(k)}$, simulate $\bZ_n^{(k+1)}$ by the SGHMC algorithm \citep{SGHMC2014}: 
 \[
  \begin{split}
\bV_n^{(k+1)} &= (1-\varpi) \bV_n^{(k)} +\upsilon_{k+1} \widehat{\nabla}_{\bZ_n} \log \pi_{\epsilon}(\bZ_n^{(k)}|\bX_n,\bY_n,\bw_n^{(k)}) +\sqrt{2 \varpi \tilde{\tau}  \upsilon_{k+1}} \be^{(k+1)}, \\
\bZ_n^{(k+1)}&=\bZ_n^{(k)}+\bV_n^{(k+1)}, 
\end{split}
\]
where $\varpi$ is the moment parameter,   $\upsilon_{k+1}$ is the learning rate, 
$\tilde{\tau}=1$ is the temperature, and  $\be^{(k+1)} \sim N(0,I_{d_{\bz}})$.

 {\bf (iii) (Parameter updating)} Draw a minibatch $\{(y_1,\bx_1,z_1^{(k)}),\ldots,(y_m,\bx_m,z_m^{(k)})\}$ 
 and update the network weights by the SGD algorithm: 
\begin{equation}
\label{sgd_update}
\bw_n^{(k+1)}=\bw_n^{(k)} +\gamma_{k+1} \left[ \frac{n}{m} \sum_{i=1}^m \nabla_{\bw_n} \log \pi_{\epsilon}(y_i|\bx_i,z_i^{(k)},\bw_n^{(k)})+ \nabla_{\bw_n} \log \pi(\bw_n^{(k)}) \right],
\end{equation}
where $\gamma_{k+1}$ is the step size, and $\log \pi_{\epsilon}(y_i|\bx_i,z_i^{(k)},\bw_n^{(k)})$ can be appropriately defined according to \textcolor{black}{(\ref{likelihoodeq})}. 
  
{\bf (iv) (Fiducial sample collection)} If $k+1 > \mK$, calculate 
 $\hat{\btheta}_i^{(k+1)}=\hat{g}(y_i,\bx_i,z_i^{(k+1)},\bw_n^{(k+1)})$ 
 for each $i\in \{1,2,\ldots,n\}$ and average them to get a fiducial $\bar{\btheta}$-sample as calculated in (\ref{thetabareq}). 
}

{\bf (v) (Statistical Inference)} Conducting statistical inference for the model based on the collected fiducial samples. 
\end{algorithm} 
  

Under mild conditions for adaptive stochastic gradient 
MCMC algorithms \citep{deng2019adaptive,LiangSLiang2022}, it was shown in \cite{LiangKS2024EFI} that 
\begin{equation} \label{wconvergence}
\|\bw_n^{(k)} -\bw_n^* \|\stackrel{p}{\to} 0, \quad \mbox{as $k\to \infty$},
\end{equation}
 where $\bw_n^*$ denotes a solution to equation (\ref{identityeq}) and $\stackrel{p}{\to}$ denotes convergence in probability, and  that 
\begin{equation} \label{Zconvergence}
\bZ_n^{(k)} \stackrel{d}{\rightsquigarrow} \pi_{\epsilon}(\bZ_n|\bX_n,\bY_n,\bw_n^*), 
\quad \mbox{as $k \to \infty$},
\end{equation}
in 2-Wasserstein distance, where  $\stackrel{d}{\rightsquigarrow}$ denotes weak convergence. 

To study the limit of (\ref{Zconvergence}) as $\epsilon$ decays to 0, i.e.,  
\[
p_n^*(\bz|\bY_n,\bX_n,\bw_n^*)=  
\lim_{\epsilon \downarrow 0} \pi_{\epsilon}(\bZ_n|\bX_n,\bY_n,\bw_n^*),
\]
where 
$p_n^*(\bz|\bY_n,\bX_n,\bw_n^*)$ 
is referred to as the extended fiducial 
density (EFD) of $\bZ_n$ learned in EFI, 
it is necessary for $\bw_n^*$ to be a consistent estimator of $\bw_*$, the parameters 
of the underlying true EFI network.
To ensure this consistency, \cite{LiangKS2024EFI} impose some conditions on the structure of the DNN 
 and the prior distribution $\pi(\bw_n)$. Specifically, they assume that $\bw_n$ takes values in a compact space $\mathcal{W}$; $\pi(\bw_n)$ is a truncated mixture Gaussian distribution on  $\mathcal{W}$; and the DNN structure satisfies certain constraints given in \cite{SunSLiang2021}, e.g., the width of the output layer (i.e., the dimension of $\btheta$) is fixed or grows very slowly with $n$. They then 
 justify the consistency of $\bw_n^*$  based on the sparse 
 deep learning theory developed in \cite{SunSLiang2021}. 
 The consistency of $\bw_n^*$ further implies that 
 \[
 G^*(\bY_n,\bX_n,\bZ_n)= \frac{1}{n} \sum_{i=1}^n \hat{g}(y_i,\bx_i,z_i,\bw_n^*),
 \]
 serves as a consistent estimator for the inverse 
function/mapping $\btheta=G(\bY_n,\bX_n,\bZ_n)$.
 
By Theorem 3.2 in \cite{LiangKS2024EFI}, for the target model (\ref{dataGeq}), which is a noise-additive model, the EFD of $\bZ_n$ is 
invariant to the choice of the inverse function, provided that $d(\cdot)$ is  specified as in (\ref{deq0}) in defining the energy function. 
Further, by Lemma 4.2 in \cite{LiangKS2024EFI}, $p_n^*(\bz|\bY_n,\bX_m,\bw_n^*)$ is given by 
\begin{equation} \label{EFDeq:Z}
\frac{dP_n^*(\bz|\bX_n,\bY_n,\bw_n^*)}{d\nu}= \frac{\pi_0^{\otimes n}(\bz)}{\int_{\mathcal{Z}_n} \pi_0^{\otimes n}(\bz) d \nu}, 
\end{equation}
where $P_n^*(\bz|\bX_n,\bY_n,\bw_n^*)$ 
represents the cumulative distribution 
function (CDF) corresponding to $p_n^*(\bz|\bX_n,\bY_n,\bw_n^*)$;
$\mathcal{Z}_n=\{\bz: U_n(\bY_n,\bX_n,\bZ_n, \bw_n^*)=0\}$  
represents the zero-energy set, which forms a 
manifold in the space $\mathbb{R}^n$; and 
$\nu$ is the sum of intrinsic measures on the $p$-dimensional manifold in 
$\mathcal{Z}_n$. That is, under the consistency of $\bw_n^*$,   
$p_n^*(\bz|\bX_n,\bY_n,\bw_n^*)$ is reduced to a truncated density function  of $\pi_0^{\otimes n}(\bz)$ on the manifold 
$\mathcal{Z}_n$, while $\mathcal{Z}_n$ itself is also invariant to 
the choice of the inverse function as shown in Lemma 3.1 of \cite{LiangKS2024EFI}. In other words, for the model (\ref{dataGeq}), 
the EFD of $\bZ_n$ is asymptotically invariant to the inverse function we learned given its consistency.

Let $\Theta:=\{\btheta \in \mathbb{R}^p: \btheta=G^*(\bY_n,\bX_n,\bz), \bz\in \mathcal{Z}_n\}$ denote the parameter space of the target model, which represents the set of all possible values of $\btheta$ that $G^*(\cdot)$ takes when $\bz$ runs over $\mathcal{Z}_n$.
Then, for any function $b(\btheta)$ of 
interest, its EFD $\mu_n^*(\cdot|\bY_n,\bX_n)$ associated with 
 $G^*(\cdot)$ is given by 
\begin{equation} \label{EFDeq}
\begin{split}
\mu_n^*(B|\bY_n,\bX_n) &=\int_{\mathcal{Z}_n(B)} d P_n^*(\bz|\bY_n,\bX_n,\bw_n^*),   \quad \mbox{for any measurable set $B \subset \Theta$},
\end{split}
\end{equation}
where $\mathcal{Z}_n(B)=\{\bz\in \mathcal{Z}_n: b(G^*(\bY_n,\bX_n,\bz)) \in B\}$. The EFD provides an uncertainty measure for $b(\btheta)$. 
Practically, 
the EFD of $b(\btheta)$ can be constructed based on the samples
$\{b(\bar{\btheta}_1), 
b(\bar{\btheta}_2), \ldots, b(\bar{\btheta}_M)\}$, where 
$\{\bar{\btheta}_1, \bar{\btheta}_2, \ldots, \bar{\btheta}_M\}$ denotes  
the fiducial $\bar{\btheta}$-samples collected at step (iv) of Algorithm \ref{EFIalgorithm}. 

Finally, we note that, \textcolor{black}{as discussed in \cite{LiangKS2024EFI}, the invariance property of $\mathcal{Z}_n$ is not crucial to the validity of EFI, although it does enhance the robustness of the inference.}
Additionally, for a neural network model, its parameters are only unique up to certain loss-invariant transformations, such as reordering hidden neurons within the same hidden layer or simultaneously altering the sign or scale of certain connection weights, see \cite{SunSLiang2021} for discussions. 
Therefore, in EFI, the consistency of $\bw_n^*$ refers to its consistency with respect to one of the equivalent 
solutions to (\ref{identityeq}), 
while mathematically $\bw_n^*$
can still be treated as unique. Refer to Section \ref{Sect:outline} (of the supplement) for more discussions on this issue.


\section{EFI for Large Models} \label{sect:EFI-large}

In this section, we first establish the consistence of the inverse function/mapping learned in EFI for large models, and then discuss its application for uncertainty quantification of deep neural networks. 

\subsection{Consistency of Inverse Mapping Learned in EFI for Large Models}

It is important to note that the sparse deep learning theory of  \cite{SunSLiang2021} is developed under the general constraint $dim(\bw_n)=O(n^{1-\delta})$ for some $0<\delta<1$, 
which restricts the dimension of the output layer of the DNN model to be fixed or grows very slowly with the sample size $n$.
Therefore,  under its current theoretical framework, 
EFI can only be applied to the models for which the dimension is fixed or increases 
very slowly with $n$. 
 
To extend EFI to large models, where 
the dimension of $\btheta$ can grow with $n$ at a rate of $O(n^{\zeta})$, 
particularly for $1/2\leq \zeta<1$, we provide a new proof for 
the consistency of  $G^*(\bY_n,\bX_n,\bZ_n)$ based on the theory 
of stochastic deep learning \citep{LiangSLiang2022}. 
Specifically, we establish the following theorem, where the output layer width of the DNN in the EFI network is set to match the dimension of $\btheta$. The proof is lengthy and provided in the supplement.

\begin{theorem} \label{thm:largemodel} Suppose Assumptions \ref{ass1}-\ref{ass6} hold (see the supplement), $\epsilon$ is sufficiently small, and 
\begin{equation} \label{structureq}
     \sum_{l=1}^H d_l \prec n,
 \end{equation}
 where $d_l$ denotes the width of layer $l$, $d_H=dim(\btheta)$, and $H$ denotes the depth of the DNN in the EFI network. Then $G^*(\bY_n,\bX_n,\bZ_n)= \frac{1}{n} \sum_{i=1}^n \hat{g}(y_i,\bx_i,z_i,\bw_n^*)$ constitutes a consistent estimator of the inverse function. 
\end{theorem}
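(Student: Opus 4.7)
The plan is to establish consistency of $G^*$ by chaining three layers of approximation: the adaptive stochastic gradient MCMC locates $\bw_n^*$ as a solution of the gradient equation (\ref{identityeq}); this $\bw_n^*$ specifies a DNN that closely matches some population-level inverse function in the sparse DNN function class; and this DNN class contains, or densely approximates, the true inverse function under the width constraint $\sum_{l=1}^H d_l \prec n$. The convergence results (\ref{wconvergence})--(\ref{Zconvergence}) already handle the algorithmic layer. The key shift from \cite{LiangKS2024EFI} is to replace their invocation of the sparse deep learning theory of \cite{SunSLiang2021} (which forced $\dim(\bw_n)=O(n^{1-\delta})$ with a nearly-fixed output width) by the stochastic deep learning theory of \cite{LiangSLiang2022}, whose analysis treats the latent $\bZ_n$ as genuine stochastic inputs and therefore does not rely on a fixed output dimension.

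The core technical step would be to produce a bound of the form
\begin{equation*}
\frac{1}{n}\sum_{i=1}^n \bigl\| \hat{g}(y_i,\bx_i,z_i,\bw_n^*) - G(\bY_n,\bX_n,\bZ_n) \bigr\|^2 \stackrel{p}{\to} 0,
\end{equation*}
which I would split into an approximation error and an estimation error. For the approximation error I would invoke universal approximation of DNNs, verifying that the width-sum condition $\sum_{l=1}^H d_l \prec n$, together with $d_H=\dim(\btheta)$, still supplies enough capacity to represent the inverse function implicitly defined by the data-generating equations (\ref{dataGeneqg}), uniformly over the compact parameter domain implied by Assumptions \ref{ass1}--\ref{ass6}. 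For the estimation error I would apply the stochastic deep learning machinery: with $\pi(\bw_n)$ a truncated mixture Gaussian on the compact $\mathcal{W}$, and with the energy $U_n/\epsilon$ concentrating the posterior on the zero-energy manifold $\mathcal{Z}_n$ as $\epsilon\downarrow 0$, the posterior mode $\bw_n^*$ satisfies a localization inequality whose radius scales with $(\sum_l d_l)/n$ rather than with $\dim(\bw_n)/n$, via a metric-entropy bound on the DNN function class that uses layer widths (not total parameter count) as the complexity index.

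A crucial ingredient is the averaging in (\ref{thetabareq}). Each $\hat{g}(y_i,\bx_i,z_i,\bw_n^*)$ can be regarded as an individual, noisy solution of one equation in (\ref{dataGeneqg}); averaging over the $n$ equations contributes a variance-reduction factor that offsets the $O(n^\zeta)$ growth of the output dimension for any $\zeta<1$. I would make this precise by treating $\bar{\btheta}$ as an empirical surrogate for $\int \hat{g}(y,\bx,z,\bw_n^*)\,dF(y,\bx,z)$, combined with a uniform law of large numbers over the DNN class (uniform in $\bw_n\in\mathcal{W}$), so that the bias-variance trade-off is driven by the width-sum $\sum_l d_l$ rather than by $d_H$ alone.

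The hardest step will be controlling the estimation error when the output dimension $d_H$ grows as $O(n^\zeta)$ with $\zeta$ potentially close to $1$; naive counting of the last-layer parameters $d_{H-1}d_H$ would already saturate the sample size. The resolution I expect to pursue is that the small-$\epsilon$ likelihood in (\ref{likelihoodeq}) pins $\bZ_n$ onto the $p$-dimensional zero-energy manifold $\mathcal{Z}_n$ described in (\ref{EFDeq:Z}), so the effective complexity governing $\bw_n^*$ is not the ambient DNN parameter count but the intrinsic dimension of the manifold together with the layer-width sum. Turning this heuristic into a rigorous bound under Assumptions \ref{ass1}--\ref{ass6}, and verifying that $\bw_n^*$ indeed tracks this intrinsic geometry under the adaptive SGHMC dynamics, is the main obstacle and the reason the proof is deferred to the supplement.
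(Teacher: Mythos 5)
There is a genuine gap: the proposal correctly identifies the skeleton (establish consistency of $\bw_n^*$ via the stochastic deep learning theory of \cite{LiangSLiang2022}, then transfer to $G^*$ through Lemma 4.1 of \cite{LiangKS2024EFI}), but it is missing the central mechanism that makes the width-sum condition $\sum_{l=1}^H d_l \prec n$ the right complexity index. The paper's proof introduces an auxiliary StoNet by injecting small Gaussian noise $e_{l,i}\sim N(0,\sigma_l^2)$ into every hidden and output neuron, shows the StoNet's complete-data posterior is asymptotically equivalent to the DNN's, and thereby factorizes the problem into exactly $\sum_{l=1}^H d_l$ \emph{separate high-dimensional linear regressions}, one per neuron, each with $d_{l-1}\prec n$ covariates. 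Each regression is estimated consistently at rate $O(\sigma_l^2/(n\sigma_{l-1}^2))$ using the mixture-Gaussian-prior sparse regression theory of Song and Liang (their Theorem 3.4), after verifying sparse-eigenvalue conditions for the layer-wise design matrices; summing the per-neuron errors gives $\frac{c}{n}\sum_l d_l \sigma_l^2/\sigma_{l-1}^2 = o(1)$, which is precisely where (\ref{structureq}) enters. Convergence to $\tilde{\bw}_n^*$ is then obtained through the imputation-regularized-optimization iteration under a contraction-mapping assumption (Assumption S6). None of this decomposition, nor the IRO convergence argument, appears in your proposal; a generic "metric-entropy bound indexed by layer widths" does not by itself explain why the last layer's $d_{H-1}d_H \succ n$ parameters are harmless.

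Your proposed resolution of the hardest step — that the effective complexity of $\bw_n^*$ is governed by the intrinsic dimension of the zero-energy manifold $\mathcal{Z}_n$ — would not work: that manifold constrains the latent errors $\bZ_n$, not the network weights, and it plays no role in the paper's consistency argument for $\bw_n^*$ (it only enters the characterization of the extended fiducial density of $\bZ_n$ after consistency is established). Likewise, the variance-reduction-by-averaging heuristic for $\bar{\btheta}$ is not the operative mechanism: consistency of $\frac{1}{n}\sum_i \hat{g}(y_i,\bx_i,z_i,\bw_n^*)$ follows from consistency of $\bw_n^*$ itself, not from a law-of-large-numbers cancellation across the $n$ equations offsetting the $O(n^\zeta)$ output dimension.
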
 

As implied by (\ref{structureq}), we have 
$d_l \prec n$ holds for each layer $l=1,2,\ldots,H$. We call such a neural network a narrow DNN. 
For narrow DNNs, by the existing theory, see e.g., \cite{Kidger2020UniversalAW}, \cite{Park2020MinimumWF}, and \cite{Kim2023MinimumWF}, 
the universal approximation can be achieved with 
a minimum hidden layer width of  
 $\max\{d_0+1, d_H\}$, where $d_0$ and $d_H$ represent the widths of 
the input and output layers, respectively.
Hence, (\ref{structureq}) implies that EFI can be applied to 
statistical inference for a large model of dimension 
\[
dim(\btheta)=d_H =O(n^{\zeta}), \quad 0 \leq \zeta<1,
\]
under the narrow DNN setting
with the depth $H=O(n^{\beta})$ for some $0<\beta<1-\zeta$. 
Here, Without loss of generality, we assume $d_0 \preceq d_H$.
For such a DNN, the total dimension of $\bw_n$: 
\[
dim(\bw_n)=\sum_{i=1}^H d_i (d_{i-1}+1) =O(n^{2\zeta+\beta}),
\]
can be much greater than $n$, where `1' represents the bias parameter of each neuron at the hidden and output layers. 
Specifically, we can have $dim(\bw_n) \succ n$ with appropriate choices of $\zeta$ and $\beta$.
However, leveraging the asymptotic equivalence between the DNN and an auxiliary stochastic neural network (StoNet) \citep{LiangSLiang2022}, we can still prove that the resulting 
estimator of $\btheta$ is consistent, see the supplement for the detail. 

Regarding this extension of the EFI method for statistical inference of large models, we have an additional remark: 

\begin{remark} 
In this paper, we impose a mixture Gaussian prior on $\bw_n$ to ensure the consistency of $\bw_n^*$ and, consequently, 
the consistency of the inverse mapping $G^*(\bY_n,\bX_n,\bZ_n)$. 
However, this Bayesian treatment of $\bw_n$ is not strictly necessary, although it introduces sparsity that improves the efficiency of EFI.
For the narrow DNN, the consistency of the $\bw_n$ estimator can also be established under the frequentist framework by leveraging the asymptotic equivalence between the DNN and the auxiliary StoNet, using the same technique introduced in the supplement (see Section \ref{sect:consistency}). In this narrow and deep setting, each of the regressions formed by the StoNet is low-dimensional (with $d_l \prec n$), making the Bayesian treatment of $\bw_n$ unnecessary while still achieving a consistent estimator of $\bw_n$.
\end{remark}

\subsection{Double-NN Method}

Suppose a DNN  is used for modeling the data, i.e., approximating the function $f(\cdot)$ in (\ref{modeleq}). 
By \cite{SunSLiang2021} and \cite{Farrell2021DeepNN}, a DNN of 
size $O(n^{\zeta})$ for some $0<\zeta<1$ has been large enough for approximating  many classes 
of functions. Therefore, EFI can be used for making inference for such a 
DNN model. 
In this case, EFI involves two neural networks, one is for modeling the data, which is called the `data modeling network' and parameterized by $\btheta$; 
and the other one is for approximating the inverse function, which is called the `inverse mapping network' and parameterized by $\bw_n$. 
Therefore, the proposed method is coined as `double-NN'. 
\textcolor{black}{
Note that during the EFI training process, only the parameters  $\bw_n$ of the inverse mapping network are updated in equation (\ref{sgd_update}) of Algorithm \ref{EFIalgorithm}. The parameters of the data modeling network are subsequently updated in response to the adjustment of $\bw_n$, based on the formula given in  (\ref{thetabareq}).} 

In our theoretical study for the double-NN method, we actually assume that the true data-generating model $Y=f(\bX,Z,\btheta)$ is a neural network, thereby \textcolor{black}{omitting} the approximation error of the data modeling network, based on its universal approximation capability. \textcolor{black}{
In practice, we have observed that the double-NN method is robust to this  approximation error. Specifically, even when the true model is not a neural network, EFI can still recover the true random errors with high accuracy and achieve the zero-energy solution as $n\to \infty$ and $\epsilon\to 0$. A further theoretical exploration of this phenomenon would be of interest.}

As mentioned previously, for 
 a neural network model, its parameters are only unique up to certain loss-invariant transformations. 
As the training sample size $n$ becomes large, we expect that the optimizers 
$\hat{\btheta}:=\arg\max_{\btheta} \pi_{\epsilon}(\bZ_n|\bX_n,\bY_n,\btheta)$ are all equivalent.
Thus, in this paper, the consistency of $\hat{\btheta}$ refers to its consistency with respect to one of the equivalent global optimizers, 
while mathematically $\hat{\btheta}$
can still be treated as unique. A similar issue occurs to the parameters of the inverse mapping network, as discussed in Section \ref{Sect:outline} of the supplement.

\section{An Illustrative Example for EFI} \label{illusexample}

To illustrate how EFI works for statistical inference problems, we consider 
a  linear regression example: 
\begin{equation} \label{causalmodel1}
y_i=\tau T_i +\mu+\bx_i^{\top} \bbeta+\sigma z_i, \quad i=1,2,\ldots,n,
\end{equation}
where $T_i \in \{0,1\}$ is a binary variable indicating the treatment assignment,  
$\tau$ is the treatment effect, $\bx_i \in \mathbb{R}^d$ are confounders/covariates, $z_i\sim N(0,1)$ is the standardized random noise, and $\bbeta \in \mathbb{R}^d$ and $\sigma\in \mathbb{R}_+$ are unknown parameters. 
For this example, $\tau$ represents the ATE as well as the CATE, 
due to its independence of the covariates $\bx$. 
In the simulation study, we set $\tau=1$, $\mu=1$, $d=4$, and $\bbeta=(-1,1,-1,1)^{\top}$; 
 generate $\bx_i \sim N(0,I_d)$; and   
 generate the treatment variable via a logistic regression:
\begin{equation} \label{treatmodel1}
P(T_i=1)=\frac{1}{1+\exp\{-\nu-\bxi^{\top} \bx_i \}},
\end{equation}
where $\nu=1$ and $\bxi=(-1,1,-1,1)^{\top}$.  
We consider three different cases with the sample size 
$n=250$, 500 and 1000, respectively. For each case, we  generate 100 datasets. 

Statistical inference for the parameters in the model (\ref{causalmodel1}) can be made  with EFI under its standard framework.  Let $\btheta=(\tau,\mu,\bbeta^{\top},\log \sigma)^{\top}$ be the parameter vector.  EFI approximates the inverse function $\btheta=g(y,T,\bx,z)$ by a DNN, for which $(y,T,\bx,z)$ serves as input variables and $\btheta$ as output variables. The results are summarized in Table \ref{causal:ATE}. 

For comparison, a variety of methods, including Unadj \citep{imbensrubin2015}, inverse probability weighting (IPW) \citep{Rosenbaum1987ModelBasedDA}, double-robust (DR) \citep{rrz94, Bang2005DoublyRE}, and BART \citep{Hill2011bart}, have been applied to this example. These methods fall into distinct categories. 
The Unadj is straightforward, estimating the ATE by calculating the difference between the treatment and control groups, i.e., $\hat{\tau}=\frac{1}{n_t} \sum_{i=1}^{n_t} Y_i(1)- \frac{1}{n_c} \sum_{i=1}^{n_c} Y_i(0)$,
where the effect of confounders is not adjusted. 
Both IPW and DR are widely used ATE estimation methods, which adjust the effect of confounders based on propensity scores. They both are implemented using the R package {\it drgee} \citep{drimplement}. 
The BART employs Bayesian additive regression trees to learn the outcome function,  
which naturally accommodates  heterogeneous treatment effects as well as 
nonlinearity of the outcome function. \textcolor{black}{It is implemented using the R package {\it bartcause} \citep{Dorie2020CausalIU}. }

 \begin{table}[htbp]
\caption{Comparison of EFI with various ATE estimation methods, where “coverage” refers to the averaged coverage rate of  $\tau$,  “length” refers to the averaged width of confidence intervals, and the number in the parentheses refers to the standard deviation of the averaged width. The averages and standard deviations  were calculated based on 100 datasets.}
\label{causal:ATE}
\vspace{-0.15in}
\begin{center}
\begin{tabular}{crcccrcccc} \toprule
   &\multicolumn{2}{c}{$n=250$} & &  \multicolumn{2}{c}{$n=500$} & &  \multicolumn{2}{c}{$n=1000$}\\  
  \cline{2-3} \cline{5-6}  \cline{8-9} 
  Method  & coverage & length & &  coverage & length & & coverage & length\\ 
  \midrule
 Unadj  & 0.95 & 1.161(0.066) & &  0.93 & 0.822(0.032)  & & 0.97 & 0.424(0.017)\\ 
 BART   & 0.99 & 0.857(0.070) & &   0.98  & 0.611(0.047) & & 0.96 & 0.428(0.024)\\ 
IPW  & 0.90 &  0.710(0.157) & &  0.92 & 0.560(0.141) & & 0.92 & 0.417(0.101)\\ 
DR  & 0.96 & 0.652(0.058) & &   0.93 & 0.465(0.033) & & 0.94 & 0.331(0.017)\\ 
EFI  & 0.95& 0.647(0.033) & &  0.95 & 0.438(0.021) & & 0.95 & 0.338(0.012)\\ 
\bottomrule
\end{tabular}
\end{center}
\end{table}

The comparison indicates that EFI performs very well for this standard ATE estimation problem. Specifically, EFI generates confidence intervals of nearly the same length as DR, but with more accurate coverage rates. 
This is remarkable, as DR has often been considered as the golden standard for ATE estimation and is consistent if either the outcome or propensity score models is correctly specified,  and locally efficient if both are correctly specified. 
Furthermore, EFI produces much shorter confidence intervals compared to Unadj, IPW, and BART, while maintaining more accurate coverage rates. 


We attribute the superior performance of EFI on this example 
to its fidelity in parameter estimation, an attractive property of EFI as discussed in \cite{LiangKS2024EFI}. As implied by (\ref{eqA13}), EFI 
essentially estimates $\btheta$ by maximizing the predictive likelihood function 
$\pi_{\epsilon}(\bZ_n|\bX_n,\bY_n,\btheta)\propto \pi_0^{\otimes n}(\bZ_n) e^{- 
U_n(\bY_n,\bX_n,\bZ_n, \bw)/\epsilon}$, which balances the likelihood of $\bZ_n$
and the model fitting errors coded in $U_n(\cdot)$. 
 In contrast, the maximum likelihood estimation (MLE) method sets $\hat{\btheta}_{MLE}= \arg\max_{\btheta}\pi_0^{\otimes n}(\bZ_n)$, where $\bZ_n$ is expressed as a  function of $(\bY_n,\bX_n,\btheta)$. In general, MLE is \textcolor{black}{inclined} to be influenced by the outliers and deviations of covariates especially when the sample size is not sufficiently large. 
 It is important to note that the MLE serves 
 as the core for all the IPW, DR and BART methods in estimating the outcome and propensity score models. For this reason, various adjustments for confounding and    
  heterogeneous treatment effects have been developed in the literature. 

Compared to the existing causal inference methods, EFI works as a solver for the data-generating equation (as $\epsilon \downarrow 0$), providing a coherent way to address the confounding and heterogeneous treatment 
effects and resulting in faithful estimates for the model parameters and their uncertainty as well.  
 This example illustrates the performance of EFI in ATE estimation when confounders are present, while the examples in the next section showcase the performance of EFI in 
dealing with heterogeneous treatment effects via DNN modeling. Extensive 
comparisons with BART and other nonparametric modeling methods are also presented. 

In this example, we omit the estimation of the propensity score model. As discussed in Section \ref{discsection}, the proposed method can be extended by including an additional DNN to approximate the propensity score, enabling the use of inverse probability weighting for ATE estimation. However, the ATE estimation is not the focus of this work.

\section{Causal Inference for Individual Treatment Effects} 

This section demonstrates how EFI can be used to perform statistical inference of 
the predictive ITE for the data-generating model (\ref{dataGeq}). 
Let $\btheta_c$ denote the vector of parameters for modeling the function $c(\bx)$, 
let $\btheta_{\tau}$ denote the vector of parameters for modeling the function 
$\tau(\bx)$, and let $\btheta=\{\btheta_c,\btheta_{\tau},\log(\sigma)\}$ denote the whole set of parameters for the model (\ref{dataGeq}). 
We model the inverse function $\btheta=g(y,T,\bx,z)$ 
by a DNN. Also, we can model each of the functions  
$c(\bx)$ and $\tau(\bx)$ by a DNN if their functional forms 
are unknown. For convenience, we refer to the DNN for modeling $c(\bx)$ as `$c$-network' 
and that for modeling $\tau(\bx)$ as `$\tau$-network', and  $\btheta_c$ and $\btheta_{\tau}$ represent their weights, respectively.  
As mentioned previously, we can restrict the sizes of the $c$-network and $\tau$-network
to the order of $O(n^{\tilde{\zeta}})$ for some $0<\tilde{\zeta}<1$.

Note that in solving the data generating equations (\ref{dataGeq}), the proposed method involves two types of neural networks: 
one for modeling  causal effects and the other for 
approximating the inverse  function $\btheta=g(y,T,\bx,z)$.
While we still refer to the proposed method as `Double-NN',  it actually 
involves three DNNs. 


\subsection{ITE prediction intervals} \label{ITEPsect}


Assume the training set consists of $n_{train}$ subjects,  and the test set  
consists of $n_{test}$ subjects. The subjects in the test set can be grouped into 
three categories: (i) $\{(\bx_i,0,Y_i(1),Y_i^{obs}(0))$, $i\in \mathcal{I}_c\}$, where the responses under the control are observed; 
(ii) $\{(\bx_i,1,Y_i^{obs}(1),Y_i(0)): i \in \mathcal{I}_t\}$, where 
the responses under the treatment are observed; and (iii) 
$\{(\bx_i,T_i,Y_i(1),Y_i(0)): i \in \mathcal{I}_m\}$, where only covariates are observed. 
Here, we use $\mathcal{I}_c$, $\mathcal{I}_t$, and $\mathcal{I}_m$ to denote the index sets of
the subjects in the respective categories and, therefore, $\mathcal{I}_c\cup \mathcal{I}_t\cup \mathcal{I}_m=\{1,\dots,n_{test}\}$.
For the ITE of each subject in the test set, 
we can construct the prediction interval with a desired confidence level of $1-\alpha$ in the following procedure: 

\begin{itemize}
    \item[(i)] 
    {\it For subject $i\in \mathcal{I}_c$}: 
    At each iteration $k$ of Algorithm \ref{EFIalgorithm}, calculate  the prediction  $\hat{Y}_i^{(k)}(1)=\hat{c}^{(k)}(\bx_i)+ \hat{\tau}^{(k)}(\bx_i)+\hat{\sigma}^{(k)} Z_{new}^{(k,1)}$, where $ Z_{new}^{(k,1)}\sim N(0,1)$. Let $c_l(\bx_i,1)$ and $c_u(\bx_i,1)$ denote, respectively, the  
 $\frac{\alpha}{2}$- and $(1-\frac{\alpha}{2})$-quantiles of   $\{\hat{Y}_i^{(k)}(1): k=\mK+1, \mK+2,\ldots, \mK+M\}$
 collected over iterations. 
 Since $Y_i^{obs}(0)$ is observed,  $(c_l(\bx_i,1)-Y_i^{obs}(0),c_u(\bx_i,1)-Y_i^{obs}(0))$ forms a $(1-\alpha)$-prediction interval for the ITE $Y_i(1)-Y_i^{obs}(0)$.

    \item[(ii)] 
   {\it For subject $i\in \mathcal{I}_t$}: 
    At each iteration $k$ of Algorithm \ref{EFIalgorithm}, calculate  the prediction   $\hat{Y}_i^{(k)}(0)=\hat{c}^{(k)}(\bx_i)+\hat{\sigma}^{(k)} Z_{new}^{(k,2)}$, where $ Z_{new}^{(k,2)}\sim N(0,1)$. Let $c_l(\bx_i,0)$ and $c_u(\bx_i,0)$ denote, respectively, the  
 $\frac{\alpha}{2}$- and $(1-\frac{\alpha}{2})$-quantiles of   $\{\hat{Y}_i^{(k)}(0): k=\mK+1, \mK+2,\ldots, \mK+M\}$
 collected over iterations. Since $Y_i^{obs}(1)$ is observed,  $(Y_i^{obs}(1)-c_u(\bx_i,0), Y_i^{obs}(1)-c_l(\bx_i,1))$ forms a $(1-\alpha)$-prediction interval for the ITE $Y_i^{obs}(1)-Y_i(0)$. 
    

    \item[(iii)] 
  {\it  For subject $i\in \mathcal{I}_m$}: 
At each iteration $k$ of Algorithm \ref{EFIalgorithm}, calculate  the prediction   $\hat{Y}_i^{(k)}(1)-\hat{Y}_i^{(k)}(0)=\hat{\tau}^{(k)}(\bx_i)+\sqrt{2} \hat{\sigma}^{(k)} Z_{new}^{(k,3)}$, where $ Z_{new}^{(k,3)}\sim N(0,1)$. Let $c_l(\bx_i)$ and $c_u(\bx_i)$ denote, respectively, the  
 $\frac{\alpha}{2}$- and $(1-\frac{\alpha}{2})$-quantiles of   $\{
 \hat{Y}_i^{(k)}(1)-\hat{Y}_i^{(k)}(0): k=\mK+1, \mK+2,\ldots, \mK+M\}$
 collected over iterations. 
 Then $(c_l(\bx_i), c_u(\bx_i))$ forms a $(1-\alpha)$-prediction interval for the ITE $Y_i(1)-Y_i(0)$. 
    
\end{itemize}

\subsection{Simulation Study}



\paragraph{Example 1} Consider the data-generating equation  
\begin{equation} \label{dataGex1}
y_i=\mu+\bx_i^{\top} \bbeta+(\eta_0 +\eta(\bx_i)) T_i +\sigma z_i, \quad i=1,2,\ldots,n,
\end{equation}
where $\bx_i=(x_{i,1},x_{i,2})^{\top}$ with each element drawn independently from $Unif(0,1)$, 
$\mu=1$, $\bbeta=(1,1)^{\top}$, $\eta_0=1$, $\sigma=1$,  $z_i \sim N(0,1)$, and 
$\eta(\bx_i)=s(x_{i1})s(x_{i2})-E(s(x_{i1})s(x_{i2}))$. 
As in \cite{lei2021ite}, we set $s(a)=\frac{2}{1+exp(-12(a-0.5))}$, and generate 
 the treatment variable $T_i$ according to the  propensity score model:
\begin{equation} \label{propensityeq}
e(\bx_i)=\frac{1}{4}(1+\beta_{2,4}(x_{i,1})),
\end{equation}
where $\beta_{2,4}$ is the CDF of the beta distribution with parameters (2,4), ensuring $e(\bx_i) \in [0.25,0.5]$ and thereby sufficient overlap between the treatment and control groups.
In terms of equation (\ref{dataGeq}), we have $c(\bx_i)=\mu+\bx_i^{\top}\bbeta$ and $\tau(\bx_i)=\eta_0 +\eta(\bx_i)$. 
We generated 20 datasets from  the model (\ref{dataGex1}) independently, each consisting of $n_{train}=500$ training samples and $n_{test}=1000$ test samples.

\begin{table}[!h]
\caption{
Comparison of  Double-NN and CQR for inference of the predictive 
ITE for Example (\ref{dataGex1}), where the coverage and length of the prediction intervals were calculated by averaging over 20 datasets with the standard deviation given in the parentheses. 
}
\vspace{-0.15in}
\label{case1compar}
\begin{center}
\begin{adjustbox}{width=1.0\textwidth}
\begin{tabular}{ccccccccc} \toprule
     & \multicolumn{2}{c}{Case $\mathcal{I}_c$} & & \multicolumn{2}{c}{Case $\mathcal{I}_t$} 
     &  & \multicolumn{2}{c}{Case $\mathcal{I}_m$}\\ \cline{2-3} \cline{5-6} \cline{8-9}
 Method &  Coverage & Length & & Coverage & Length & Method & Coverage & Length \\ \midrule
  \multirow{2}{*}{Double-NN}  & 0.9549 & 4.2004 & &  0.9581 & 4.1812 & 
  \multirow{2}{*}{Double-NN}  &0.9583 & 5.6056\\
  & (0.0095) & (0.1567) & &  (0.0098)  & (0.1541) &   & (0.0103) & (0.2207)\\
  \multirow{2}{*}{CQR-BART}     &  0.9472 & 4.2702 & & 0.9533 & 4.4024 & \multirow{2}{*}{CQR(inexact)} & 0.9530 & 6.3244\\
 &  (0.0342) & (0.5225) & & (0.0341) & (0.8972) & & (0.0198) & (0.5426)\\
 \multirow{2}{*}{CQR-Boosting} & 0.9556 & 5.5199 & & 0.9548 & 4.4493& 
 \multirow{2}{*}{CQR(exact)}  & 1.0000 & 13.4005\\
  & (0.0294) & (0.5866) & & (0.0259) & (0.5097) &  & (0.0002) & (2.4936) \\
 \multirow{2}{*}{CQR-RF} &  0.9529 & 5.4609 & & 0.9652 & 4.6428 & \multirow{2}{*}{CQR(naive)} & 0.9998  & 12.8861\\ 
  &   (0.0233) & (0.5172) & & (0.0171) & (0.5408) &  & (0.0004) & (1.5275)\\ 
 \multirow{2}{*}{CQR-NN} &  0.9570 & 6.4072
 & & 0.9755 & 5.8125 & \\ 
  &  (0.0195) & (0.8087) & & (0.0199) &  (1.4332) &  & \\ 

  \bottomrule
\end{tabular}
\end{adjustbox}
\end{center}
\end{table}

For this example, we assume the functional form of $c(\bx)$ is known and model  
$\tau(\bx)$ by a DNN. The DNN has two hidden layers,  each consisting of 10 hidden neurons. The number of parameters of the DNN is $|\btheta_{\tau}|=151$, and the total dimension of $\btheta=(\mu, \eta_0,\bbeta,\btheta_{\tau}^{\top},\log(\sigma))^{\top}$ is 156 $(\approx n_{train}^{0.81})$, which falls into the class of large models. 


Refer to Section \ref{parasetting} of the supplement for parameter settings 
for the Double-NN method.  
For comparison, the conformal quantile 
regression (CQR) method \citep{Romano2019ConformalizedQR,lei2021ite} was applied to this example, where the outcome function was approximated using different machine learning methods, including 
BART \citep{Chipman2010BARTBA}, Boosting \citep{Schapire1990TheSO,Breiman1998ArcingC}, and random forest (RF) \citep{breiman2001random}, and neural network (NN).
Refer to Section \ref{CQR} of the supplement for a brief description of the CQR method. 
\textcolor{black}{For CQR-NN, we used a neural network of structure $(p+1)$-10-10-2 to model the outcome quantiles, where the extra input variable is for treatment and the two output neurons are for $(\alpha/2,1-\alpha/2)$-quantiles of the outcome
\citep{Romano2019ConformalizedQR}. Additionally,
we used a neural network of structure  $p$-10-10-1 to model the propensity score in order to compute weighted CQR as in \cite{lei2021ite}.}

\textcolor{black}{The other CQR methods were implemented using the R package {\it cfcausal} \citep{lei2021ite}.}
\textcolor{black}{For the case  $\mathcal{I}_m$, we considered CQR-BART only, given its relative  
superiority over other CQR methods in 
the cases $\mathcal{I}_c$ and $\mathcal{I}_t$.}

The results were summarized in Table 
 \ref{case1compar}. The comparison shows that the Double-NN method outperforms the CQR methods in both the coverage rate and length of the prediction intervals 
 under all the three cases $\mathcal{I}_c$, $\mathcal{I}_t$, and $\mathcal{I}_m$. Specifically, the prediction intervals resulting from the Double-NN method tend to be shorter, while their coverage rates tend to be closer to the nominal level.

\begin{figure}[!h]
    \centering
    \includegraphics[width=0.85\textwidth]{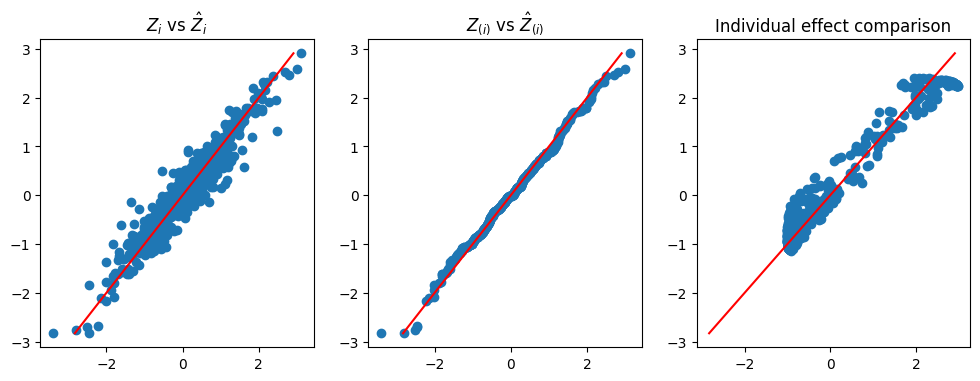}
    \caption{Demonstration of the Double-NN method for a dataset simulated from (\ref{dataGex1}): (left) scatter plot of $\hat{\bz}_i$ ($y$-axis) versus $\bz_i$ 
    ($x$-axis); (middle) Q-Q plot of $\hat{\bz}_i$  and $\bz_i$; 
    (right) scatter plot of $\tau(\bx_i)$ ($y$-axis) 
     versus $\hat{\tau}(\bx_i)$ ($x$-axis). 
     }
    \label{fig:case1}
\end{figure}

Figure \ref{fig:case1} demonstrates the rationale underlying the Double-NN method. 
The left scatter plot compares the imputed and true values of the latent 
variables for a dataset simulated from (\ref{dataGex1}), 
where the imputed values were collected at the last iteration of Algorithm \ref{EFIalgorithm}. 
The comparison reveals a close match between the imputed and true latent variable 
values, with the variability of the imputed values representing the source of uncertainty 
in the data-generating system. This variability in the latent variables can 
be propagated to $\btheta$ through the estimated inverse function $G(\cdot)$, 
leading to the uncertainty in parameters and, consequently, 
 the uncertainty in predictions.  
The middle scatter plot shows that the imputed latent variable values follows 
the standard Gaussian distribution, as expected. 
The right scatter plot compares the estimated and true values of the function 
$\tau(\bx_i)$, with the variability of the estimator representing 
its uncertainty. This plot further implies that the Double-NN method not only works for 
performing inference for the predictive ITE but also works for performing inference for CATE.

\paragraph{Example 2} 
 Consider the data-generating equation  
\begin{equation} \label{dataGex2}
y_i=c(\bx_i)+\tau(\bx_i) T_i +\sigma z_i, \quad i=1,2,\ldots,n,
\end{equation}
where $\bx_i=(x_{i,1},x_{i,2},\ldots,x_{i,5})^{\top}$ with each element drawn independently 
from $Unif(0,1)$, $\tau(\bx)$ and $T_i$ are generated as in Example 1 except that $\bx_i$ contains three extra false covariates,  
$c(\bx_i)=\frac{2x_{i,1}}{1+5x_{i,2}^2}$, $\sigma=1$, and $z_i \sim N(0,1)$. 
We simulated 20 datasets from this equation, each consisting of 
$n_{train}=1000$ training samples and $n_{test}=1000$ test samples. 

For this example, we modeled both $c(\bx)$ and $\tau(\bx)$ using DNNs. Each of the DNNs 
consists of two hidden layers, each layer consisting of 10 hidden neurons. 
\textcolor{black}{In consequence, $\btheta=(\btheta_c^{\top},\btheta_{\tau}^{\top},\log(\sigma))^{\top}$ has a total 
dimension of 363 $(\approx n_{train}^{0.85})$.} 


Similar to Example 1, we also applied the CQR methods \citep{lei2021ite} to 
this example for comparison. 
\textcolor{black}{The CQR methods were implemented as described in Example 1.}
The results were summarized in Table \ref{case2compar},
which indicates again that the Double-NN method outperforms  the CQR methods
under all the three cases $\mathcal{I}_c$, $\mathcal{I}_t$, and $\mathcal{I}_m$. 
 The prediction intervals resulting from the Double-NN method tend to be shorter, 
 while their coverage rates tend to be closer to the nominal level.

 Similar to Figure \ref{fig:case1},  Figure \ref{fig:case2} demonstrates the rationale
 underlying the Double-NN method, as well as its capability for CATE inference.
 The left plot demonstrates the variability 
 embedded in the latent variables of the data-generating system. The middle-left 
 plot shows that the imputed latent variables are distributed according to the 
 standard Gaussian distribution, as expected. The right two plots display the 
 estimates of $c(\bx_i)$ and $\tau(\bx_i)$, respectively. 
 Once again, we note that the variations of the estimates of $c(\bx_i)$ and $\tau(\bx_i)$, 
 as depicted in their respective scatter plots, reflect their uncertainty according to 
 the theory of EFI.

\begin{table}[!h]
\caption{
Comparison of Double-NN and CQR for inference of the predictive 
ITE for Example (\ref{dataGex2}), where the coverage and length of the prediction intervals were calculated by averaging over 20 datasets \textcolor{black}{with the standard deviation given in the parentheses.}}
\vspace{-0.15in}
\label{case2compar}
\begin{center}
\begin{adjustbox}{width=1.0\textwidth}
\begin{tabular}{ccccccccc} \toprule
     & \multicolumn{2}{c}{Case $\mathcal{I}_c$} & & \multicolumn{2}{c}{Case $\mathcal{I}_t$} 
     &  & \multicolumn{2}{c}{Case $\mathcal{I}_m$}\\ \cline{2-3} \cline{5-6} \cline{8-9}
 Method &  Coverage & Length & & Coverage & Length & Method & Coverage & Length \\ \midrule
 \multirow{2}{*}{Double-NN}  & 0.9519 & 4.2727 & & 0.9645 & 4.246 & \multirow{2}{*}{Double-NN}  & 0.9604 & 6.0079 \\
 & (0.0111) & (0.0101) & & (0.0069) & (0.0967) &  & (0.0946) &  (0.1363)\\
  \multirow{2}{*}{CQR-BART} & 0.9584 & 4.3586 & &  0.9545 & 4.2658 & \multirow{2}{*}{CQR(inexact)} & 0.9386 & 6.0492 \\
    & (0.0220) & (0.4392) & & (0.0230) & (0.4586) & & (0.0270) & (0.6062)\\
  \multirow{2}{*}{CQR-Boosting} & 0.9536 & 4.9942 &&  0.9572 & 4.4393 & \multirow{2}{*}{CQR(exact)} & 0.9996 & 12.1252\\
    &  (0.0175) & (0.4044) &&  (0.0194) & (0.4213) &  & (0.0007) & (1.1022) \\
  \multirow{2}{*}{CQR-RF} &  0.9563 & 5.6658 &&  0.9580 & 4.4399 & \multirow{2}{*}{CQR(naive)} &  0.9988 & 11.5566 \\
  &  (0.0198) & (0.4777) &&  (0.0232) & (0.5044) &  &  (0.0014) & (0.9309) \\
    \multirow{2}{*}{CQR-NN} &  0.9595 & 4.6748 &&  0.9452 & 3.9579 & \\
  &  (0.0165) & (0.6015) &&  (0.0185) &  (0.4301) & \\
  \bottomrule
\end{tabular}
\end{adjustbox}
\end{center}
\end{table}

\begin{figure}[!h]
    \centering
    \includegraphics[width=0.9\textwidth]{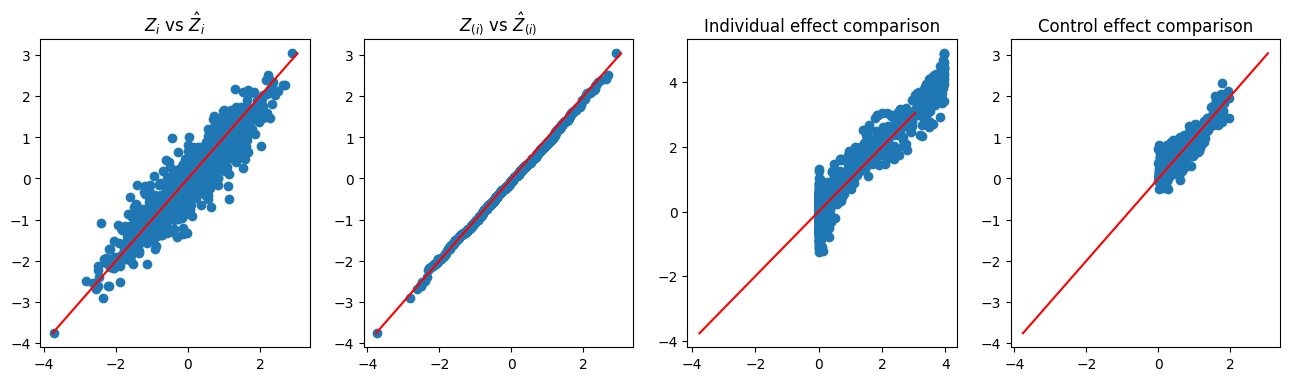}
    \caption{ Demonstration of the Double-NN method for a dataset simulated from (\ref{dataGex2}): (left) scatter plot of $\hat{\bz}_i$ ($y$-axis) versus $\bz_i$ 
    ($x$-axis); (middle-left) Q-Q plot of $\hat{\bz}_i$  and $\bz_i$; 
    (middle-right) scatter plot of $c(\bx_i)$ ($y$-axis) 
     versus $\hat{c}(\bx_i)$ ($x$-axis); 
     (right) scatter plot of $\tau(\bx_i)$ ($y$-axis) 
     versus $\hat{\tau}(\bx_i)$ ($x$-axis). } 
    \label{fig:case2}
\end{figure}

\vspace{-0.2in}
\paragraph{Precision in Estimation of Heterogeneous Effects}

As demonstrated in Figure \ref{fig:case1} and Figure \ref{fig:case2}, the Double-NN method
can also be used for inference of CATE. The performance in CATE estimation 
is often measured using the expected Precision 
in Estimation of Heterogeneous Effects (PEHE), which is defined as:
\[
\epsilon_{PEHE}=\int_{\mathcal{X}}(\hat{\tau}(\bx)-\tau(\bx))^2 dF(\bx),
\]
where $F(\bx)$ denotes the distribution function of the covariates $\bX$. 
As we can see, $\epsilon_{PEHE}$ summarizes the precision of the CATE over the entire 
sample space $\mX$ \citep{Hill2011bart,shalit2017pehe,caron2022itereview}.  
In practice, since we only observe the treatment effect on the treatment group, 
the target of interest is generally only for the treatment group, i.e  $\epsilon_{PEHE}^{(T)}=\int_{\mathcal{X}}(\hat{\tau}(\bx)- \tau(\bx))^2 dF_T(\bx)$, where $F_T(\bx)$ denotes the distribution function of the covariates in the treatment group. 
We estimated $\epsilon_{PEHE}^{(T)}$ by $\hat{\epsilon}_{PEHE}^{(T)}=\frac{1}{n_t}\sum_{i\in I_{t}}(\hat{\tau}(\bx_i)-\tau(\bx_i))^2$. 
For the Double-NN method, we set $\hat{\tau}(\bx_i)=\frac{1}{M}\sum_{k=\mK+1}^{\mK+M}\hat{\tau}^{(k)}(\bx_i)$, where $M$ denotes the number of estimates of $\tau(\bx_i)$ 
collected in a run of Algorithm \ref{EFIalgorithm}. 

For comparison, the existing CATE estimation methods, including single-learner (S-learner), 
two-learner (T-learner), and X-learner \citep{Knzel2019MetalearnersFE}, 
have been applied to the datasets generated above, where the RF and BART are used 
as the base learners. In the S-learner, a single outcome function is estimated using a base learner 
with all available covariates,
where the treatment indicator is treated as a covariate, and then estimate CATE by 
$\hat{\tau}_S=\hat{\mu}(\bx,1)-\hat{\mu}(\bx,0)$,
where $\hat{\mu}(\bx,t)$ denotes the outcome function estimator. 
The T-learner estimates the outcome functions using a base learner separately for the units under the control and those under the treatment, and then estimate CATE by 
$\hat{\tau}_T(\bx)=\hat{\mu}_1(\bx)-\hat{\mu}_0(\bx)$,
where $\hat{\mu}_t(\bx)$ denote the outcome function estimator for the assignment group 
$t \in \{0,1\}$. The X-learner builds on the T-learner; it uses the observed outcomes 
to estimate the unobserved ITEs, and then estimate the CATE in another step as if 
the ITEs were observed. \textcolor{black}{Refer to \cite{Knzel2019MetalearnersFE} and \cite{caron2022itereview} for the detail. 
We implemented the S-learner, T-learner, and X-leaner using the package downloaded at \url{https://github.com/albicaron/EstITE}.}  

Table \ref{tab:PEHE} compares the values of $\hat{\epsilon}_{PEHE}^{(T)}$ 
resulting from the Double-NN, S-learners, T-learners, and X-learners. 
 for the models (\ref{dataGex1}) and (\ref{dataGex2}).
The comparison shows that the Double-NN method outperforms the existing ones in achieving 
consistent CATE estimates over different covariate values.  This is remarkable!
As explained in Section \ref{illusexample}, we would attribute this performance of the Double-NN method to its fidelity in parameter estimation \citep{LiangKS2024EFI}. 
Compared to the MLE method, which serves as the prototype for the base learners, 
the Double-NN method is forced to be more robust to covariates due to 
added penalty term $U_n(\bY_n,\bX_n,\bZ_n, \bw_n)/\epsilon$.

\begin{table}[!h]
\caption{Comparison of Double-NN and other methods in $\epsilon_{PEHE}^{(T)}$, 
where each of the mean and standard deviations  was calculated based on 20 
datasets generated from  (\ref{dataGex1}) or   (\ref{dataGex2}).}
\label{tab:PEHE}
\vspace{-0.15in}
\begin{center}
\begin{tabular}{cccccc} \toprule
   &\multicolumn{2}{c}{Model (\ref{dataGex1})} &  &\multicolumn{2}{c}{Model (\ref{dataGex2})} \\ \cline{2-3}  \cline{5-6} 
  Method  & Training & Test & & Training & Test\\ 
  \midrule
 S-RF  &     0.3769 $\pm$ 0.0170  & 0.3660 $\pm$ 0.0188    &&  0.3722 $\pm$ 0.0074    & 0.3377 $\pm$ 0.0100\\ 
 S-BART  & 0.4233 $\pm$ 0.0156 &   0.4344 $\pm$ 0.0149   &&   0.3371 $\pm$ 0.0099 &   0.3418 $\pm$ 0.0102 \\ 
\midrule 
 T-RF  &  0.4545 $\pm$ 0.0114  &  0.4198 $\pm$ 0.0118     &&  0.4095 $\pm$ 0.0064    &  0.3488 $\pm$ 0.0084\\ 
 T-BART  & 0.4190 $\pm$ 0.0139   &  0.4236 $\pm$ 0.0127    &&   0.4308 $\pm$ 0.0092  &    0.4298 $\pm$ 0.0093 \\ 
\midrule 
 X-RF  &    0.3416 $\pm$ 0.0153   &  0.3451 $\pm$ 0.0162      &&  0.2761 $\pm$ 0.0106 & 0.2789 $\pm$ 0.0106 \\ 
 X-BART  &   0.3863 $\pm$ 0.0137  &   0.3972 $\pm$ 0.0128  & &   0.3853 $\pm$ 0.0102 & 0.3862 $\pm$ 0.0097  \\ 
 \midrule 
 Double-NN  & 0.2962 $\pm$ 0.0167 & 0.3139 $\pm$ 0.0178 & & 0.3788 $\pm$  0.0105 & 0.3899 $\pm$  0.0110\\ 
\bottomrule
\end{tabular}
\end{center}
\vspace{-0.1in}
\end{table}

\subsection{Real Data Analysis}
\subsubsection{Lalonde}

The `LaLonde' data is a well-known dataset used in causal inference to evaluate the effectiveness of a job training program in improving the employment prospects of participants. We used the dataset given in the package ``twang'' \citep{twang2021} among various versions. The dataset includes earning data in 1978 on 614 individuals, with 185 receiving job training and 429 in the control group. \textcolor{black}{There are 8 covariates including  various demographic, educational, and employment-related variables.}
While the LaLonde dataset has been widely used for ATE estimation, we use it to illustrate the Double-NN method  
for constructing ITE prediction intervals. 

To evaluate the performance of different methods, we randomly split the LaLonde dataset into a training set and a test set. The training set, denoted by $\mathcal{D}_{train}$, 
consists of $n_{train}=600$ observations; while the test set, denoted by $\mathcal{D}_{test}$, consists of $n_{test}=14$ observations. We trained the Double-NN on $\mathcal{D}_{train}$ and constructed  prediction intervals for each subject in $\mathcal{D}_{test}$ with a confidence level of  $1-\alpha=0.5$. For the Double-NN, we modeled both $c(\bx)$ and $\tau(\bx)$ 
using DNNs. Each of the DNNs consists of two hidden layers, with each layer consisting of 10 hidden neurons. \textcolor{black}{In consequence, 
$\btheta=(\btheta_c^{\top},\btheta_{\tau}^{\top},\log(\sigma))^{\top}$
has a dimension of 423 $(\approx n_{train}^{0.95})$, a challenging task 
for uncertainty quantification of the model.} 

Figure \ref{fig:lalonde} displays the constructed ITE prediction intervals for the test data, comparing the proposed method to the CQR method \citep{lei2021ite}.  
The comparison shows that the prediction intervals resulting from the proposed method are shorter than those from the CQR method, while the centers of those intervals are similar. This suggests that the proposed method is able to estimate the ITEs with a higher degree of precision. 

\begin{figure}
    \centering
    \includegraphics[width=0.8\textwidth]{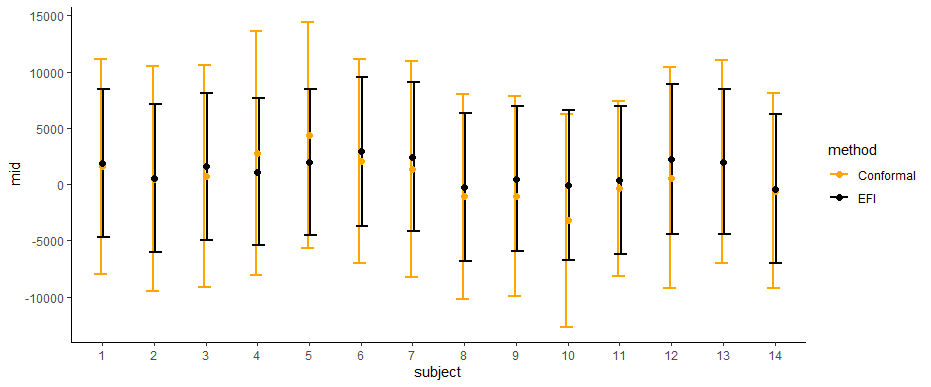}
    \caption{Comparison of prediction intervals resulting from Double-NN (labeled as EFI) and CQR (labeled as conformal) for the subjects in the test set of Lalonde.}
    \label{fig:lalonde}
\end{figure}

\subsubsection{NLSM}


This subsection conducts an analysis on
the `National Study of Learning Mindsets' (NLSM) dataset used in the 2018 Atlantic Causal Inference Conference workshop \citep{yeager2019nlsm,Carvalho2019AssessingTE}. 
\textcolor{black}{NSLM records the results of a randomized evaluation for a ``nudge-like'' intervention designed to instill students with a growth mindset.}
The dataset is available at \url{https://github.com/grf-labs/grf/tree/master/experiments/acic18}, \textcolor{black}{which includes 10,391 students from 76 schools, 
with four student-level covariates and six school-level students. 
After factoring the categorical variables, the dimension of covariates $\bx$ increases to 29.}

Due to unavailability of the true treatment effect values, we performed an exploratory analysis
as in \cite{lei2021ite}. In order to construct prediction intervals for the ITE, we split the dataset into two sets: $\mathcal{D}_{train}$ and $\mathcal{D}_{test}$. The former has a sample size of $n_{train}=5200$, and 
the latter has a sample size of $n_{test}=5191$. 
For the Double-DNN method,  we used DNNs to 
 model the functions $\tau(\bx)$ and $c(\bx)$. 
\textcolor{black}{Each DNN consists of two hidden layers, with each hidden layer 
 consisting of 10 hidden neurons. Therefore,  
the dimension of $\btheta=(\btheta_c^{\top},\btheta_{\tau}^{\top},\log(\sigma))^{\top}$ is 843 $(\approx n_{train}^{0.79})$.}

The Double-DNN was trained on $\mathcal{D}_{train}$ and the prediction intervals were constructed on $\mathcal{D}_{test}$, which corresponds to case (iii) described in Section \ref{ITEPsect}. This process was repeated 20 times. 
For comparison, the CQR method \citep{lei2021ite} was also applied to this example.

\begin{figure}[htbp]
\begin{center}
\begin{tabular}{c} 
\includegraphics[height=2.0in,width=0.55\textwidth]{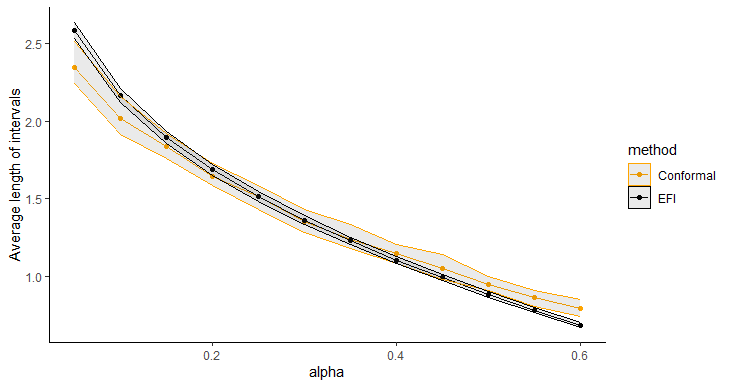}  
\end{tabular}
\end{center}
\caption{Comparison of the average length of intervals obtained by the Double-NN (labeled as EFI) and CQR (labeled as conformal) for the NLSM data.}  
\label{fig:nlsm1}
\end{figure}

Figure \ref{fig:nlsm1} 
displays the average length of prediction intervals, obtained by Double-DNN and CQR, 
as a function of $\alpha$, with
the upper and lower envelops being respectively the $95\%$ and $5\%$ 
quantiles across 20 runs. For this example, we implemented CQR using 
the ``inexact'' method, and therefore, its interval lengths tend to be short with approximate validity. 
However, as shown in Figure \ref{fig:nlsm1}, the prediction intervals resulting from the Double-NN method tend  
to be even shorter than those from CQR as $\alpha$ increases. 
Figure \ref{fig:nlsm2} (a) compares the fractions of the prediction intervals, obtained by Double-NN and CQR, that cover positive values only. While Figure \ref{fig:nlsm2} (b) compares the fractions of the prediction intervals that cover negative values only.  
In summary, the Double-NN can provide more accurate predictions for the ITE than CQR for this example. 
\textcolor{black}{Specifically, the Double-NN identified fewer subjects with significant ITEs than the CQR, as implied by Figure \ref{fig:nlsm2} (a) and (b); while each has  
a narrow prediction interval, as implied by Figure \ref{fig:nlsm1}.} 



\begin{figure}[htbp]
\begin{center}
\begin{tabular}{cc} 
    (a) & (b)  \\
        \includegraphics[height=2.0in,width=0.475\textwidth]{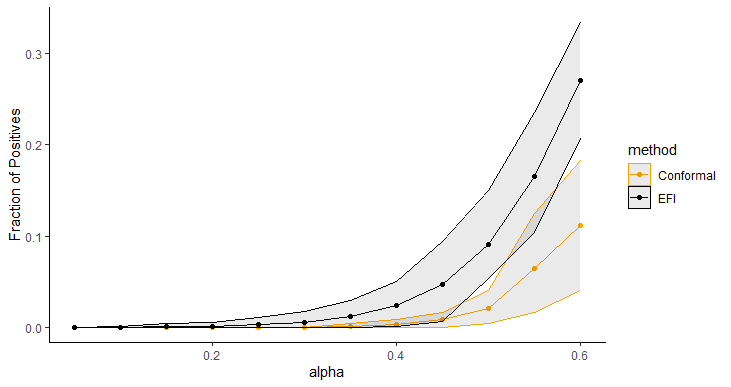}  &
         \includegraphics[height=2.0in,width=0.475\textwidth]{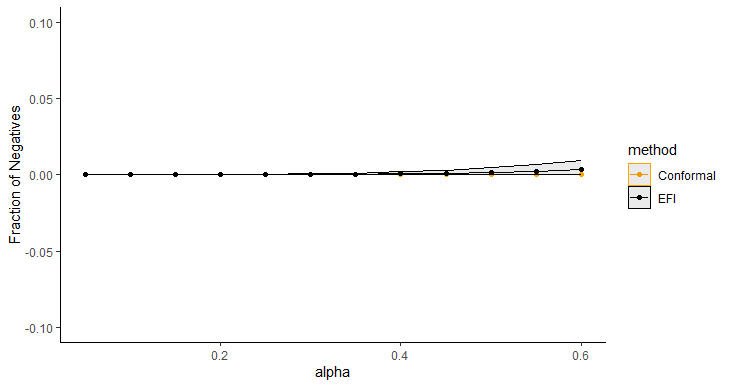} \\ 
\end{tabular}
\end{center}
\caption{Fractions of the intervals obtained by Double-NN (labeled as EFI) and CQR (labeled as conformal) with (a) positive lower bounds and (b) negative upper bounds, where the upper and lower envelops are respectively  $95\%$ and $5\%$ quantiles across 20 runs.}  
\label{fig:nlsm2}
\end{figure}

\section{Discussion}  \label{discsection}

 This paper extends EFI to statistical inference for large statistical  models and applies the proposed Double-NN method to treatment effect estimation. The numerical results demonstrate that the Double-NN method significantly outperforms the existing CQR method in ITE prediction. As mentioned in the paper, we attribute the superior performance of the Double-NN method to its fidelity in parameter estimation. Due to the universal approximation ability of deep neural networks, the Double-NN method is generally applicable for causal effect estimation. 
 
 From the perspective of statistical inference, this paper advances the theory and methodology for making inference of large statistical models, allowing the model size to increase with the sample size $n$ at a rate of $O(n^{\zeta})$ for any exponent $0 \leq \zeta<1$. 
 In particular, the Double-NN method provides a rigorous approach for quantifying the uncertainty of deep neural networks. In this paper, we have tested the performance of the Double-NN method on numerical examples with the exponent ranging $0.79 \leq \zeta \leq 0.95$, which all falls into the class of large models.

The Double-NN method can be further extended toward a general 
nonparametric approach for causal inference. Specifically, we can include an additional neural network to approximate the propensity score, enabling the outcome and propensity score functions 
to be simultaneously estimated.  
  This extension will enable the use of inverse probability weighting methods to further improve ATE estimation, especially in the scenario where the 
  covariate distributions in the treatment and control groups 
  are imbalanced \citep{shalit2017pehe,hahn2020nonpar3}. 
  From the perspective of EFI, this just corresponds to making inference 
  for a different $b(\btheta)$ function.  
  Similarly, for inference of ITE, a different $b(\btheta)$ function, including those adjusted with propensity scores, can also be used.
 The key advantage of EFI is its ability to automatically quantify the uncertainty of these functions as prescribed in (\ref{EFDeq}), even when the functions are highly complex.
  

Regarding the size of large models, our theory does not
preclude applications to large-scale DNNs with millions or even billions of parameters, as supported by the neural scaling law.
 As mentioned previously,  \cite{Hestness2017DeepLS}  investigated the relationship between the DNN model size and the dataset size: they discovered
 a sub-linear scaling law of $dim(\btheta) \prec n$ 
  across various model architectures in
  machine learning applications, including machine translation, language modeling, image processing, and speech recognition. Their findings suggest that Theorem \ref{thm:largemodel}
 remains valid for large-scale DNNs
 by choosing an appropriate growth rate for their depth.




  In practice, we often encounter small-$n$-large-$p$ problems. For such a problem, we need to deal with a model of dimension $dim(\btheta)  \succeq n$, which is often termed as 
  an over-parameterized model.  A further 
  extension of EFI for over-parameterized models is possible by imposing an appropriate sparsity constraint on $\btheta$. 
  How to make post-selection inference with EFI for the over-parameterized models will be studied in future work.

\textcolor{black}{Finally, we note that a recent work by \citet{williams2023modelfreeGFI} demonstrates how conformal prediction sets arise from a  generalized fiducial distribution.
 Given the inherent connections between GFI and EFI, we believe that the results established in \citet{williams2023modelfreeGFI} should also apply to EFI. In particular, EFI follows the same switching principle as GFI \citep{hannig2016gfi}, which infers the uncertainty of the model parameters from the distribution of unobserved random errors.
 Further research on EFI from this perspective is of great interest, as it could potentially alleviate EFI's reliance on assumptions about the underlying data distribution in prediction uncertainty quantification.
}

\section*{Availability} 

The code that implements the Double NN method can be found at \url{https://github.com/sehwankimstat/DoubleNN}.

\section*{Acknowledgments} 
  
Liang's research is supported in part by the NSF grants DMS-2015498 and DMS-2210819, and the NIH grant R01-GM152717. The authors thank the editor, associate editor, and referee for their constructive comments, which 
have led to significant improvement of this paper.


\newpage

\appendix

{\Large \bf Appendix: Supplement for ``Extended Fiducial Inference for Individual Treatment Effects via
Deep Neural Networks ''} 

\vspace{0.1in}

\setcounter{section}{0}
\renewcommand{\thesection}{$\S$\arabic{section}}
\setcounter{table}{0}
\renewcommand{\thetable}{S\arabic{table}}
\setcounter{figure}{0}
\renewcommand{\thefigure}{S\arabic{figure}}
\setcounter{equation}{0}
\renewcommand{\theequation}{S\arabic{equation}}
\setcounter{lemma}{0}
\renewcommand{\thelemma}{S\arabic{lemma}}
\setcounter{theorem}{0}
\renewcommand{\thetheorem}{S\arabic{theorem}}
\setcounter{remark}{0}
\renewcommand{\theremark}{S\arabic{remark}}

\bigskip
 This material is organized as follows. Section \ref{Proofsection} provides 
 a proof for Theorem \ref{thm:largemodel}. 
 Section \ref{CQR} provides a  brief description for the CQR method. 
 Section \ref{parasetting} provides parameter settings for the experiments 
 reported in the main text and this supplement. 

\section{Theoretical Proofs} \label{Proofsection}

To prove the validity of the proposed method, 
it is sufficient to prove that $\hat{g}(\cdot)$ constitutes a consistent estimator of $\btheta$, building on the theory developed in \cite{LiangKS2024EFI}.
To ensure the self-contained nature of this paper, we provide a concise overview of the theory presented in \cite{LiangKS2024EFI} in Section \ref{Sect:outline} of this supplement. Subsequently, our study will center on establishing the consistency of $\hat{g}(\cdot)$.

\subsection{Outline of the Proof} \label{Sect:outline}
First of all, we note that the theoretical study is conducted under the assumption 
that the EFI network has been correctly specified such that  a sparse EFI network  $\tilde{\bw}_n^*$ exists, from which the complete data $(\bX_n,\bY_n,\bZ_n^*)$ can be generated, 
where $\bZ_n^*$ represents the values of the latent variables realized in the observed samples. Specifically,  we assume $\bZ_n^* \sim \pi_{\epsilon}(\bZ|\bX_n,\bY_n,\tilde{\bw}_n^*)$ holds as $\epsilon \downarrow 0$.  
 

For the EFI network,  we define 
 \begin{equation} \label{Q1eq}
 \widehat{\mG}(\bw_n|\tilde{\bw}_n^*):=\frac{1}{n} \log\pi_{\epsilon}(\bY_n,\bZ_n^*|\bX_n,\bw_n)+\frac{1}{n} \log \pi(\bw_n). 
 \end{equation}
 Therefore, 
 \[
 \hat{\bw}_n^*:=\arg\max_{\bw_n\in \mathcal{W}_n} \widehat{\mG}(\bw_n|\tilde{\bw}_n^*),
 \]
 is the global maximizer of the posterior $\pi_{\epsilon}(\bw_n|\bX_n,\bY_n,\bZ_n^*)$.
 Also, we define 
 \begin{equation} \label{Q2eq}
 \begin{split}
 \widetilde{\mG}(\bw_n|\tilde{\bw}_n^*)& :=\frac{1}{n}\int \log \pi_{\epsilon}(\bY_n,\bZ_n^*|\bX_n,\bw_n) d\pi_{\epsilon}(\bZ_n^*|\bX_n,\bY_n,\tilde{\bw}_n^*) +\frac{1}{n} \log\pi(\bw_n). \\ 
    &=  \frac{1}{n} \Big\{ \log\pi(\bw_n|\bX_n,\bY_n) - \int \log \frac{\pi(\bZ_n^*|\bX_n,\bY_n,\tilde{\bw}_n^*)}{ \pi(\bZ_n^*|\bX_n,\bY_n,\bw_n)} d \pi(\bZ_n^*|\bX_n,\bY_n,\tilde{\bw}_n^*) \\
    & + \int \log \pi(\bZ_n^*|\bX_n,\bY_n,\tilde{\bw}_n^*) d \pi(\bZ_n^*|\bX_n,\bY_n,\tilde{\bw}_n^*)+c \Big\},\\
 \end{split}
 \end{equation}
   where $c=\log\int_{\mathcal{W}_n} \pi(\bY_n|\bX_n,\bw_n) \pi(\bw_n)d\bw_n$ is the log-normalizing constant of the posterior $\pi(\bw_n|\bX_n,\bY_n)$. Note that in the above derivation, $\bX_n$ can be ignored  for simplicity since it is constant.
   For simplicity of notation, we let 
   \[
 D_{KL}(\bw_n)= \int \log \frac{\pi(\bZ_n^*|\bX_n,\bY_n,\tilde{\bw}_n^*)}{ \pi(\bZ_n^*|\bX_n,\bY_n,\bw_n)} d \pi(\bZ_n^*|\bX_n,\bY_n,\tilde{\bw}_n^*),
 \]
 be the Kullback-Leibler divergence between $\pi(\bZ_n^*|\bX_n,\bY_n,\tilde{\bw}_n^*)$ and $\pi(\bZ_n^*|\bX_n,\bY_n,\bw_n)$. 
Regarding the EFI network, we make the following assumption: 

\begin{assumption} \label{ass1} The EFI network satisfies the conditions:
 \begin{itemize}
     \item[(i)] The parameter space $\mathcal{W}_n$ (of $\bw_n$) is convex and compact. 
     \item[(ii)] $\mathbb{E} (\log\pi_{\epsilon}(Y,Z|X,\bw_n))^2 < \infty$ for any $\bw_n\in \mathcal{W}_n$, where $(X,Y,Z)$ denotes a generic sample as those in $(\bY_n,\bZ_n,\bX_n)$.
 \end{itemize}
 \end{assumption}

Define $Q^*(\bw_n)= \mathbb{E}(\log \pi_{\epsilon}(Y,Z|X,\bw_n))+ \frac{1}{n} \log\pi(\bw_n)$.
By Assumption \ref{ass1} and the weak law of large numbers, 
\begin{equation}\label{eq:sameloss2}
    \frac{1}{n}\log\pi_{\epsilon}(\bw_n|\bX_n,\bY_n,\bZ_n)-Q^*(\bw_n)\overset{p}{\rightarrow} 0,
\end{equation}
holds uniformly over the parameter space $\mathcal{W}_n$, where $\stackrel{p}{\to}$ denotes 
convergence in probability. Further, we make the following assumption on $Q^*(\bw_n)$, which is essentially an identifiability condition of $\hat{\bw}_n^*$.

\begin{assumption}\label{ass2}
(i) $Q^*(\bw_n)$ is continuous in $\bw_n$ and is uniquely maximized
at some point $\bw_n^{\diamond}$; 
(ii) for any $\epsilon>0$, the value $sup_{\bw_n\in\mathcal{W}_n \backslash B(\epsilon)}Q^*(\bw_n)$ exists, where $B(\epsilon)=\{\bw_n:\|\bw_n-\bw_n^{\diamond}\|<\epsilon\}$, and $\delta:=Q^*(\bw_n^{\diamond})-\sup_{\bw_n\in\mathcal{W}_n \backslash B(\epsilon)}Q^*(\bw_n)>0$.
\end{assumption}
Assumption \ref{ass2} restricts the shape of $Q^*(\bw_n)$ around the global maximizer, which cannot be discontinuous or too flat.  
Given nonidentifiability of the neural network model, see e.g. \cite{SunSLiang2021}, we have implicitly assumed that each $\bw_n$  is unique up to loss-invariant transformations, e.g., reordering the hidden neurons within the same hidden layer or simultaneously altering the signs of certain weights and biases. The same assumption has often been used in theoretical studies of neural networks, see e.g. \cite{Liang2018BNN}.

On the other hand,
by Theorem 1 of \cite{liang2018imputation},  
we have 
 \begin{equation} \label{QQeq}
 \sup_{\bw_n\in \mathcal{W}_n}\left| \widehat{\mG}(\bw_n|\tilde{\bw}_n^*)-\widetilde{\mG}(\bw_n|\tilde{\bw}_n^*) \right| \stackrel{p}{\to} 0, \quad \mbox{as $n\to \infty$},
 \end{equation} 
 under some regularity conditions. Under Assumptions \ref{ass1}-\ref{ass2},
 \cite{LiangKS2024EFI} proved the following lemma: 
  
 \begin{lemma} \label{lemma:equivalent} (Lemma 4.1; \cite{LiangKS2024EFI}) Suppose Assumptions \ref{ass1}-\ref{ass2}   hold, and the joint likelihood function 
 $\pi(\bY_n,\bZ_n|\bX_n,\bw_n)$ is continuous in $\bw_n$. 
 If $\hat{\bw}_n^*$ is unique, then $\bw_n^*$ that 
 maximizes $\pi(\bw_n|\bX_n,\bY_n)$ as well as minimizes 
$D_{KL}(\bw_n)$ is unique and, subsequently, 
$\|\hat{\bw}_n^*-\bw_n^*\| \stackrel{p}{\to} 0$ holds as $n \to \infty$.
 \end{lemma}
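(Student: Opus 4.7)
The plan is to apply a standard M-estimator (argmax) consistency argument, leveraging the uniform convergence stated in (\ref{QQeq}) together with Assumption \ref{ass2}. The key observation is that $\hat\bw_n^*$ is the maximizer of $\widehat{\mG}(\cdot\mid\tilde{\bw}_n^*)$, while (\ref{QQeq}) identifies $\widetilde{\mG}(\cdot\mid\tilde{\bw}_n^*)$ as its uniform in-probability limit; hence we first pin down the maximizer of $\widetilde{\mG}$ and then transfer the convergence.

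First I would identify $\bw_n^*$ using the decomposition in (\ref{Q2eq}). Maximizing $\widetilde{\mG}(\bw_n\mid\tilde{\bw}_n^*)$ over $\bw_n\in\mathcal{W}_n$ is equivalent to maximizing $\log\pi(\bw_n\mid\bX_n,\bY_n) - D_{KL}(\bw_n)$, since the remaining terms in (\ref{Q2eq}) do not depend on $\bw_n$. The Kullback--Leibler divergence $D_{KL}(\bw_n)$ is nonnegative and attains the value $0$ at $\bw_n=\tilde{\bw}_n^*$ (up to loss-invariant transformations) by Gibbs' inequality. Under the correct specification of the EFI network, the observed-data posterior $\pi(\bw_n\mid\bX_n,\bY_n)$ also peaks at $\tilde{\bw}_n^*$. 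Hence the two terms are simultaneously optimized at the same point, and the maximizer $\bw_n^*$ of $\widetilde{\mG}$ coincides with the joint maximizer of $\pi(\bw_n\mid\bX_n,\bY_n)$ and minimizer of $D_{KL}(\bw_n)$ described in the lemma. Its uniqueness (modulo the same equivalence under which $\hat{\bw}_n^*$ is assumed unique) follows from the continuity of $\pi(\bY_n,\bZ_n\mid\bX_n,\bw_n)$ together with the identifiability content of Assumption \ref{ass2}.

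Second, I would invoke the argmax theorem. By (\ref{QQeq}), $\sup_{\bw_n\in\mathcal{W}_n}\bigl|\widehat{\mG}(\bw_n\mid\tilde{\bw}_n^*)-\widetilde{\mG}(\bw_n\mid\tilde{\bw}_n^*)\bigr|\overset{p}{\to}0$. Combined with compactness of $\mathcal{W}_n$ (Assumption \ref{ass1}(i)), continuity of the joint likelihood in $\bw_n$, and the well-separation condition inherited from Assumption \ref{ass2}(ii) transferred to $\widetilde{\mG}$, namely $\sup_{\bw_n\notin B(\bw_n^*,\varepsilon)}\widetilde{\mG}(\bw_n)<\widetilde{\mG}(\bw_n^*)$ for every $\varepsilon>0$, the standard M-estimator consistency result (e.g., van der Vaart's Theorem 5.7 in \emph{Asymptotic Statistics}) delivers $\|\hat{\bw}_n^*-\bw_n^*\|\overset{p}{\to}0$.

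The main obstacle will be justifying that the maximizer of the \emph{sum} $\log\pi(\bw_n\mid\bX_n,\bY_n)-D_{KL}(\bw_n)$ coincides with the joint optimum of the two terms \emph{individually}. This is not automatic for arbitrary functions and must exploit the correct-specification hypothesis that $(\bX_n,\bY_n,\bZ_n^*)$ is generated from $\tilde{\bw}_n^*$. A secondary technical subtlety arises from nonidentifiability of neural network parameterizations under loss-invariant transformations (permutations of hidden units, sign flips within a layer); uniqueness must therefore be interpreted modulo this equivalence, matching the convention already in force for $\hat{\bw}_n^*$. Verifying the well-separation condition for $\widetilde{\mG}$ in a neighborhood of the equivalence class of $\tilde{\bw}_n^*$, rather than merely at a single representative, is the most delicate step of the argument.
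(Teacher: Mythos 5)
Your proposal is correct and follows essentially the route the paper intends: it uses the decomposition in (\ref{Q2eq}) to identify $\bw_n^*$ as the joint optimizer of the posterior and the KL term, and then transfers the uniform convergence (\ref{QQeq}) through an argmax-consistency argument under the identifiability and well-separation content of Assumptions \ref{ass1}--\ref{ass2}. Note that the paper itself does not reprove this lemma (it is imported from \cite{LiangKS2024EFI}), but the surrounding machinery --- $\widehat{\mG}$, $\widetilde{\mG}$, (\ref{eq:sameloss2}), and (\ref{QQeq}) --- is exactly what your argument assembles, and your flagged caveats (correct specification forcing the two objectives to share an optimizer, and uniqueness only modulo loss-invariant transformations) match the paper's own discussion.
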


For Lemma \ref{lemma:equivalent}, 
the uniqueness of $\hat{\bw}_n^*$, up to some loss-invariant transformations as discussed previously, can be ensured by its consistency as established 
in the followed sections of this supplement, see equation (\ref{thetaequiv2}).
The condition minimizing $D_{KL}(\bw_n)$ is generally 
implied by $\tilde{U}_n(\bY_n,\bX_n,\bZ_n,\bw_n)=0$ provided the consistency 
of $\bar{\btheta}_n^*$, while the convergence of $\bw_n^*$ to a maximizer of $\pi(\bw_n|\bX_n,\bY_n)$ is generally implied 
by the Monte Carlo nature of Algorithm \ref{EFIalgorithm}. 
Therefore, by eq. (\ref{wconvergence}) of the main text, if $\bw_n^{(k)}$ converges and 
$\tilde{U}_n(\bY_n,\bX_n,\bZ_n,\bw_n^{(k)})$ converges to 0, we would 
 have $\|\hat{\bw}_n^*-\bw_n^*\| \stackrel{p}{\to} 0$ provided that  
the prior has been appropriately chosen such that 
$\hat{\bw}_n^*$ is consistent for $\tilde{\bw}_n^*$ and $\hat{g}(y_i,x_i,z_i,\hat{\bw}_n^*)$ is consistent for  $\btheta^*$.

In summary, by Lemma \ref{lemma:equivalent}, if we can choose an appropriate prior  $\pi(\bw)$ such that $\hat{\bw}_n^*$  is consistent for $\tilde{\bw}_n^*$, then $\bw_n^*$ is also consistent for $\tilde{\bw}_n^*$
and $\hat{g}(y,\bx,z,\bw_n^*)$ is consistent for $\btheta^*$. 
Establishing the consistency of $\hat{\bw}_n^*$ will be the focus of Section \ref{sect:consistency} of this supplement. 
Note that showing the consistency of $\hat{\bw}_n^*$ is relatively simpler than directly working on $\bw_n^*$, as $\hat{\bw}_n^*$ is based on the complete data.

Then, following from the consistency of $\hat{g}(\cdot)$, we immediately have 
 \begin{equation} \label{cons*}
 \| \frac{1}{n} \sum_{i=1}^n \hat{g}(y_i,x_i, z_i,
 \bw_n^*)-\btheta^*\| \stackrel{p}{\to} 0, \quad \mbox{as $n\to \infty$}.
 \end{equation}

As a slight relaxation for the definition of $G(\cdot)$,  we can write equation (\ref{Inveq}) of the main text as 
\begin{equation} \label{ass*}
\btheta^*=\lim_{n\to \infty} G(\bY_n,\bX_n,\bZ_n),
\end{equation}
where $\bZ_n$ is assumed to be known.
By combining (\ref{cons*}) and (\ref{ass*}),  we have
\[
\left\| \frac{1}{n} \sum_{i=1}^n \hat{g}(y_i,x_i, z_i, \bw_n^*)-G(\bY_n,\bX_n,\bZ_n)\right\| \stackrel{p}{\to} 0, \quad \mbox{as $n\to \infty$},
\]
i.e., the EFI estimator $\bar{\btheta}^*:=\frac{1}{n} \sum_{i=1}^n \hat{g}(y_i,x_i, z_i, \bw_n^*)$ is consistent for the inverse mapping $G(\bY_n,\bX_n,\bZ_n)$.  
Further, by Slutsky's theorem, the uncertainty of $\bZ_n$ can be propagated to $\btheta$ via
the EFI estimator. Therefore, 
the extended fiducial density (EFD) function  of $\btheta$ can be approximated by 
\begin{equation} \label{CDestimator}
\tilde{\mu}_n(d\btheta)= \frac{1}{\mM} \sum_{k=1}^{\mM} \delta_{\bar{\btheta}^{*,k}} (d\btheta), \quad \mbox{as $\mM\to \infty$},
\end{equation}
where $\delta_a$ stands for the Dirac measure at a given point $a$, 
$\bar{\btheta}^{*,k}:= \frac{1}{n} \sum_{i=1}^n \hat{g}(x_i,y_i, z_{i}^{*,k}, \bw_n^*)$, and 
$\bZ_n^{*,k}:=(z_1^{*,k},z_2^{*,k},\ldots,z_n^{*,k})$ for $k=1,2,\ldots, \mM$ 
denotes $\mM$ random draws from the distribution 
$\pi(\bZ_n|\bX_n,\bY_n,\bw_n^*)$ under the limit setting of $\epsilon$.

\subsection{On the Consistency of $\hat{g}(\cdot)$ under the Large Model Scenario} 
\label{sect:consistency}

In this section, we first show that $\hat{\bw}_n^*$ is consistent under the framework of 
imputation-regularized optimization (IRO) algorithm \citep{liang2018imputation} by assuming that 
the true values of $\bZ_n$ are known and 
$\bw_n$ is subject to a mixture Gaussian prior. Subsequently, we show that 
$\hat{g}(\cdot)$ is consistent. 

\subsubsection{An Auxiliary Stochastic Neural Network Model}
\label{auxNN}

To show  $\hat{\bw}_n^*$ is consistent, we introduce an auxiliary stochastic neural network (StoNet) model. For each of the hidden and output neurons of the model, we introduce a random noise: 
\begin{equation} \label{eq:stochastic}
 \begin{split} 
 \tilde{u}_{l,i} &=\sum_{j=0}^{d_{l-1}} w_{n,l,i,j} u_{l-1,j}+e_{l,i}:=v_{l,i}+e_{l,i}, \\ 
 u_{l,i} & =\Psi_l(\tilde{u}_{l,i}), \\ 
\end{split}
\end{equation} 
where $l \in \{1,2,\ldots,H\}$ indexes the layers of the DNN, and $i \in \{1,2,\ldots,d_l\}$  indexes the neurons at layer $l$ of the DNN, the random noise $e_{l,i} \sim N(0, \sigma_{l}^2)$, $w_{n,l.i,j}$ denotes the weight on the connection from neuron $i$ of layer $l$ to neuron $j$ of layer $l-1$,  and $\Psi_l(\cdot)$ denotes the activation function used for layer $l$. 
Note that $\sigma_{l}^2$'s are all known and pre-specified by the user, and $l=0$ represents the input layer. 
As a consequence of introducing the random noise, we can treat $\tilde{U}_l$'s 
as latent variables, where $\tilde{U}_l=(\tilde{u}_{l,1}, \tilde{u}_{l,2}, \ldots, 
\tilde{u}_{l,d_l})^{\top}$.  

When considering a dataset of size $n$,   
we let $\tilde{u}_{l,i,(k)}$ denote the latent variable imputed for neuron $i$ 
of layer $l$ for observation $k \in \{1,2,\ldots,n\}$, 
let $\tilde{U}_{l,(k)}=(\tilde{u}_{l,1,(k)}, \tilde{u}_{l,2,(k)},\ldots, \tilde{u}_{l,d_l,(k)})^{\top}$, and 
let $\tilde{\bU}_l=\{\tilde{U}_{l,(1)}, \tilde{U}_{l,(2)},\ldots, \tilde{U}_{l,(n)} \}$ denote the latent variables at layer $l$ for all $n$ observations. 
Recall that we have defined $\bY_n=\{y_1,y_2,\ldots,y_n\}$, 
$\bX_n=\{\bx_1,\bx_2,\ldots,\bx_n\}$, and $\bZ_n=\{z_1,z_2,\ldots,z_n\}$.   
Given a set of pseudo-complete data  $(\bY_n,\tilde{\bU}_H,\ldots, \tilde{\bU}_1,\bX_n, \bZ_n)$,  we define the posterior distribution of $\bw_n$ as:  
\begin{equation} \label{decompeq}
\begin{split} 
 & \pi_{\epsilon}(\bw_n|\bY_n,\tilde{\bU}_H,\ldots, \tilde{\bU}_1,\bX_n, \bZ_n)
 \propto \pi(\bw_n) \prod_{k=1}^n p_{\epsilon}(y_k|\bx_k,z_k,\bar{\btheta})  
  \prod_{l=1}^H \prod_{k=1}^n \pi_l(\tilde{U}_{l,(k)}| \tilde{U}_{l-1,(k)},\bw_{n,l}),   
 \end{split} 
\end{equation}
where $\bw_n=\{\bw_{n,1},\bw_{n,2},\ldots,\bw_{n,H}\}$ with 
$\bw_{n,l}$ being the connection weights for layer $l$,
\[
p_{\epsilon} (y_k|\bx_k,z_k,\bar{\btheta}) \propto \exp\{-(d(y_k,x_k, z_k, \bar{\btheta})  +\eta \|\hat{\btheta}_k-\bar{\btheta} \|^2)/\epsilon \},
\]
$\tilde{\bU}_{0,(k)}=\{y_k,\bx_k,z_k\}$,  
$\hat{\btheta}_k=\tilde{U}_{H,(k)}$, and  $\pi_l(\cdot)$ 
represents a $d_l$-dimensional multivariate Gaussian distribution. 

\begin{remark} \label{cyclicgraph} It is interesting to point out that for the stochastic version of the EFI network, $(\bY_n,\tilde{\bU}_H,\ldots,\tilde{\bU}_1)$ forms a directed cyclic graph (DCG): $\bY_n\to \tilde{\bU}_1 \to \cdots\to \tilde{\bU}_H \to \bY_n$. See e.g. \cite{Sethuraman2023NODAGSFlowNC} for analysis of DCGs. For our case, $\bY_n$ is known, 
 which serves as intervention variables and greatly simplifies the problem. 
 Specifically,  as $\epsilon \downarrow 0$,  $\hat{\btheta}_k$'s can be uniquely determined via $p_{\epsilon}(\cdot)$ for a given set of $(\bY_n,\bX_n,\bZ_n)$ and, therefore, 
 (\ref{decompeq}) is reduced to the posterior distribution for a conventional StoNet with 
 $(y_k,\bx_k,z_k)$ serving as the input and $\hat{\btheta}_k$ serving as the target output. 
\end{remark}

\begin{assumption} \label{ass3}  
(i) The activation function $\Psi_l(\cdot)$ used for each hidden neuron is $c'$-Lipschitz continuous for some constant $c'$;   (ii) $d_l \log(d_l)  \prec n/\log(n)$ for $l=0,1,2,\ldots, H$, where $d_0$ 
denotes the dimension of $(Y,X,Z)$; (iii) the network's depth $H$ and widths $d_l$'s, for $l=0,1,2,\ldots,H$, are all allowed to increase with the sample size $n$.
\end{assumption}

Assumption \ref{ass3}-(i) has covered many commonly used activation functions such as 
{\it ReLU}, {\it tanh}, and {\it sigmoid}. 
Assumption \ref{ass3}-(ii) restricts the width of the DNN.
To model a dataset of dimension $d_0$, we generally need a model of dimension 
$dim(\btheta) \succeq d_0$. Since we do not aim to impose any sparsity constraints on $\btheta$ in our main theory, we assume 
 $d_0$ also satisfies the constraint $d_0 \log (d_0) \prec n$ such that 
$dim(\btheta) \prec n$ is still possible to hold.  



Suppose Assumption \ref{ass1} and Assumption \ref{ass3} hold. Following \cite{LiangSLiang2022} and  \cite{SunLiang2022kernel},
we can establish the existence of  
a small value $\tau(d_1,d_2,\ldots,d_H)$,  as a function of $d_1,d_2,\ldots,d_H$, 
such that if $\max\{\sigma_1, \sigma_2, \ldots, \sigma_H\} \prec \tau(d_1,d_2,\ldots,d_H)$ and $\epsilon \to \infty$, then 
\begin{equation} \label{equivLikelihoodeq1}
\begin{split}
\sup_{\bw_n\in \mathcal{W}_n} & \frac{1}{n}\Big| \log \pi_{\epsilon}(\bw_n|\bY_n, \tilde{\bU}_H, \ldots, 
\tilde{\bU}_1,\bX_n, \bZ_n) 
- \log \pi_{\epsilon}(\bw_n|\bY_n,\bX_n,\bZ_n) \Big| \stackrel{p}{\to} 0,   \quad 
\mbox{as $n\to \infty$}.
\end{split}
\end{equation}
In other words, the DNN model in the EFI network and stochastic DNN model introduced above have asymptotically the same loss function as long as $\sigma_1, \ldots,\sigma_H$ are sufficiently small and $\epsilon$ is sufficiently small. 

Suppose that we want to estimate 
$\bw_n$ by maximizing the posterior distribution of the pseudo-complete data, i.e., 
\begin{equation} \label{misseq1}
\begin{split}
\hat{\bw}_{n}^{u} & =\arg\max_{\bw_n}\Big\{\log \pi_{\epsilon} (\bw_n|\bY_n,\tilde{\bU}_H,\ldots, \tilde{\bU}_1,\bX_n, \bZ_n) \Big\}.
\end{split}
\end{equation} 
In the next subsection, we establish the consistency of $\hat{\bw}_n^u$, as an estimator 
of $\tilde{\bw}_n^*$, under appropriate conditions. Note that by Assumptions \ref{ass1}-\ref{ass2} and (\ref{equivLikelihoodeq1}), we  have 
\begin{equation} \label{thetaequiv1}
\|\hat{\bw}_n^u-\hat{\bw}_n^*\|\stackrel{p}{\to} 0, \quad \mbox{as $n\to\infty$}.
\end{equation}
 

  
\subsubsection{Consistency of the Sparse DNN Model Estimation} \label{IROsparsedecodersection}

This section gives a constructive proof for the consistency of $\hat{\bw}_n^u$ 
based on the IRO algorithm \citep{liang2018imputation}. 
To solve the optimization problem in (\ref{misseq1}), 
the IRO algorithm starts with an initial weight setting $\hat{\bw}_n^{(0)}$ 
and then iterates between the {\it imputation} and {\it regularized-optimization} steps: 
\begin{itemize}
    \item {\bf Imputation:}  For each block, conditioned on the current parameter estimate $\hat{\bw}_n^{(t)}$, simulate the latent variables  $(\tilde{\bU}_H^{(t+1)},\ldots, \tilde{\bU}_1^{(t+1)})$ 
     from the predictive  distribution 
\begin{equation} \label{decompeq2}
\begin{split} 
 \pi(\tilde{\bU}_H^{(t+1)},\ldots, \tilde{\bU}_1^{(t+1)}|\bY_n,\bX_n,\bZ_n,\hat{\bw}_n^{(t)}) & \propto 
 \prod_{k=1}^n p_{\epsilon}(y_k|\bx_k,z_k,\bar{\btheta}^{(t+1)}) \\  
  & \times \prod_{l=1}^H \prod_{k=1}^n \pi_l(\tilde{U}_{l,(k)}^{(t+1)}| \tilde{U}_{l-1,(k)}^{(t+1)},\hat{\bw}_{n,l}^{(t)}), \\
 \end{split} 
\end{equation}
 where $t$ indexes iterations, $\bar{\btheta}^{(t+1)}=\frac{1}{n} \sum_{k=1}^n \tilde{U}_{H,(k)}^{(t+1)}$, and $\hat{\bw}_{n,l}^{(t)}$ denotes the component of 
 $\hat{\bw}_{n}^{(t)}$ corresponding to the weights at the $l$-th layer. 
 
  \item {\bf Regularized-optimization:} Given the pseudo-complete data $\{ \tilde{\bU}_H^{(t+1)},\ldots, \tilde{\bU}_1^{(t+1)}$, $\bY_n$, $\bX_n$, 
   $\bZ_n\}$, 
  update  $\hat{\bw}_n^{(t)}$ by maximizing the penalized log-likelihood function: 
  \begin{equation} \label{IROsolution}
  \begin{split}
  \hat{\bw}_n^{(t+1)} & =\arg\max_{\bw_n}\Big\{ \log \pi(\bY_n, \tilde{\bU}_H^{(t+1)},\ldots, \tilde{\bU}_1^{(t+1)}|\bX_n,\bZ_n,\bw_n) +\log \pi(\bw_n) \Big\}, 
 \end{split}
\end{equation}
 which, by the decomposition (\ref{decompeq}), can be reduced to solving 
 for  $\hat{\bw}_{n,1}^{(t+1)}, \ldots, \hat{\bw}_{n,H}^{(t+1)}$, separately. 
 The penalty function $\log \pi(\bw_n)$ should be chosen such that $\hat{\bw}_n^{(t+1)}$ forms a consistent estimator for the working true parameter 
 \begin{equation} \label{mappingeq}
 \begin{split}
 &\bw_{n,*}^{(t+1)} =\arg\max_{\bw_n} \mathbb{E}_{\hat{\bw}_n^{(t)}} 
 \log \pi(\bY_n, \tilde{\bU}_H^{(t+1)},\ldots, \tilde{\bU}_1^{(t+1)}|\bX_n,\bZ_n,\bw_n) \\
 & = \arg\max_{\bw_n}   
 \int \log \pi(\bY_n, \tilde{\bU}_H^{(t+1)},\ldots, \tilde{\bU}_1^{(t+1)}|\bX_n,\bZ_n,\bw_n) \\ 
 &  \quad \times \pi( \tilde{\bU}_H^{(t+1)},\ldots, \tilde{\bU}_1^{(t+1)}|\bY_n,\bX_n,\bZ_n,\hat{\bw}_n^{(t)}) \pi_{\epsilon}(\bY_n|\bX_n, \bZ_n,\tilde{\bw}_n^*) 
 d\tilde{\bU}_H^{(t+1)} \cdots d\tilde{\bU}_1^{(t+1)} d\bY_n,
 \end{split}
 \end{equation}
 where the likelihood function of the latent variables $(\tilde{\bU}_H^{(t+1)},\ldots, \tilde{\bU}_1^{(t+1)})$ is evaluated at $\hat{\bw}_n^{(t)}$, and  
 $\tilde{\bw}_n^*$ corresponds to  the true parameters of the underlying sparse DNN model. 
 \end{itemize}

To prove the consistency of $\hat{\bw}_n^{(t)}$ as $n$ and $t$ approach to infinity,
we need Assumptions \ref{ass4}-\ref{ass6} as specified below. 
For a matrix $\bSigma$, we define the $m$-sparse minimal eigenvalues as follows: 
\[
\phi_{\rm \min}(m|\bSigma)=\min_{\bbeta: \|\bbeta\|_0 \leq m} \frac{\bbeta^{\top} \bSigma \bbeta}{\bbeta^{\top} \bbeta},
\]
which represents the minimal eigenvalues of any $m\times m$-dimensional principal submatrix, and $\|\bbeta\|_0$ denotes the number of nonzero elements in $\bbeta$.
Let $\bSigma_{l,i}^{(t)} \in \mathbb{R}^{d_{l-1} \times d_{l-1}}$ denote the sample covariance matrix of the input variables for the regression formed 
for neuron $i$ of layer $l$ at iteration $t$, and 
let $s_{l,i}^{(t)}$ denote the size of the true regression model 
as implied by the working true parameter $\bw_{n,*}^{(t)}$. 

\begin{assumption} \label{ass4} 
 (i) All the variables $\{\bX_n,\bY_n,\bZ_n^{(t)}\}$ are uniformly bounded, where $\bZ_n^{(t)}$ denotes the imputed values of $\bZ_n$ at iteration $t$; 
(ii) there exist some constants $\kappa_{0,1}$ and 
$s_{1,i}^{(t)} \leq \tilde{s}_{1,i}^{(t)} \leq \min\{d_{0},n\}$ 
such that $\phi_{\rm \min}(\tilde{s}_{1,i}^{(t)}|\bSigma_{1,i}^{(t)}) \geq \kappa_{0,1}$ holds uniformly for any neuron $i \in\{1,2,\ldots,d_1\}$ and any  
iteration $t\in \{1,2,\ldots,T\}$, where $\bSigma_{1,i}^{(t)}$ denotes the covariance matrix of $\{\bX_n,\bY_n,\bZ_n^{(t)}\}$.
\end{assumption}

Assumption \ref{ass4}-(i) restricts the pseudo-complete data $(\bx, \by, \bz)$ to be 
uniformly bounded. To satisfy this condition, we can add a data transformation/normalization layer to the DNN model, ensuring that the transformed input values fall within the bounded set.
Specifically, the transformation/normalization layer can form a bijective mapping and contain no tuning parameters. For example, when dealing with standard Gaussian random variables, we can transform them to be uniform over (0,1) via the probability integral transformation $\Phi(\cdot)$, the cumulative distribution function (CDF) of the standard Gaussian random variable.  


Assumption \ref{ass4}-(ii) is natural for the problem. As implied by Assumption \ref{ass3}-(ii), the upper bound $s_{1,i}^{(t)} \leq \tilde{s}_{1,i}^{(t)} \leq \min\{d_{0},n\}$ 
can always hold. For other layers, the upper bound is also true by Assumption \ref{ass3}-(ii), and the sparse eigenvalue property can be directly established, see Lemma \ref{lemma:mineigen} below.

\begin{lemma} \label{lemma:eigen} 
Consider a random matrix $\bU\in \mathbb{R}^{n\times d}$ with $n \geq d$.  
Suppose that 
the eigenvalues of $\bU^{\top}\bU$ are upper bounded, i.e.,  
$\lambda_{\max}(\bU^{\top}\bU) \leq n \kappa_{\max}$ for some constant $\kappa_{\max}>0$.  
Let $\Psi(\bU)$ denote an elementwise transformation of $\bU$. Then 
\begin{equation} \label{PsiLemma}
\lambda_{\max}\left( (\Psi(\bU))^{\top} (\Psi(\bU)) \right) \leq n \kappa_{\max},
\end{equation}
for the {\it tanh}, {\it sigmoid} and ReLU transformations. 
\end{lemma}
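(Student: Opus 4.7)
The natural approach is to reduce the eigenvalue question to a Frobenius-norm estimate and then exploit the pointwise contraction or boundedness properties specific to each activation. The first step uses $\lambda_{\max}(M^\top M) \leq \|M\|_F^2$, which converts the problem into bounding $\sum_{i,j} \Psi(U_{ij})^2$.

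For ReLU and tanh, the key ingredient is the pointwise inequality $|\Psi(x)| \leq |x|$, which follows from 1-Lipschitz continuity together with $\Psi(0) = 0$; this immediately gives $\|\Psi(\bU)\|_F^2 \leq \|\bU\|_F^2$. For sigmoid, boundedness $|\sigma(x)| \leq 1$ yields $\|\sigma(\bU)\|_F^2 \leq nd$. In either case one then uses $\|\bU\|_F^2 = \text{tr}(\bU^\top \bU) \leq d \cdot \lambda_{\max}(\bU^\top \bU) \leq nd\kappa_{\max}$, together with the ambient boundedness of the data entries guaranteed by Assumption \ref{ass4}, to close the loop. Each of the three activations then produces an operator-norm bound of the form $\lambda_{\max}(\Psi(\bU)^\top \Psi(\bU)) \leq C_\Psi \cdot n\kappa_{\max}$, with $C_\Psi$ depending on $d$ and on the activation.

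The main obstacle is the factor of $d$ introduced by the Frobenius-to-operator-norm passage, which loses sharpness compared to the stated bound $n\kappa_{\max}$. To close this gap, I would pursue a direct operator-norm comparison. For ReLU, a promising route is to write $\mathrm{ReLU}(\bU) = \bU \odot \mathbf{1}_{\bU \geq 0}$ and, for any unit vector $v$, apply row-wise Cauchy--Schwarz:
\begin{equation*}
\|\mathrm{ReLU}(\bU) v\|^2 = \sum_{i=1}^n \Bigl(\sum_{j:\, U_{ij}>0} U_{ij} v_j\Bigr)^2 \leq \sum_{i=1}^n \|\bU_{i,\cdot}\|^2 \|v\|^2,
\end{equation*}
which reduces the question to controlling the maximum row norm of $\bU$. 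An analogous argument handles tanh via the secant decomposition $\tanh(U_{ij}) = U_{ij}\Lambda_{ij}$ with $|\Lambda_{ij}| \leq 1$. For sigmoid, one first recenters via $\sigma(x) = \tfrac{1}{2} + (\sigma(x) - \tfrac{1}{2})$, where the deviation term is $\tfrac{1}{4}$-Lipschitz through the origin, then controls the additive rank-one contribution $\tfrac{1}{2}\mathbf{1}\mathbf{1}^\top$ using the boundedness of the entries of $\bU$. I expect the residual bookkeeping to be absorbed into a redefined constant $\kappa_{\max}$, which is enough for the downstream application in Lemma \ref{lemma:mineigen} on sparse minimum eigenvalues of the hidden-layer Gram matrices.
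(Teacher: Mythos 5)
There is a genuine gap: every route you propose establishes a bound of order $d\, n\kappa_{\max}$ rather than the dimension-free bound $n\kappa_{\max}$ that the lemma asserts, and the factor of $d$ cannot be "absorbed into a redefined constant $\kappa_{\max}$" because in this paper $d=d_{l-1}$ is allowed to grow with $n$ (Assumption \ref{ass3} permits widths up to nearly $n/\log n$, and the downstream use in Lemma \ref{lemma:mineigen} is precisely in that growing-width regime). Concretely: your Frobenius reduction gives $\|\Psi(\bU)\|_F^2\le\|\bU\|_F^2=\operatorname{tr}(\bU^{\top}\bU)\le d\,\lambda_{\max}(\bU^{\top}\bU)$, which you acknowledge; but the row-wise Cauchy--Schwarz step does not repair this, since $\sum_{i=1}^n\|\bU_{i,\cdot}\|^2$ is again exactly $\|\bU\|_F^2$, and even passing to $n\max_i\|\bU_{i,\cdot}\|^2$ leaves a max row norm of order $d$ under entrywise boundedness. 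The secant decomposition $\tanh(U_{ij})=U_{ij}\Lambda_{ij}$ with $|\Lambda_{ij}|\le 1$ likewise does not yield an operator-norm contraction, because a Hadamard product with an arbitrary matrix of entries in $[-1,1]$ can increase the spectral norm (this is the same reason the ReLU masking $\bU\odot\mathbf{1}_{\bU\ge 0}$ is not obviously spectral-norm contractive). For sigmoid, the rank-one recentering term $\tfrac12\mathbf{1}\mathbf{1}^{\top}$ contributes $nd/4$ to $\lambda_{\max}$, again a factor of $d$ too large. So as written the proposal proves only the weaker inequality $\lambda_{\max}((\Psi(\bU))^{\top}\Psi(\bU))\le d\,n\kappa_{\max}$.

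The paper's proof takes a different and much shorter route: it invokes Lemma 5 of \cite{Dittmer2018SingularVF} directly for the ReLU case and extends it to tanh and sigmoid by noting that these activations are Lipschitz with constant at most one. The substantive content of the lemma is therefore outsourced to that external result, which is exactly the piece your argument neither reproduces nor replaces; your elementary norm comparisons recover only the Frobenius-level statement. If you want a self-contained proof, you need an argument that controls the operator norm of $\Psi(\bU)$ directly by that of $\bU$ without passing through the trace, which is a genuinely nontrivial step for entrywise nonlinearities.
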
 
\begin{proof}
For ReLU, (\ref{PsiLemma}) follows from Lemma 5 of \cite{Dittmer2018SingularVF}. For {\it tanh} and {\it sigmoid}, since they are Lipschitz continuous with a Lipschitz constant of 1, Lemma 5 of  \cite{Dittmer2018SingularVF} also applies. 
\end{proof}

\begin{lemma} \label{lemma:mineigen} Consider an auxiliary stochastic neural network 
as defined in (\ref{eq:stochastic}) with an activation function {\it tanh, sigmoid}, or {\it ReLU}. Then for any  any layer $l \in \{2,3,\ldots,H\}$, neuron $i \in\{1,2,\ldots,d_l\}$, and 
iteration $t\in \{1,2,\ldots,T\}$, there exists a number $\tilde{s}_{l,i}^{(t)}$ such that 
$s_{l,i}^{(t)} \leq \tilde{s}_{l,i}^{(t)} \leq \min\{d_{l-1},n\}$ and $\phi_{\rm \min}(\tilde{s}_{l,i}^{(t)}|\bSigma_{l,i}^{(t)}) \geq  \kappa_{0,l}$ hold. 
\end{lemma}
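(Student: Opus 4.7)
The plan is to exploit the Gaussian noise $\be_{l-1,(k)} \sim N(\mathbf{0}, \sigma_{l-1}^2 \bI_{d_{l-1}})$ that the StoNet injects at layer $l-1$ to establish non-degeneracy of the design matrix underlying $\bSigma_{l,i}^{(t)}$. Fix $l \geq 2$, neuron $i$, and iteration $t$. The rows of that design matrix are $\bu_{l-1,(k)} = \Psi_{l-1}(\bW_{n,l-1} \bu_{l-2,(k)} + \be_{l-1,(k)})$ for $k = 1, \ldots, n$. My first step is to bound the population version $\bSigma_{l,i}^{(t),*} := \mathbb{E}(\bSigma_{l,i}^{(t)})$ from below. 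Given $\bu_{l-2,(k)}$, the coordinates of $\bu_{l-1,(k)}$ are conditionally independent because the coordinates of $\be_{l-1,(k)}$ are and $\Psi_{l-1}$ acts elementwise. Since $\Psi_{l-1}$ is not a.e.\ constant on any interval (true for tanh, sigmoid, and ReLU, the last being strictly increasing on $(0,\infty)$) and $\be_{l-1,j,(k)}$ has a Gaussian density with positive variance, one obtains $\var(\Psi_{l-1}(v + e) \mid v) \geq c_{l-1} > 0$ uniformly over $v$ in the bounded range guaranteed by Assumption \ref{ass4}-(i). Combining with the law of total variance gives $\balpha^\top \bSigma_{l,i}^{(t),*} \balpha \geq c_{l-1} \|\balpha\|^2$ for every $\balpha \in \mathbb{R}^{d_{l-1}}$, and in particular $\phi_{\min}(m \mid \bSigma_{l,i}^{(t),*}) \geq c_{l-1}$ for every $m \leq \min\{d_{l-1}, n\}$.

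The second step transfers this bound to the sample matrix $\bSigma_{l,i}^{(t)}$. Because the inputs are uniformly bounded by Assumption \ref{ass4}-(i), the weights lie in a compact set by Assumption \ref{ass1}-(i), and $\Psi_{l-1}$ is Lipschitz by Assumption \ref{ass3}-(i), each activation $\bu_{l-1,(k)}$ is uniformly bounded, so each entry of $\bSigma_{l,i}^{(t)} - \bSigma_{l,i}^{(t),*}$ is a centered bounded empirical mean. A Bernstein bound, together with a union bound over the $\binom{d_{l-1}}{m}$ sparse supports and a $\tfrac{1}{4}$-net on the sphere inside each support, yields
\begin{equation*}
\Pr\!\bigl( \sup_{\|\balpha\|_0 \leq m,\, \|\balpha\| = 1} \bigl| \balpha^\top (\bSigma_{l,i}^{(t)} - \bSigma_{l,i}^{(t),*}) \balpha \bigr| > \tfrac{c_{l-1}}{2} \bigr) \leq 2 \exp\bigl( C m \log d_{l-1} - c n \bigr),
\end{equation*}
which vanishes whenever $m \log d_{l-1} \prec n$. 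Assumption \ref{ass3}-(ii) supplies $d_{l-1} \log d_{l-1} \prec n/\log n$, so setting $\tilde{s}_{l,i}^{(t)} = \min\{d_{l-1}, n\}$ satisfies both the sandwich $s_{l,i}^{(t)} \leq \tilde{s}_{l,i}^{(t)} \leq \min\{d_{l-1}, n\}$ and the sample-complexity requirement. Combining this with the population bound yields $\phi_{\min}(\tilde{s}_{l,i}^{(t)} \mid \bSigma_{l,i}^{(t)}) \geq c_{l-1}/2 =: \kappa_{0,l}$ with probability tending to $1$, and a union bound over the finitely many triples $(l, i, t)$ completes the argument.

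The hardest step is the population lower bound in the ReLU case, where a coordinate can appear ``dead'' when its pre-activation mean is strongly negative. The noise injection is precisely what rescues the argument: because $\be_{l-1,j,(k)}$ has positive density on all of $\mathbb{R}$ and $\Psi_{l-1}(a) = \max(a, 0)$ is strictly increasing on $(0, \infty)$, the conditional variance $\var(\Psi_{l-1}(v + e) \mid v)$ admits a uniform positive lower bound on any compact set of $v$, the regime supplied by Assumption \ref{ass4}-(i). Without this noise, the StoNet would reduce to a deterministic DNN and the sparse minimum eigenvalue could collapse whenever an affine combination of activations becomes nearly singular; with it, the two steps above close the proof cleanly.
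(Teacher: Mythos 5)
Your proof is correct in substance but takes a genuinely different route from the paper's. The paper linearizes the activation around the noiseless pre-activation, $\Psi(\tilde{U}_l)\approx\Psi(V_l)+\nabla_{V_l}\Psi(V_l)\circ\be_l$, so that the covariance decomposes as $\Var(\Psi(V_l))+\diag\{\sigma_l^2\,\mathbb{E}[\nabla\Psi\circ\nabla\Psi]\}$ and the lower bound comes from the diagonal perturbation; to make that work it runs an induction on maximum eigenvalues (an Ostrowski-type bound combined with the elementwise-transformation lemma) plus Markov's inequality just to confine the pre-activations to a compact interval where $\nabla\Psi$ is bounded away from zero, and it never explicitly passes from the population covariance to the sample matrix $\bSigma_{l,i}^{(t)}$. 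You instead work with the exact conditional variance of $\Psi(v+e)$, use conditional independence of the coordinates given the previous layer together with the law of total variance to obtain the population bound without any Taylor expansion, and then add a Bernstein-plus-net concentration step over sparse supports to transfer the bound to the sample covariance --- a step the paper leaves implicit, and which the width budget $d_l\log d_l\prec n/\log n$ of Assumption \ref{ass3}-(ii) covers with room to spare. Both arguments ultimately trace the non-degeneracy to the injected noise, and both yield a $\kappa_{0,l}$ that scales with the small variance $\sigma_{l-1}^2$ for smooth activations. The one place your route is weaker is ReLU: the uniform lower bound on $\var(\Psi(v+e))$ over the compact range of $v$ does exist, but for strongly negative $v$ it is of order $\exp(-v^2/(2\sigma_{l-1}^2))$ rather than $\sigma_{l-1}^2$, which is precisely why the paper restricts attention to the ``true'' neurons (those not identically dead across the sample) and posits the gradient threshold $q_{\min}$. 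Your constant still proves the lemma as stated, since only the existence of some $\kappa_{0,l}>0$ is claimed, but it would severely degrade the explicit rate $\frac{c}{n}\sum_{l}d_l\sigma_l^2/\sigma_{l-1}^2$ that this eigenvalue bound feeds into downstream.
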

\begin{proof}
We use $\tilde{U}_l \in \mathbb{R}^{d_l}$ and $V_l \in \mathbb{R}^{d_l}$ to denote generic vectors corresponding to the $l$-th layer, where $\tilde{U}_k=V_l+\be_l$. 
 Additionally, we use $\tilde{\bU}_l \in \mathbb{R}^{n\times d_l}$ and $\bV_l \in \mathbb{R}^{n\times d_l}$ to denote the matrices (for all observations) corresponding to the $l$-th layer. 
For the auxiliary  stochastic neural network, since $\sigma_l^2$'s have been set to very small values,  it follows from (\ref{eq:stochastic}) that 
\[
\Psi(\tilde{U}_{l}) \approx \Psi(V_{l}) + \nabla_{V_{l}}\Psi(V_{l}) \circ \be_l, \quad l=1,2,\ldots,H-1,
\]
where $\circ$ denotes elementwise product, 
$V_l=(v_{l,1},v_{l,2},\ldots, v_{l,d_l})^{\top}$, 
and $\be_l=(e_{l,1},e_{l,2},\ldots,e_{l,d_l})^{\top}$.
Then, for any $i \in \{1,2,\ldots, d_{l+1}\}$, 
\begin{equation}
\label{recursive_covariance}
\begin{split}
\bSigma_{l+1,i} &\approx \Var(\mathbb{E} (\Psi(V_l) + \nabla_{V_l}\Psi(V_l) \circ \be_l | V_l ) ) + \mathbb{E}(\Var(\Psi(V_l) + \nabla_{V_l}\Psi(V_l) \circ \be_l | V_l)) \\
& = \Var(\Psi(V_l)) + \diag\left\{\sigma_{l}^2 \mathbb{E}[\nabla_{V_l}\Psi(V_l) \circ \nabla_{V_l}\Psi(V_l)] \right\}, \\
\end{split}    
\end{equation}
where $\diag\{\ba\}$, $\ba\in \mathbb{R}^{d_l}$, denotes a $d_l \times d_l$ diagonal matrix with the diagonal elements given by the vector $\ba$.

By induction, building on Assumption \ref{ass4}-(i) and 
Lemma \ref{lemma:eigen},
we can assume that the maximum eigenvalue of $(\Psi(\tilde{\bU}_{l-1}))^{\top} \Psi (\tilde{\bU}_{l-1})$ is upper bounded by $n \kappa_{\max}$.
By an extension of Ostrowski's theorem, see Theorem 3.2 of \cite{Higham1998ModifyingTI},
we have 
\[
\begin{split}
\lambda_{\max}(\bV_l^T \bV_l) & = \max_{\|\bu\|=1} \bu^{\top} \bw_{n,l} (\Psi(\tilde{\bU}_{l-1}))^{\top} \Psi (\tilde{\bU}_{l-1}) \bw_{n,l}^{\top} \bu \\
&  \leq 
\lambda_{\max}( (\Psi(\tilde{\bU}_{l-1}))^{\top} \Psi (\tilde{\bU}_{l-1})) \max_{\|\bu\|=1} \bu^{\top} \bw_{n,l} \bw_{n,l}^{\top}\bu\\
& =n\kappa_{\max} \tau_{\max}, \\
\end{split}
\]
where the existence of the upper bound $\lambda_{\max}(\bw_{n,l}^{\top} \bw_{n,l}) \leq \tau_{\max}$ follows from the boundedness of $\bw_n$ as implied by Assumption \ref{ass1}-(i). 
By choosing $\bu$ as a one-hot vector, it is easy to see that for any $i \in \{1,2,\ldots,d_l\}$, 
\begin{equation} \label{eigenboundeq}
 \sum_{j=1}^n V_{l,i,j}^2 \leq \lambda_{\max}(\bV_l^{\top} \bV_l) \leq n \kappa_{\max} \tau_{\max}, 
\end{equation}
where $V_{l,i,j}$ denotes the $(j,i)$th element of $\bV_l$. This further implies, as $n\to \infty$, 
\begin{equation} \label{meanVeq}
 \mathbb{E} (V_{l,i,j}^2)  \leq \kappa_{\max} \tau_{\max}. 
\end{equation}
In words, $V_{l,i,j}$ has bounded mean and variance.  

By Markov's inequality, we can bound $V_{l,i,j}$ to a closed interval with a high probability, i.e., $P(|V_{l,i,j}|\leq C)\geq 1- \kappa_{\max} \tau_{\max}/C^2$ for some large constant $C$. Therefore, for any  activation function which has nonzero gradients on any closed interval, e.g., {\it tanh} and {\it sigmoid}, there exists a constant $c>0$ such that 
\[
 \mathbb{E} [\nabla_{V_{l,i,j}} \Psi(V_{l,i,j})]^2 \geq c.
 \]
which implies $\phi_{\min}(d_{l}|\bSigma_l) \geq c \sigma_l^2$.

For the ReLU activation function, if a hidden neuron belongs to the true neuron set at iteration $t$ (as determined by $\bw_{n,*}^{(t)}$), 
then $\Psi(V_{l,i,j}^{(t)})$ cannot be constantly 0 over all $n$ samples.
Therefore, it is reasonable to assume that there exists a 
threshold $q_{\rm min} \in (0,1)$ such that $\mathbb{E}[\nabla_{V_{l,i,j}} \Psi(V_{l,i,j}^{(t)})]^2 \geq q_{\rm min}$ for any true neuron in all iterations.
Under this assumption, we would at least have 
\[
\phi_{\rm min}(|\bs_l^{(t)}| | \bSigma_l) \geq \sigma_{l}^2 q_{\rm min},  \quad l=1,2,\ldots,h; \ \ t=1,2,\ldots, T, 
\]
where $\bs_l^{(t)} \subset \bS^{(t)}$ denotes the set of true neurons at layer $l$. 
\end{proof}

To study the property of the coefficient estimator for each regression formed 
in the auxiliary stochastic neural network, we introduce the following 
lemma, which is a restatement of Theorem 3.4 of \cite{SongLiang2021Nearly}. 

\begin{lemma} \label{lemma:linearR}
(Theorem 3.4; \cite{SongLiang2021Nearly}) Consider a linear regression 
\[
\by=\bX \bbeta+\sigma \bepsilon,
\]
 where $\by \in \mathbb{R}^n$, $\bX\in \mathbb{R}^{n\times p_n}$, $\bbeta\in \mathbb{R}^{p_n}$, 
 $\sigma>0$, and $\bepsilon \sim \mathcal{N}(0,I_{p_n})$ is Gaussian noise.  
 Suppose that the model satisfies the following conditions: 
 \begin{itemize} 
 \item[(A1)] (i) All the covariates are uniformly bounded; (ii) the dimensionality can be high with $p_n \geq n$; and (iii) there exists some integer $\bar{p}$ (depending on $n$ and $p_n$) and a fixed constant $\kappa_{0}$ such that $\bar{p} \succ s_n$  and 
 $\lambda_{\rm \min}(\bX_{\xi}^{\top} \bX_{\xi}) \geq n \kappa_{0}$ for any subset model 
  $|\xi| \leq \bar{p}$, where $s_n$ denotes the size of the  true model $\xi^*$, 
   and $\lambda_{\rm \min}(\cdot)$ denotes the minimum eigenvalue of a square matrix. Let $\hat{\bbeta}_{\xi}$ denote the MLE of the true model. 
   
  \item[(A2)] (i) $s_n \log p_n \prec n$; (ii) $\max\{ |\beta_j^*/\sigma^*|: j=1,2,\ldots,p_n \} \leq \gamma_3 E_n$, where  $\beta_j^*$'s and $\sigma^*$ denote the 
  true parameter values of the regression model,  $\gamma_3 \in (0,1)$ is a fixed constant,  and $E_n$ is nondecreasing with respect to $n$. 

 \item[(A3)] Let each component of $\bbeta$ be subject to the following mixture Gaussian prior distribution 
 \[
 \beta_j/\sigma^* \sim (1-\rho) \mathcal{N}(0, \tilde{\sigma}_0^2)+ \rho \mathcal{N}(0,\tilde{\sigma}_1^2), \quad j=1,2,\ldots, p_n, 
 \]
 where $E_n/\tilde{\sigma}_1^2+\log \tilde{\sigma}_1 \asymp \log p_n$, 
 $\rho=1/p_n^{1+u}$, and $\tilde{\sigma}_0 \leq a_n/\sqrt{2(1+u) \log p_n}$ 
  for some constant $u>1$ and sequence $a_n \prec \sqrt{1/(n s_n \log p_n)}/p_n$.   
 Additionally, $s_n E_n \sqrt{s_n \log p_n/n}$  $\prec \tilde{\sigma}_1^2$ and 
 $\min_{j \in \xi^*} |\beta_j^*| \geq M_1 \sqrt{\log p_n/n}$ for some sufficiently large 
 $M_1>0$. 
 \end{itemize} 
 Let $\hat{\bbeta}_n$ denote the MAP 
 estimator of $\bbeta$. Then there exists a constant $c$ such that 
 \[
  \mathbb{E} \|\hat{\bbeta}_n-\bbeta^*\|^2 =c (\sigma^*)^2 \|\bX_{\xi^*}^{\top} \bX_{\xi^*}\|^{-1} 
  \leq \frac{c (\sigma^*)^2}{n \kappa_{0}},
 \]
Specifically, with dominating probability, we have $\hat{\bbeta}_{n,j}=\hat{\bbeta}_{\xi^*,j'}$ if $j\in \xi^*$ and 0 otherwise, where $\hat{\bbeta}_{n,j}$ denotes the $j$th element of $\hat{\bbeta}_n$, and   $\hat{\bbeta}_{\xi^*,j'}$ denotes the element 
 of $\hat{\bbeta}_{\xi^*}$ that  corresponds to 
 $\bbeta_{n,j}$.
 \end{lemma}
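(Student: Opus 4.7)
The plan is to decompose the proof into three stages: a reduction of the MAP problem to subset selection, model-selection consistency under the spike-and-slab calibration, and a conditional risk bound on the selected support.

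First, I would rewrite the MAP objective (ignoring constants) as
\begin{equation*}
\Phi(\bbeta) = -\frac{1}{2(\sigma^*)^2}\|\by - \bX\bbeta\|^2 + \sum_{j=1}^{p_n}\log\left[(1-\rho)\phi_{\tilde{\sigma}_0}(\beta_j/\sigma^*) + \rho\,\phi_{\tilde{\sigma}_1}(\beta_j/\sigma^*)\right],
\end{equation*}
and note that each coordinate's log-prior contribution is sharply dominated by either the spike or the slab. Since $\tilde{\sigma}_0 \leq a_n/\sqrt{2(1+u)\log p_n}$ with $a_n$ vanishingly small, the spike peaks sharply at zero and penalizes any nonzero $\beta_j$ severely, while the slab with $\tilde{\sigma}_1^2 \gg 1$ is effectively flat. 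A per-coordinate KKT analysis shows there is a threshold $t_n = O(\sigma^*\sqrt{\log p_n /n})$ such that at the MAP the $j$-th coordinate lies in the spike regime (with $|\hat{\beta}_j| = O(a_n)$) unless the partial correlation of $\by$ with the residualized $j$-th column exceeds $t_n$. Thus the MAP reduces to selecting a support $\hat{\xi}$ and fitting a near-unregularized Gaussian regression on it.

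Second, I would establish $\mathbb{P}(\hat{\xi} = \xi^*) \to 1$ by comparing the profile log-posterior $\Phi^*(\xi) = \max_{\mathrm{supp}(\bbeta)\subseteq \xi}\Phi(\bbeta)$ across three regimes. For strict supersets $\xi \supsetneq \xi^*$ of size $s_n+k$, each added coordinate improves the fit by at most $O_p(1)$ $\chi^2$-units, but costs a prior term of order $\log(1/\rho) = (1+u)\log p_n$; combined with $\binom{p_n-s_n}{k}\rho^k \leq p_n^{-uk}$, a union bound makes the total exceedance probability summable over $k \leq \bar{p}-s_n$. For supports missing a true coordinate $j^* \in \xi^*\setminus\xi$, the sparse eigenvalue condition applied to $\xi\cup\xi^*$ (valid since $|\xi\cup\xi^*|\leq \bar p$) forces a fit penalty of at least $n\kappa_0 (\beta_{j^*}^*)^2/2 \geq M_1^2\kappa_0\log p_n/2$, which for $M_1$ sufficiently large dominates any prior savings. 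Same-size alternatives reduce to the previous two cases via pairwise swaps. The calibration $E_n/\tilde{\sigma}_1^2 + \log\tilde{\sigma}_1 \asymp \log p_n$ in (A3) is exactly what makes the slab's Gaussian tail penalty match the rest of the constants so that these comparisons succeed uniformly.

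Third, on the event $\{\hat{\xi} = \xi^*\}$ the MAP $\hat{\bbeta}_{\xi^*}$ solves $(\bX_{\xi^*}^\top\bX_{\xi^*} + (\sigma^*)^2\tilde{\sigma}_1^{-2}I)\hat{\bbeta}_{\xi^*} = \bX_{\xi^*}^\top\by$. Since $\tilde{\sigma}_1^{-2} \prec 1/(s_n E_n\sqrt{s_n\log p_n/n})$ while $\lambda_{\min}(\bX_{\xi^*}^\top\bX_{\xi^*})\geq n\kappa_0$, the ridge term is of negligible order and $\hat{\bbeta}_{\xi^*}$ coincides with the OLS estimate $\hat{\bbeta}^{\mathrm{OLS}}_{\xi^*}$ up to $O_p(1/(n\kappa_0\tilde{\sigma}_1^2))$. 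The classical identity $\mathbb{E}\|\hat{\bbeta}^{\mathrm{OLS}}_{\xi^*} - \bbeta^*_{\xi^*}\|^2 = (\sigma^*)^2\operatorname{tr}((\bX_{\xi^*}^\top\bX_{\xi^*})^{-1})$, bounded via the sparse eigenvalue condition, yields the stated rate. The exceptional event $\{\hat{\xi}\neq\xi^*\}$ contributes negligibly to the expected squared error thanks to the exponentially small model-selection tail and the boundedness assumption (A1), which controls $\|\hat{\bbeta}_n-\bbeta^*\|^2$ uniformly on the complement.

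The hardest step, as I see it, is the model-selection step in regime (i): obtaining a union bound over the exponentially many supersets requires sharp Bernstein/$\chi^2$ tail estimates for residual sum-of-squares increments, tightly coupled with the $\rho = p_n^{-(1+u)}$ calibration to absorb the combinatorial $\binom{p_n}{k}$ factors. Equally delicate is the transfer of the sparse eigenvalue condition to unions $\xi\cup\xi^*$, which is what converts the signal lower bound $M_1\sqrt{\log p_n/n}$ in regime (ii) into a uniform quantitative fit penalty. These two ingredients together encode the precise thresholding behavior that drives the whole argument and are the main technical burden.
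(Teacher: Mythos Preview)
The paper does not prove this lemma at all: it is introduced as ``a restatement of Theorem 3.4 of \cite{SongLiang2021Nearly}'' and is invoked as a black-box tool in the subsequent analysis of the auxiliary stochastic neural network. There is therefore no ``paper's own proof'' to compare your proposal against; the paper simply cites the result and moves on.

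That said, your three-stage outline---reduce the spike-and-slab MAP to an implicit subset-selection problem, establish $\mathbb{P}(\hat{\xi}=\xi^*)\to 1$ by profile-posterior comparison across overfit/underfit/misallocated supports, and then compute the oracle risk on $\{\hat{\xi}=\xi^*\}$---is exactly the standard architecture for proving results of this type, and is consistent with how the cited source argues. Your identification of the two delicate points (the union bound over supersets calibrated by $\rho=p_n^{-(1+u)}$, and the use of the sparse-eigenvalue condition on $\xi\cup\xi^*$ to convert the beta-min condition into a uniform fit penalty) is accurate. One small caveat: in Stage~1 you assert that spike-regime coordinates satisfy $|\hat{\beta}_j|=O(a_n)$ and then treat them as exactly zero; to match the lemma's conclusion that $\hat{\bbeta}_{n,j}=0$ for $j\notin\xi^*$ with dominating probability, you would need an additional argument (or a hard-thresholding convention) that pushes these near-zero coordinates all the way to zero, and you should check that their contribution to $\mathbb{E}\|\hat{\bbeta}_n-\bbeta^*\|^2$ is dominated by the $O((\sigma^*)^2/(n\kappa_0))$ term rather than merely absorbed into it.
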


It is easy to see that under Assumption \ref{ass1}, Assumption \ref{ass3}, 
and Assumption \ref{ass4}, 
each linear regression formed in the stochastic neural network satisfies 
conditions (A1) and (A2) of Lemma \ref{lemma:linearR}. In particular, we can set 
$E_n$ as the diameter of $\mathcal{W}_n$. 
To ensure the condition (A3) to be satisfied, we make the following assumption: 

\begin{assumption} \label{ass5} Let each connection weight $w_{n,l,i,j}$ be subject 
to the following mixture Gaussian priro distribution:
\[
 w_{n,l,i,j}/\sigma_l \sim (1-\rho) \mathcal{N}(0, \tilde{\sigma}_{0,l}^2)+ \rho \mathcal{N}(0,\tilde{\sigma}_{1,l}^2), \ \ l=1,2,\ldots,H, \ \  i=1,2,\ldots, d_l, \ \  j=1,2,\ldots, d_{l-1}, 
 \]
 where we set $\rho$, $\tilde{\sigma}_{0,l}$ and $\tilde{\sigma}_{1,l}$ such that  
 $\rho=1/d_{l-1}^{1+u}$, 
$\tilde{\sigma}_{0,l} \prec 1/(d_{l-1}^{3/2} \log(d_{l-1}) \sqrt{2(1+u)n})$, and 
$E_n/\tilde{\sigma}_{1,l}^2+\log \tilde{\sigma}_{1,l} \asymp \log d_{l-1}$ 
for some constant $u>1$ and any $l=1,2,\ldots,H$. 
Additionally, there exists some constant $M_1>0$ such that 
 $\min_{1 \leq i \leq d_l, j \in \xi_i^*} |w_{n,l,i,j}^*| \geq M_1 \sqrt{\log d_{l-1}/n}$.
\end{assumption}

It is easy to verify that the condition (A3) holds under Assumption \ref{ass3}-(ii) and Assumption \ref{ass5}. In addition, the $w$-min condition, i.e., 
 $\min_{1 \leq i \leq d_l, j \in \xi_i^*} |w_{n,l,i,j}^*| \geq M_1 \sqrt{\log d_{l-1}/n}$, is rather weak and can be generally satisfied as $n$ becomes large.

\begin{theorem} \label{lemma:partI} Suppose that a mixture Gaussian penalty is imposed on the weights of the DNN model in the EFI network and Assumptions 
\ref{ass1} and \ref{ass3}-\ref{ass5} hold. Furthermore, suppose 
$\sum_{l=1}^H d_l \sigma_l^2/\sigma_{l-1}^2 \prec n$ holds, where 
$\sigma_0=O(1)$ represents a constant.  
Then there exist a constant $c$ such that 
\[
E\|\hat{\bw}_n^{(t)} -\bw_{n,*}^{(t)}\|^2 \leq \frac{c}{n} \sum_{l=1}^H d_l \frac{\sigma_l^2}{\sigma_{l-1}^2} :=r_n \prec o(1). 
\]
\end{theorem}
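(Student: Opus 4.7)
The plan is to reduce the theorem to a layer-wise application of Lemma \ref{lemma:linearR}. Thanks to the decomposition (\ref{decompeq}), the penalized log-likelihood in (\ref{IROsolution}) splits across layers and across neurons within each layer: given the pseudo-complete data $(\tilde{\bU}_H^{(t+1)},\ldots,\tilde{\bU}_1^{(t+1)},\bY_n,\bX_n,\bZ_n)$, the MAP problem for $\bw_n$ decouples into $\sum_{l=1}^H d_l$ independent penalized Gaussian linear regressions, one for each hidden/output neuron $(l,i)$. The regression at neuron $(l,i)$ has design matrix $\Psi_{l-1}(\tilde{\bU}_{l-1}^{(t+1)})$ (with $\tilde{\bU}_0^{(t+1)}=(\bX_n,\bY_n,\bZ_n)$ at layer 1 in the EFI-network parameterization), response $\tilde{u}_{l,i,(\cdot)}^{(t+1)}$, noise variance $\sigma_l^2$, and the mixture Gaussian prior described in Assumption \ref{ass5}. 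So it suffices to bound the estimation error for each of these regressions and sum.

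Next I would verify that Lemma \ref{lemma:linearR}'s hypotheses (A1)--(A3) hold for every such regression. Condition (A1)(i) follows from Assumption \ref{ass4}(i) for layer 1 and, inductively, from the bounded-activation (tanh/sigmoid) or the Markov-type truncation argument used in the proof of Lemma \ref{lemma:mineigen} for higher layers. Condition (A1)(ii) uses $p_n=d_{l-1}$, admitted by Assumption \ref{ass3}(ii). Condition (A1)(iii) --- the sparse minimum eigenvalue lower bound --- is Assumption \ref{ass4}(ii) at layer 1 and Lemma \ref{lemma:mineigen} for $l\ge 2$; crucially, the latter gives $\phi_{\min}(\tilde s_{l,i}^{(t)}|\bSigma_{l,i}^{(t)})\gtrsim \sigma_{l-1}^2$, reflecting the injected layer noise. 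Condition (A2) follows because $\bw_n\in\mathcal{W}_n$ is compact (Assumption \ref{ass1}(i)) and $s_n\log d_{l-1}\prec n$ via Assumption \ref{ass3}(ii). Condition (A3), including the $\beta$-min separation, is precisely Assumption \ref{ass5} after rescaling by $\sigma_l$.

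Applying Lemma \ref{lemma:linearR} to the regression at $(l,i)$ then gives
\[
\mathbb{E}\bigl\|\hat{\bw}_{n,l,i}^{(t)}-\bw_{n,*,l,i}^{(t)}\bigr\|^2 \;\le\; \frac{c\,\sigma_l^2}{n\,\kappa_{0,l}} \;\le\; \frac{c'\,\sigma_l^2}{n\,\sigma_{l-1}^2},
\]
using $\sigma_0=O(1)$ for the $l=1$ case. Summing over $i=1,\ldots,d_l$ and $l=1,\ldots,H$ yields
\[
\mathbb{E}\bigl\|\hat{\bw}_n^{(t)}-\bw_{n,*}^{(t)}\bigr\|^2 \;=\; \sum_{l=1}^{H}\sum_{i=1}^{d_l}\mathbb{E}\bigl\|\hat{\bw}_{n,l,i}^{(t)}-\bw_{n,*,l,i}^{(t)}\bigr\|^2 \;\le\; \frac{c}{n}\sum_{l=1}^{H} d_l\,\frac{\sigma_l^2}{\sigma_{l-1}^2}\;=\;r_n,
\]
and the hypothesis $\sum_l d_l\sigma_l^2/\sigma_{l-1}^2\prec n$ delivers $r_n=o(1)$.

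The main obstacle I anticipate is the sparse eigenvalue chain. For $l\ge 2$ the design matrix is itself a nonlinear function of previously imputed latent variables, and the lower bound $\kappa_{0,l}\gtrsim \sigma_{l-1}^2$ has to be propagated inductively (equation (\ref{recursive_covariance})) while simultaneously controlling the upper eigenvalue by $n\kappa_{\max}$ through the boundedness of $\mathcal{W}_n$ and Lemma \ref{lemma:eigen}. The ReLU case is particularly delicate because $\nabla\Psi$ vanishes on the negative half-line, so one must invoke the $q_{\min}$-type assumption that every true neuron has a nonvanishing expected squared gradient, as introduced in the proof of Lemma \ref{lemma:mineigen}. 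A secondary technical point is that the prior scale parameters in Assumption \ref{ass5} involve $\sigma_l$ (not a global $\sigma^*$), so the rescaling $w_{n,l,i,j}/\sigma_l$ has to be tracked carefully to match the form required by condition (A3) of Lemma \ref{lemma:linearR}; this is routine but easy to mishandle.
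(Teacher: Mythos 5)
Your proposal follows essentially the same route as the paper's (very terse) proof: decompose the MAP problem via (\ref{decompeq}) into $\sum_{l=1}^H d_l$ penalized linear regressions, verify conditions (A1)--(A3) of Lemma \ref{lemma:linearR} using Assumptions \ref{ass1}, \ref{ass3}--\ref{ass5} together with Lemmas \ref{lemma:eigen}--\ref{lemma:mineigen} (in particular the lower bound $\kappa_{0,l}\gtrsim\sigma_{l-1}^2$, which is exactly where the ratio $\sigma_l^2/\sigma_{l-1}^2$ comes from), and sum the per-regression $\ell_2$-errors. Your write-up is correct and in fact supplies more of the layer-wise bookkeeping than the paper does.
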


\begin{proof}
By the above analysis, each regression formed in the stochastic neural network 
satisfies the conditions of Lemma \ref{lemma:linearR}. Therefore, 
the sparse eigenvalue lower bounds established in Lemma \ref{lemma:mineigen}  hold for the stochastic neural network.  Further, 
 by summarizing the $l_2$-errors of coefficient estimation for all $\sum_{l=1}^{H} d_l$ linear regressions, we can conclude the proof.  
\end{proof} 

 It is important to note that the condition $\sum_{l=1}^H d_l \sigma_l^2/\sigma_{l-1}^2 \prec n$ allows the width of each layer of the neural network 
 to increase with $n$ at a rate  as high as $O(n/\log(n))$. 
 This accommodates the scenarios where  
  $dim(\btheta)=O(n^{\zeta})$ for some $\frac{1}{2} \leq  \zeta <1$. 
 In this case, we have $d_H=dim(\btheta)$ for the DNN model in the EFI network, which enables the uncertainty of $\btheta$ to be properly quantified as implied by Theorem \ref{thm:largemodel} proved below.

Further, let's consider the mapping $M(\bw_n)$ as defined in (\ref{mappingeq}), i.e., 
\[
M(\bw_n)=\arg\max_{\bw_n^{\prime}}  \mathbb{E}_{\bw_n} \log\pi(\bY_n,\tilde{\bU}_H,\ldots,\tilde{\bU}_1 |\bX_n, \bZ_n,\bw_n^{\prime}).
\]
As argued in \cite{liang2018imputation} and \cite{Nielsen2000}, 
it is reasonable to assume that 
the mapping is contractive. A recursive application of the mapping, i.e., setting
$\hat{\bw}_n^{(t+1)}=\bw_{n,*}^{(t+1)}=M(\hat{\bw}_n^{(t)})$, leads to a monotone increase of the target expectations 
\[
\mathbb{E}_{\hat{\bw}_n^{(t)}} \log \pi(\bY_n, \tilde{\bU}_H^{(t+1)},\ldots,\tilde{\bU}_1^{(t+1)}|\bX_n,\bZ_n,\hat{\bw}_n^{(t+1)})
\]
for $t=1,2,\ldots,T$. 

\begin{assumption} \label{ass6}  The mapping $M(\bw_n)$ is differentiable. Let $\lambda_{\rm \max}(M_{\bw_n})$ be the largest singular value of $\partial M(\bw_n)/\partial \bw_n$. There
exists a number $\lambda^* <1$ such that  $\lambda_{\rm \max} (M_{\bw_n})\leq \lambda^*$ for all 
$\bw_n \in \mathcal{W}_n$ for sufficiently large $n$ and almost every training dataset $D_n$.
\end{assumption}

 \begin{theorem} \label{lemma:partII} Suppose that a mixture Gaussian penalty is imposed on the weights of the DNN model in the EFI network; Assumptions 
\ref{ass1} and \ref{ass3}-\ref{ass6} hold; $\epsilon$ is sufficiently small; and 
$\sum_{l=1}^H d_l \sigma_l^2/\sigma_{l-1}^2$ $\prec n$ holds, 
where $\max\{\sigma_1, \sigma_2, \ldots, \sigma_H\} \prec \tau(d_1,d_2,\ldots,d_H)$ as defined in Section \ref{auxNN} and $\sigma_0=O(1)$ represents a constant.  
Then $\|\hat{\bw}_n^{(t)} -\tilde{\bw}_n^*\| \stackrel{p}{\to} 0$ for sufficiently large $n$ and sufficiently large $t$ and almost every training dataset $D_n$. 
\end{theorem}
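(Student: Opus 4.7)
The plan is to combine the one-step bound supplied by Theorem \ref{lemma:partI} with the contractivity assumption \ref{ass6} in a Banach-type fixed-point argument, treating $\tilde{\bw}_n^*$ as the (approximate) fixed point of the population mapping $M$. First I would observe that by the definition of $\bw_{n,*}^{(t+1)}=M(\hat{\bw}_n^{(t)})$ given in (\ref{mappingeq}) and by the construction of the pseudo-complete data in the auxiliary StoNet of Section \ref{auxNN}, the true weights $\tilde{\bw}_n^*$ (identified up to loss-invariant transformations, as discussed after Assumption \ref{ass2}) form a fixed point of $M$ in the sense that $M(\tilde{\bw}_n^*)=\tilde{\bw}_n^*$ asymptotically, because the expectation in (\ref{mappingeq}) under the true parameter reproduces the population log-likelihood whose maximizer is $\tilde{\bw}_n^*$. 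I would justify this by combining the equivalence (\ref{equivLikelihoodeq1}) with Assumption \ref{ass2}, so that the population criterion is uniquely maximized (up to equivalence) at $\tilde{\bw}_n^*$.

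Next, I would apply the triangle inequality to decompose the error as
\[
\|\hat{\bw}_n^{(t+1)}-\tilde{\bw}_n^*\|\le \|\hat{\bw}_n^{(t+1)}-M(\hat{\bw}_n^{(t)})\|+\|M(\hat{\bw}_n^{(t)})-M(\tilde{\bw}_n^*)\|.
\]
For the first term, Theorem \ref{lemma:partI} together with Markov's inequality gives, with probability at least $1-\delta$, $\|\hat{\bw}_n^{(t+1)}-\bw_{n,*}^{(t+1)}\|\le \sqrt{r_n/\delta}$, where $r_n=o(1)$. For the second term, Assumption \ref{ass6} yields $\|M(\hat{\bw}_n^{(t)})-M(\tilde{\bw}_n^*)\|\le \lambda^*\|\hat{\bw}_n^{(t)}-\tilde{\bw}_n^*\|$ with $\lambda^*<1$. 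Combining these,
\[
\|\hat{\bw}_n^{(t+1)}-\tilde{\bw}_n^*\|\le \lambda^*\|\hat{\bw}_n^{(t)}-\tilde{\bw}_n^*\|+\sqrt{r_n/\delta}.
\]

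Iterating this recursion from $t=0$ and using $\lambda^*<1$ together with compactness of $\mathcal{W}_n$ (Assumption \ref{ass1}-(i)) to bound $\|\hat{\bw}_n^{(0)}-\tilde{\bw}_n^*\|$ by the diameter $E_n$, I obtain
\[
\|\hat{\bw}_n^{(t)}-\tilde{\bw}_n^*\|\le (\lambda^*)^{t} E_n+\frac{\sqrt{r_n/\delta}}{1-\lambda^*}
\]
on an event of probability at least $1-t\delta$ (by a union bound over the $t$ imputation steps). Choosing $\delta=\delta_n\to 0$ slowly enough that $t\delta_n\to 0$ and $r_n/\delta_n\to 0$ (this is possible since $r_n=o(1)$, and $t$ may grow with $n$), and letting $t\to\infty$ first so that the geometric term vanishes, yields $\|\hat{\bw}_n^{(t)}-\tilde{\bw}_n^*\|\stackrel{p}{\to} 0$.

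The main obstacle, in my view, lies less in the geometric-series bookkeeping than in two subtler points. First, the ``fixed-point'' identification $M(\tilde{\bw}_n^*)=\tilde{\bw}_n^*$ must be handled carefully because of the non-identifiability of neural network weights, the approximation error introduced by the StoNet surrogate (controlled via (\ref{equivLikelihoodeq1}), which is why $\epsilon$ must be sufficiently small and $\max_l\sigma_l\prec\tau(d_1,\ldots,d_H)$), and the fact that (\ref{mappingeq}) involves an expectation under $\tilde{\bw}_n^*$ that must be replaced by a sample average at rate $O_p(\sqrt{r_n})$. Second, iterating Theorem \ref{lemma:partI}'s one-step bound across $t$ iterations requires that the conditions of Lemma \ref{lemma:mineigen} (the sparse-minimal-eigenvalue bound) hold \emph{uniformly} over all iterates visited by the algorithm; one must argue that the imputed $\tilde{\bU}_l^{(t)}$'s stay in a region where Assumption \ref{ass4}-(ii) continues to apply, a uniformity that is implicit in the statement of Lemma \ref{lemma:mineigen} but deserves an explicit verification under the contraction dynamics.
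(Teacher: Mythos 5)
Your proposal is correct and matches the substance of the paper's argument: the paper's own proof is a one-line citation of Theorem 4 of \cite{liang2018imputation} (the IRO consistency theorem), and that theorem's proof is precisely the contraction/fixed-point recursion you have written out, combining the one-step bound of Theorem \ref{lemma:partI} with Assumption \ref{ass6}. The two subtleties you flag at the end are real but are handled in the paper by assumption rather than by proof --- the uniformity over iterations is built into Assumption \ref{ass4}-(ii) (which is stated "for any iteration $t\in\{1,\ldots,T\}$"), and the fixed-point property of $\tilde{\bw}_n^*$ is part of what the cited theorem takes as given --- so your reconstruction is, if anything, more explicit than the paper's.
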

\begin{proof}
This lemma directly follows from Theorem 4 of \cite{liang2018imputation} that the estimator $\hat{\bw}_n^{(t)}$ is consistent when both $n$ and $t$ are sufficiently large.
\end{proof}

In summary, we have given a constructive proof for the consistency of sparse 
stochastic neural network based on the sparse learning theory 
developed for high-dimensional linear regression. 
Our proof implies that under the conditions of Theorem \ref{lemma:partII},
such a consistent estimator can also be obtained by a direct 
calculation of $\hat{\bw}_n^u$ as defined in (\ref{misseq1}). 
Furthermore, it follows from (\ref{thetaequiv1}) that 
\begin{equation} \label{thetaequiv2}
\|\hat{\bw}_n^* -\tilde{\bw}_n^*\| \stackrel{p}{\to} 0, \quad \mbox{as $n \to \infty$.}
\end{equation}

\paragraph{Proof of Theorem \ref{thm:largemodel}} 
\begin{proof} As a summary of Lemma \ref{lemma:equivalent} and (\ref{thetaequiv2}),  we have
\[
\|\bw_n^* -\tilde{\bw}_n^*\| \stackrel{p}{\to} 0, \quad \mbox{as $n\to \infty$},
\]
by setting $\sigma_1=\sigma_2=\cdots=\sigma_H \prec \tau(d_1,d_2,\ldots,d_H)$. 
Subsequently, $\hat{g}(y,\bx,z,\bw_{n}^*)$ constitutes a 
consistent estimator of $\btheta$, and so does $G^*(\bY_n,\bX_n,\bZ_n)$ following the arguments 
in Section \ref{Sect:outline}. 
\end{proof}

In summary, through the introduction of an auxiliary stochastic neural network model and the utilization of the convergence theory of the IRO algorithm, 
we have justified the consistency of the sparse DNN model under mild conditions.

\begin{remark}
The result presented in Theorem \ref{thm:largemodel} is interesting: we can set the width of each layer of the DNN model to be of order $O(n^{\alpha})$ for some $1/2\leq \alpha<1$ 
and set its depth to be $O(n^{\alpha'})$ for some $0<\alpha'<1-\alpha$. Given the approximation ability of the sparse DNN model as studied in \cite{SunSLiang2021}, where it was proven that a neural network of size $O(n^{\tilde{\alpha}})$ 
for some $0<\tilde{\alpha}<1$ has been large enough for approximating many classes 
of functions, 
EFI can be used for uncertainty quantification for 
deep neural networks if their sizes are appropriately chosen. 
Specifically, for the Double-NN approach, we can set the size of the first neural network (for approximation of the inverse function $\btheta=G(\bY_n,\bX_n,\bZ_n)$) 
to be large under the constraint $\sum_{l=1}^H d_l \prec n$, and set the second
neural network (for approximation of the function $\bY_n=f(\bX_n,\bZ_n;\btheta)$) 
to be relatively small with the size $O(n^{\tilde{\alpha}})$ for some 
$0< \tilde{\alpha} <1$. By the theory developed in this paper, the uncertainty of the second neural network can still be correctly quantified using EFI. 
\end{remark}

\section{CQR Method} \label{CQR}

For a test point $\bx$ belonging to the class $\mathcal{I}_c$, 
CQR aims to find the intervals $\hat{C}_1(\bx)$ and $\hat{C}_{ITE}(\bx)$ such that
\[
P(Y(1;\bx) \in \hat{C}_1(\bx))\geq 1-\alpha, \quad 
P(Y(1;\bx)-Y^{obs}(0;\bx)\in \hat{C}_{ITE}(\bx)) \geq 1-\alpha.
\]
Similarly, for a test point $\bx$ belonging to the class 
$\mathcal{I}_t$, 
CQR aims to find the intervals $\hat{C}_0(\bx)$ and $\hat{C}_{ITE}(\bx)$ such that
\[
\begin{split}
P(Y(0;\bx) \in \hat{C}_0(\bx)) & \geq 1-\alpha, \quad 
P(Y^{obs}(1;\bx)-Y(0;\bx)\in \hat{C}_{ITE}(\bx)) \geq 1-\alpha.
\end{split}
\]
To achieve the above goals, CQR initially splits the training dataset $\mathcal{X}_{train}$ to $(\mathcal{X}_{train},\mathcal{X}_{valid})$. A quantile regression model, such as BART or random forest, is trained on $\mathcal{X}_{train}$. Denote the output of the quantile regression by  $[\hat{q}_{\alpha/2}(t,\bx),\hat{q}_{1-\alpha/2}(t,\bx)]$, and calculate the score $s_i(t)=max( \hat{q}_{\alpha/2}(t,\bx_i)-y_i(t), y_i(t)-\hat{q}_{1-\alpha/2}(t,\bx_i))$ for each $\bx_i\in \mathcal{X}_{valid}$. 
Note that $s_i$ acts like a residual derived from the validation set. Let $\hat{s}(t,\alpha)=Quantile(\{s_i(t)\}_{\bx_i\in \mathcal{X}_{valid}};[\frac{(n+1)\alpha ]}{n}])$. Then   $\hat{C}_t(\bx_j)$ given below is the conformal prediction interval  for any $\bx_j\in \mathcal{X}_{test}$: 
\[
\begin{split}
    \hat{C}_t(\bx_j)&=[\hat{q}_{\alpha/2}(t,\bx_j)-\hat{s}(t,1-\alpha),\hat{q}_{1-\alpha/2}(t,\bx_j)+\hat{s}(t,1-\alpha)] \\ 
        &=[\hat{Y}^L(t;\bx_j),\hat{Y}^R(t;\bx_j)] 
\end{split}
\] 
For a more refined approach, one can consider incorporating propensity scores as weights as discussed in \cite{Tib2019wconformal}.

Case (i) and Case (ii) described in Section \ref{ITEPsect} can be addressed by the above approach. However, for Case (iii) there, the conformal prediction needs further steps. First, construct a pair of prediction intervals at level $1-\alpha/2$; namely, $[\hat{Y}^L(1;\bx),\hat{Y}^R(1;\bx)]$ for $Y(1)$ and $[\hat{Y}^L(0;\bx),\hat{Y}^R(0;\bx)]$ for $Y(0)$.  Then, construct an interval for ITE as follows:
\[
\hat{C}^{naive}_{ITE}(\bx)=[\hat{Y}^L(1;\bx)-\hat{Y}^R(0;\bx), \hat{Y}^R(1;\bx) -\hat{Y}^L(0;\bx) ].
\]
We refer to this approach as the ``naive'' approach, which usually leads to very wide 
prediction intervals. 

Another option for case (iii) is the so-called ``nested'' approach by 
splitting $\mathcal{X}_{train}$ into two folds, denoted by $(\mathcal{X}_{train,1},\mathcal{X}_{train,2})$.
On the first fold, train $\hat{C}(1,\bx)$ and $\hat{C}(0,\bx)$ by applying 
conformal inference. On the second fold, 
for each $\bx_i\in \mathcal{X}_{train,2}$, compute  
\[
    \hat{C}(\bx_i) =\begin{cases} 
     [Y^{obs}(1;\bx_i)-\hat{Y}^R(0;\bx_i),Y^{obs}(1,\bx_i)-\hat{Y}^L(0;\bx_i)], &  \textit{if \ } T=1, \\
     [\hat{Y}^L(1;\bx_i)-Y^{obs}(0;\bx_i),\hat{Y}^R(1;\bx_i)-Y^{obs}(0;\bx_i)], &  \textit{if \ } T=0, \\
\end{cases}
\]
where $\hat{Y}^L$  and $\hat{Y}^R$ are estimated based on $\mathcal{X}_{train,1}$. Note that the conformal inference is also applicable for the data with interval outcomes. 
Applying the conformal inference method with interval outcomes on $(\bx_i,\hat{C}(\bx_i))$ for $\bx_i\in \mathcal{X}_{train,2}$,  yielding the interval $\hat{C}^{exact}_{ITE}(\bx)$. 
Applying the conformal inference twice results in a sparse utilization of data for training the regression model and, subsequently, wider prediction intervals. The inexact method involves fitting conditional quantiles of $\hat{C}^L$ and $\hat{C}^R$, yielding an interval $\tilde{C}_{ITE}^{inexact}(\bx)$. The inexact method does not guarantee the coverage rate. Refer to \cite{lei2021ite} for the detail.

\section{Experimental Settings} \label{parasetting}

To enforce a sparse DNN to be learned for the inverse function $g(\cdot)$, we impose 
the following mixture Gaussian prior on each element of $\bw_n$: 
\begin{equation}\label{sparseprior}
    \pi(w)\sim \rho  N(0,\sigma_{1}^2)+(1-\rho)N(0,\sigma_{0}^2),
\end{equation}
where $w$ represents a generic element of $\bw_n$ and, unless stated otherwise, 
we set $\rho=1e-2$, $\sigma_0=1e-2$ and $\sigma_1=1$.
The elements of $\bw_n$ are {\it a priori} independent.

For EFI, we employ SGHMC in latent variable sampling, i.e., we simulate $\bZ_n^{(k+1)}$ in the following formula: 
\[  
\begin{split}
\bV_n^{(k+1)} &= (1-\varpi) \bV_n^{(k)} +\upsilon_{k+1} \widehat{\nabla}_{\bZ_n} \log \pi_{\epsilon}(\bZ_n^{(k)}|\bX_n,\bY_n,\bw_n^{(k)}) +\sqrt{2 \varpi \tau \upsilon_{k+1}} \be^{(k+1)}, \\
\bZ_n^{(k+1)}&=\bZ_n^{(k)}+\bV_n^{(k+1)}, 
\end{split}
\]
where $\tau=1$, $0<\varpi\leq 1$ is the momentum parameter, $\be^{(k+1)} \sim N(0,I_{d_{\bz}})$, 
and $\upsilon_{k+1}$ is the learning rate.
It is worth noting that the algorithm is reduced to SGLD if we set $\varpi=1$.

In the simulations, we set the learning rate sequence $\{\upsilon_k: k=1,2,\ldots\}$ and the step size sequence $\{\gamma_k: k=1,2,\ldots\}$ in the forms:
\[
\upsilon_k=\frac{C_{\upsilon}}{c_{\upsilon}+k^{\alpha}}, \quad  
\gamma_k=\frac{C_{\gamma}}{c_{\gamma}+k^{\alpha}},
\]
for some constants $C_{\upsilon}>0$, $c_{\upsilon}>0$, $C_{\gamma}>0$ and $c_{\gamma}>0$, and $\alpha \in (0,1]$. 
The values of $C_{\upsilon}$, $c_{\upsilon}$, $C_{\gamma}$,$c_{\gamma}$ and $\alpha$ used in different experiments are given below.

\subsection{ATE}

From the point of view  of equation solving, the model (\ref{causalmodel1}) in the main text can also  be written as 
\begin{equation} \label{causalmodel2}
y_i=\tau^{\prime} T_i^{\prime} +\mu^{\prime}+\bx_i \bbeta+\sigma z_i, \quad i=1,2,\ldots,n,
\end{equation}
where $T_i^{\prime}\in \{-1,1\}$, $\tau^{\prime}=\tau/2$, and $\mu^{\prime}=\mu+\tau/2$. 
Let $\btheta=(\tau^{\prime},\mu^{\prime},\bbeta^{\top},\sigma)^{\top}$. Equation (\ref{causalmodel2}) can be solved under the standard framework of EFI.
Unless otherwise noted, all results in this paper are  based on solving 
this type of transformed equations.

We set $\alpha=1/7$ and $\varpi=0.1$.
For $n=250$, we set 
$(C_{\upsilon},c_{\upsilon},C_{\gamma},c_{\gamma})=($200000, 1000000, 54000, 1000000).
For $n=500$ and 1000, we set 
$(C_{\upsilon},c_{\upsilon},C_{\gamma},c_{\gamma})=(500000, 1000000$, 54000, 1000000).

We set $\mK=5000$ as the number of burn-in iterations, and set $M=50,000$ as the number of iterations used for fiducial sample collection. We thinned the Markov chain by a factor of $B=5$; that is, we collected $M/B=10,000$ samples in each run. 

We set $\eta=500$ and $\epsilon=1/10$ in construction of the energy function, and perform gradient clipping by norm with 5000 during the first 
100 iterations (for finding a reasonably good initial point).  

 
The DNN in the EFI network has two hidden layers,  with the widths given by $d_1=90$ and $d_2=30$, respectively.

\subsection{Linear control response with Non-linear treatment response}

\begin{figure}[!h]
    \centering
    \includegraphics[width=0.6\textwidth]{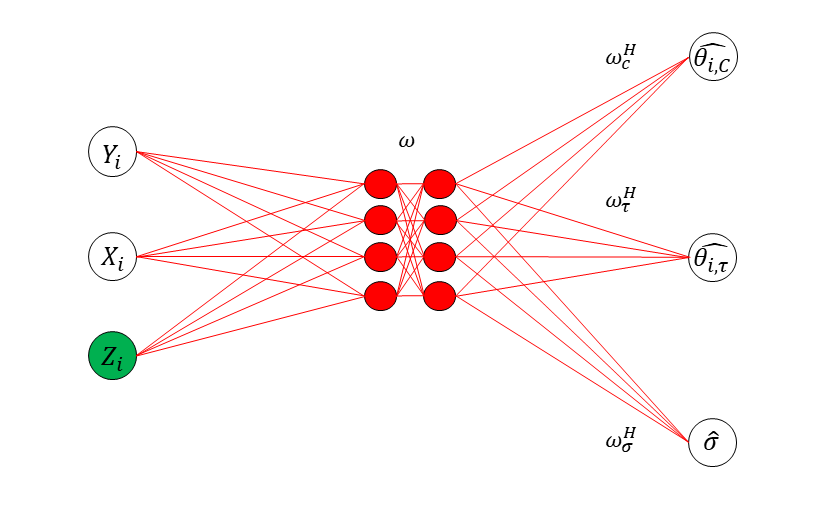}
    \caption{Description of the structure of the DNN model used in the Double-NN method: $\btheta$ (i.e., the output layer of the DNN model) can be partitioned into  three parts, namely, $\btheta_c$, $\btheta_{\tau}$, and $\sigma$, 
    where $\omega_{c}^L$, $\omega_{\tau}^L$ and $\omega_{\sigma}^L$ correspond to the parameters for $\btheta_c$, $\btheta_{\tau}$, and $\log(\sigma)$, respectively. 
    In the case that $\btheta_c$ or $\btheta_{\tau}$ represents a neural network, 
    we re-scale them by dividing a factor of 25 such that the output values
    of each output neuron are close to each other, easing the training process.   
    }
    \label{fig:double-NN} 
\end{figure}
 
We set $\alpha=1/7$, $\varpi=0.1$, and 
$(C_{\upsilon},c_{\upsilon})=(200000,1000000)$ and $(C_{\gamma},c_{\gamma})=(20,20000)$ for $\omega^{H}_{\tau}$ and $(C_{\gamma},c_{\gamma})=(20000,200000)$ for all other parameters.  Refer to Figure \ref{fig:double-NN} for the definition of $\omega_{\tau}^H$.

For initialization, we update the $\bw_n$ with randomly sampled $\bZ_n^{(t+1)}\sim \pi_0^{\otimes n}(\bZ_n)$ for the first 5000 iterations. We set $\mK=20,000$ as the number of burn-in iterations, and set $M=50,000$ as the number of iterations used for fiducial sample collection. We thinned the Markov chain by a factor of $B=5$; that is, we collected $M/B=10,000$ samples in each run. We set $\eta=10$ and $\epsilon=1/10$ in construction of the energy function, and perform gradient clipping by norm with 5000 during the first 100 iterations. 

 The DNN in the EFI network has two hidden layers,  with the widths given by $d_1=90$ and $d_2=30$, respectively. The DNN used for modeling the treatment effect has
 two hidden layers, each hidden layer consisting of 10 hidden neurons.

\subsection{Non-linear control response with Non-linear treatment response}

We set $\alpha=1/7$, $\varpi=0.1$, and 
$(C_{\upsilon},c_{\upsilon})=(500000,1000000)$ and $(C_{\gamma},c_{\gamma})=(2.5,1000000)$ for $\omega^{H}_{\tau}$ and $\omega_{c}^H$, and set $(C_{\gamma},c_{\gamma})=(20000,200000)$ for all other parameters. Refer to  Figure \ref{fig:double-NN} for the definitions of $\omega_{\tau}^H$ and $\omega_c^H$.
 
 For initialization, we update the $\bw_n$ with randomly sampled $\bZ_n^{(t+1)}\sim \pi_0^{\otimes n}(\bz_n)$ for the first 5000 iterations. We set $\mK=20,000$ as the number of burn-in iterations, and set $M=50,000$ as the number of iterations used for fiducial sample collection. We thinned the Markov chain by a factor of $B=5$; that is, we collected $M/B=10,000$ samples in each run. We set $\eta=10$ and $\epsilon=1/10$ in construction of the energy function, and perform gradient clipping by norm with 5000 during the first 100 iterations.

 The DNN in the EFI network has two hidden layers,  with the widths given by $d_1=90$ and $d_2=30$, respectively. The DNN used for modeling the treatment effect has two hidden layers, each hidden layer consisting of 10 hidden neurons. 
 The DNN used for modeling the response function under the control has two hidden layers, each hidden layer consisting of 10 hidden neurons.




 


\subsection{Lalonde}

We set $\alpha=1/7$, $\varpi=0.1$, and 
$(C_{\upsilon},c_{\upsilon})=(500000,1000000)$ and $(C_{\gamma},c_{\gamma})=(2.5,1000000)$ for $\omega^{H}_{\tau}$ and $\omega^{H}_{c}$, and $(C_{\gamma},c_{\gamma})=(1000,1000000)$ for all other parameters. 
Refer to  Figure \ref{fig:double-NN} for the definitions of $\omega_{\tau}^H$ and $\omega_c^H$.

 For initialization, we update the $\bw_n$ with randomly sampled $\bZ_n^{(t+1)}\sim \pi_0^{\otimes n}(\bz_n)$ for the first 10,000 iterations. We set $\mK=20,000$ as the number of burn-in iterations and set $M=50,000$ as the number of iterations used for fiducial sample collection. We thinned the Markov chain by a factor of $B=5$; that is, we collected $M/B=10,000$ samples in each run. We set $\eta=10$ and $\epsilon=1/10$ in construction of the energy function, and perform gradient clipping by norm with 5000 for first 100 iterations.

 The DNN in the EFI network has two hidden layers,  with the widths given by $d_1=90$ and $d_2=30$, respectively. The DNN used for modeling the treatment effect contains two hidden layers, each hidden layer 
 containing 10 hidden neurons. 
 The DNN used for modeling the response function under the control contains two hidden layers,  each hidden layer 
 containing 10 hidden neurons.


\subsection{NLSM}

We set $\alpha=1/7$, $\varpi=0.1$, and 
$(C_{\upsilon},c_{\upsilon})=(500000,1000000)$ and $(C_{\gamma},c_{\gamma})=(2.5,1000000)$ for $\omega^{H}_{\tau}$ and $\omega^{H}_{c}$, and $(C_{\gamma},c_{\gamma})=(5000,1000000)$ for all other parameters. Refer to  Figure \ref{fig:double-NN} for the definitions of $\omega_{\tau}^H$ and $\omega_c^H$.

 For initialization, we update the $\bw_n$ with randomly sampled $Z_n^{(t+1)}\sim \pi_0^{\otimes n}(\bz_n)$ for the first 10,000 iterations.  We set $\mK=20,000$ as the number of burn-in iterations and set $M=50,000$ as the number of iterations used for fiducial sample collection. We thinned the Markov chain by a factor of $B=5$; that is, we collected $M/B=10,000$ samples in each run. We set $\eta=10$ and $\epsilon=1/10$ in construction of the energy function, and perform gradient clipping by norm with 5000 for first 100 iterations.

 The DNN in the EFI network contains two hidden layers,  with the widths given by $d_1=90$ and $d_2=30$, respectively. The DNN used for modeling the treatment effect contains two hidden layers,  each hidden layer 
 containing 10 hidden neurons. 
 The DNN used for modeling the response function under control contains two hidden layers, each hidden layer 
 containing 10 hidden neurons.





 

\bibliographystyle{asa}
\bibliography{reference}

\begin{thebibliography}{72}
\newcommand{\enquote}[1]{``#1''}
\expandafter\ifx\csname natexlab\endcsname\relax\def\natexlab#1{#1}\fi

\bibitem[{Bang and Robins(2005)}]{Bang2005DoublyRE}
Bang, H. and Robins, J.~M. (2005), \enquote{Doubly Robust Estimation in Missing Data and Causal Inference Models,} \textit{Biometrics}, 61, 962--972.

\bibitem[{Bolcskei et~al.(2019)Bolcskei, Grohs, Kutyniok, and Petersen}]{bolcskei2019optimal}
Bolcskei, H., Grohs, P., Kutyniok, G., and Petersen, P. (2019), \enquote{Optimal approximation with sparsely connected deep neural networks,} \textit{SIAM Journal on Mathematics of Data Science}, 1, 8--45.

\bibitem[{Breiman(1998)}]{Breiman1998ArcingC}
Breiman, L. (1998), \enquote{Arcing classifier (with discussion and a rejoinder by the author),} \textit{Annals of Statistics}, 26, 801--849.

\bibitem[{Breiman(2001)}]{breiman2001random}
--- (2001), \enquote{Random forests,} \textit{Machine learning}, 45, 5--32.

\bibitem[{Caron et~al.(2022)Caron, Baio, and Manolopoulou}]{caron2022itereview}
Caron, A., Baio, G., and Manolopoulou, I. (2022), \enquote{{Estimating Individual Treatment Effects using Non-Parametric Regression Models: a Review},} \textit{Journal of the Royal Statistical Society Series A: Statistics in Society}, 185, 1115--1149.

\bibitem[{Carvalho et~al.(2019)Carvalho, Feller, Murray, Woody, and Yeager}]{Carvalho2019AssessingTE}
Carvalho, C.~M., Feller, A., Murray, J., Woody, S., and Yeager, D.~S. (2019), \enquote{Assessing Treatment Effect Variation in Observational Studies: Results from a Data Challenge,} \textit{Observational Studies}.

\bibitem[{Cefalu et~al.(2021)Cefalu, Ridgeway, McCaffrey, Morral, Griffin, and Burgette}]{twang2021}
Cefalu, M., Ridgeway, G., McCaffrey, D., Morral, A., Griffin, B.~A., and Burgette, L. (2021), \enquote{Package `twang': Toolkit for Weighting and Analysis of Nonequivalent Groups,} \textit{R Package}.

\bibitem[{Chen et~al.(2014)Chen, Fox, and Guestrin}]{SGHMC2014}
Chen, T., Fox, E., and Guestrin, C. (2014), \enquote{Stochastic gradient hamiltonian monte carlo,} in \textit{International conference on machine learning}, pp. 1683--1691.

\bibitem[{Chipman et~al.(2010)Chipman, George, and McCulloch}]{Chipman2010BARTBA}
Chipman, H.~A., George, E.~I., and McCulloch, R.~E. (2010), \enquote{BART: Bayesian Additive Regression Trees,} \textit{The Annals of Applied Statistics}, 4, 266--298.

\bibitem[{Deng et~al.(2019)Deng, Zhang, Liang, and Lin}]{deng2019adaptive}
Deng, W., Zhang, X., Liang, F., and Lin, G. (2019), \enquote{An adaptive empirical Bayesian method for sparse deep learning,} \textit{Advances in neural information processing systems}, 32.

\bibitem[{Dittmer et~al.(2018)Dittmer, King, and Maass}]{Dittmer2018SingularVF}
Dittmer, S., King, E.~J., and Maass, P. (2018), \enquote{Singular Values for ReLU Layers,} \textit{IEEE Transactions on Neural Networks and Learning Systems}, 31, 3594--3605.

\bibitem[{Dorie and Hill(2020)}]{Dorie2020CausalIU}
Dorie, V. and Hill, J.~L. (2020), \enquote{Bartcause: Causal Inference using Bayesian Additive Regression Trees [R package bartCause version 1.0-4],} \textit{R Package}.

\bibitem[{Farrell et~al.(2021)Farrell, Liang, and Misra}]{Farrell2021DeepNN}
Farrell, M., Liang, T., and Misra, S. (2021), \enquote{Deep Neural Networks for Estimation and Inference,} \textit{Econometrica}, 89, 181--213.

\bibitem[{Fisher(1935)}]{Fisher1935The}
Fisher, R.~A. (1935), \enquote{The fiducial argument in statistical inference,} \textit{Annals of Eugenics}, 6, 391--398.

\bibitem[{Foster et~al.(2011)Foster, Taylor, and Ruberg}]{Foster2011SubgroupIF}
Foster, J.~C., Taylor, J.~M., and Ruberg, S.~J. (2011), \enquote{Subgroup identification from randomized clinical trial data,} \textit{Statistics in Medicine}, 30.

\bibitem[{Fraser(1966)}]{Fraser1966StructuralPA}
Fraser, D. A.~S. (1966), \enquote{Structural probability and a generalization,} \textit{Biometrika}, 53, 1--9.

\bibitem[{Fraser(1968)}]{Fraser1968Book}
--- (1968), \textit{The Structure of Inference}, New York-London-Sydney: John Wiley \& Sons.

\bibitem[{Guan and Yang(2019)}]{Guan2019AUF}
Guan, Q. and Yang, S. (2019), \enquote{A Unified Framework for Causal Inference with Multiple Imputation Using Martingale,} \textit{arXiv: Methodology}.

\bibitem[{Hahn et~al.(2020)Hahn, Murray, and Carvalho}]{hahn2020nonpar3}
Hahn, P.~R., Murray, J.~S., and Carvalho, C.~M. (2020), \enquote{{Bayesian Regression Tree Models for Causal Inference: Regularization, Confounding, and Heterogeneous Effects (with Discussion)},} \textit{Bayesian Analysis}, 15, 965 -- 2020.

\bibitem[{Hannig(2009)}]{hannig2009gfi}
Hannig, J. (2009), \enquote{On generalized fiducial inference,} \textit{Statistica Sinica}, 19, 491--544.

\bibitem[{Hannig et~al.(2016)Hannig, Iyer, Lai, and Lee}]{hannig2016gfi}
Hannig, J., Iyer, H., Lai, R. C.~S., and Lee, T. C.~M. (2016), \enquote{Generalized Fiducial Inference: A Review and New Results,} \textit{Journal of the American Statistical Association}, 111, 1346--1361.

\bibitem[{Hestness et~al.(2017)Hestness, Narang, Ardalani, Diamos, Jun, Kianinejad, Patwary, Yang, and Zhou}]{Hestness2017DeepLS}
Hestness, J., Narang, S., Ardalani, N., Diamos, G.~F., Jun, H., Kianinejad, H., Patwary, M. M.~A., Yang, Y., and Zhou, Y. (2017), \enquote{Deep Learning Scaling is Predictable, Empirically,} \textit{ArXiv}, abs/1712.00409.

\bibitem[{Higham and Cheng(1998)}]{Higham1998ModifyingTI}
Higham, N.~J. and Cheng, S.~H. (1998), \enquote{Modifying the inertia of matrices arising in optimization,} \textit{Linear Algebra and its Applications}, 261--279.

\bibitem[{Hill(2011)}]{Hill2011bart}
Hill, J.~L. (2011), \enquote{Bayesian Nonparametric Modeling for Causal Inference,} \textit{Journal of Computational and Graphical Statistics}, 20, 217--240.

\bibitem[{Hornik(1991)}]{Hornik1991ApproximationCO}
Hornik, K. (1991), \enquote{Approximation capabilities of multilayer feedforward networks,} \textit{Neural Networks}, 4, 251--257.

\bibitem[{Hornik et~al.(1989)Hornik, Stinchcombe, and White}]{HornikSW1989}
Hornik, K., Stinchcombe, M., and White, H. (1989), \enquote{Multilayer feedforward networks are universal approximators,} \textit{Neural Networks}, 2, 359--366.

\bibitem[{Imbens(2004)}]{Imbens2004NonparametricEO}
Imbens, G. (2004), \enquote{Nonparametric Estimation of Average Treatment Effects Under Exogeneity: A Review,} \textit{The Review of Economics and Statistics}, 86, 4--29.

\bibitem[{Imbens and Rubin(2015)}]{imbensrubin2015}
Imbens, G.~W. and Rubin, D.~B. (2015), \textit{Causal Inference for Statistics, Social, and Biomedical Sciences: An Introduction}, USA: Cambridge University Press.

\bibitem[{Javanmard and Montanari(2014)}]{Javanmard2014}
Javanmard, A. and Montanari, A. (2014), \enquote{Confidence iReferntervals and hypothesis testing for high-dimensional regression,} \textit{Journal of Machine Learning Research}, 15, 2869--2909.

\bibitem[{Jiang(2007)}]{jiang2007bayesian}
Jiang, W. (2007), \enquote{Bayesian variable selection for high dimensional generalized linear models: convergence rates of the fitted densities,} \textit{The Annals of Statistics}, 35, 1487--1511.

\bibitem[{Kidger and Lyons(2020)}]{Kidger2020UniversalAW}
Kidger, P. and Lyons, T. (2020), \enquote{Universal Approximation with Deep Narrow Networks,} \textit{Proceedings of Machine Learning Research}, 125, 1--22.

\bibitem[{Kim et~al.(2023)Kim, Min, and Park}]{Kim2023MinimumWF}
Kim, N., Min, C., and Park, S. (2023), \enquote{Minimum width for universal approximation using ReLU networks on compact domain,} \textit{ArXiv}, abs/2309.10402.

\bibitem[{K{\"u}nzel et~al.(2019)K{\"u}nzel, Sekhon, Bickel, and Yu}]{Knzel2019MetalearnersFE}
K{\"u}nzel, S.~R., Sekhon, J.~S., Bickel, P.~J., and Yu, B. (2019), \enquote{Metalearners for estimating heterogeneous treatment effects using machine learning,} \textit{Proceedings of the National Academy of Sciences of the United States of America}, 116, 4156 -- 4165.

\bibitem[{Lee et~al.(2016)Lee, Sun, Sun, and Taylor}]{LeeTaylor2016}
Lee, J., Sun, D., Sun, Y., and Taylor, J. (2016), \enquote{Exact post-selection inference, with application to the Lasso,} \textit{\ANNALS}, 44, 907--927.

\bibitem[{Lei and Candès(2021)}]{lei2021ite}
Lei, L. and Candès, E.~J. (2021), \enquote{{Conformal Inference of Counterfactuals and Individual Treatment Effects},} \textit{Journal of the Royal Statistical Society Series B: Statistical Methodology}, 83, 911--938.

\bibitem[{Liang et~al.(2018{\natexlab{a}})Liang, Jia, Xue, Li, and Luo}]{liang2018imputation}
Liang, F., Jia, B., Xue, J., Li, Q., and Luo, Y. (2018{\natexlab{a}}), \enquote{An imputation--regularized optimization algorithm for high dimensional missing data problems and beyond,} \textit{\JRSSB}, 80, 899--926.

\bibitem[{Liang et~al.(2024)Liang, Kim, and Sun}]{LiangKS2024EFI}
Liang, F., Kim, S., and Sun, Y. (2024), \enquote{Exended Fiducial Inference: Toward an Automated Process of Statistical Inference,} \textit{\JRSSB}, in press.

\bibitem[{Liang et~al.(2018{\natexlab{b}})Liang, Li, and Zhou}]{Liang2018BNN}
Liang, F., Li, Q., and Zhou, L. (2018{\natexlab{b}}), \enquote{Bayesian Neural Networks for Selection of Drug Sensitive Genes,} \textit{\JASA}, 113, 955--972.

\bibitem[{Liang et~al.(2022{\natexlab{a}})Liang, Xue, and Jia}]{LiangXJ2020MNR}
Liang, F., Xue, J., and Jia, B. (2022{\natexlab{a}}), \enquote{Markov neighborhood regression for high-dimensional inference,} \textit{\JASA}, 117, 1200--1214.

\bibitem[{Liang et~al.(2022{\natexlab{b}})Liang, Sun, and Liang}]{LiangSLiang2022}
Liang, S., Sun, Y., and Liang, F. (2022{\natexlab{b}}), \enquote{Nonlinear Sufficient Dimension Reduction with a Stochastic Neural Network,} \textit{NeurIPS 2022}.

\bibitem[{Lu et~al.(2018)Lu, Sadiq, Feaster, and Ishwaran}]{Lu2018EstimatingIT}
Lu, M., Sadiq, S., Feaster, D.~J., and Ishwaran, H. (2018), \enquote{Estimating Individual Treatment Effect in Observational Data Using Random Forest Methods,} \textit{Journal of Computational and Graphical Statistics}, 27, 209 -- 219.

\bibitem[{Martin(2023)}]{Martin2023FiducialIV}
Martin, R. (2023), \enquote{Fiducial inference viewed through a possibility-theoretic inferential model lens,} \textit{\JMLR}, 215, 299--310.

\bibitem[{Murph et~al.(2022)Murph, Hannig, and Williams}]{Murph2022GeneralizedFI}
Murph, A.~C., Hannig, J., and Williams, J.~P. (2022), \enquote{Generalized Fiducial Inference on Differentiable Manifolds,} \textit{arXiv:2209.15473}.

\bibitem[{Nielsen(2000)}]{Nielsen2000}
Nielsen, S. (2000), \enquote{The stochastic EM algorithm: Estimation and asymptotic results,} \textit{Bernoulli}, 6, 457--489.

\bibitem[{Park et~al.(2020)Park, Yun, Lee, and Shin}]{Park2020MinimumWF}
Park, S., Yun, C., Lee, J., and Shin, J. (2020), \enquote{Minimum Width for Universal Approximation,} \textit{ArXiv}, abs/2006.08859.

\bibitem[{Petersen and Voigtlaender(2018)}]{petersen2018optimal}
Petersen, P. and Voigtlaender, F. (2018), \enquote{Optimal approximation of piecewise smooth functions using deep ReLU neural networks,} \textit{Neural Networks}, 108, 296--330.

\bibitem[{Portnoy(1986)}]{Portnoy1986OnTC}
Portnoy, S. (1986), \enquote{On the central limit theorem in $\mathbb{R}^p$ when $p\to \infty$,} \textit{Probability Theory and Related Fields}, 73, 571--583.

\bibitem[{Portnoy(1988)}]{Portnoy1988}
--- (1988), \enquote{Asymptotic behavior of likelihood methods for exponential families when the number of parameters tend to infinity,} \textit{\ANNALS}, 16, 356--366.

\bibitem[{Robbins and Monro(1951)}]{RobbinsM1951}
Robbins, H. and Monro, S. (1951), \enquote{A Stochastic Approximation Method,} \textit{Annals of Mathematical Statistics}, 22, 400--407.

\bibitem[{Robins et~al.(1994)Robins, Rotnitsky, and Zhao}]{rrz94}
Robins, J.~M., Rotnitsky, A., and Zhao, L.~P. (1994), \enquote{Estimation of regression coefficients when some regressors are not always observed,} \textit{Journal of the American Statistical Association}, 89, 846--866.

\bibitem[{Romano et~al.(2019)Romano, Patterson, and Cand{\`e}s}]{Romano2019ConformalizedQR}
Romano, Y., Patterson, E., and Cand{\`e}s, E.~J. (2019), \enquote{Conformalized Quantile Regression,} in \textit{Neural Information Processing Systems}.

\bibitem[{Rosenbaum(1987)}]{Rosenbaum1987ModelBasedDA}
Rosenbaum, P.~R. (1987), \enquote{Model-Based Direct Adjustment,} \textit{Journal of the American Statistical Association}, 82, 387--394.

\bibitem[{Rosenbaum(2002)}]{Rosenbaum2002Book}
--- (2002), \textit{Observational Studies (2nd edition)}, New York: Springer.

\bibitem[{Rubin(1974)}]{Rubin1974EstimatingCE}
Rubin, D.~B. (1974), \enquote{Estimating causal effects of treatments in randomized and nonrandomized studies.} \textit{Journal of Educational Psychology}, 66, 688--701.

\bibitem[{Schapire(1990)}]{Schapire1990TheSO}
Schapire, R.~E. (1990), \enquote{The Strength of Weak Learnability,} \textit{Machine Learning}, 5, 197--227.

\bibitem[{Schmidt-Hieber(2020)}]{Schmidt-Hieber2017Nonparametric}
Schmidt-Hieber, J. (2020), \enquote{Nonparametric regression using deep neural networks with ReLU activation function,} \textit{The Annals of Statistics}, 48, 1875--1897.

\bibitem[{Sethuraman et~al.(2023)Sethuraman, Lopez, Mohan, Fekri, Biancalani, and Hutter}]{Sethuraman2023NODAGSFlowNC}
Sethuraman, M.~G., Lopez, R., Mohan, R.~V., Fekri, F., Biancalani, T., and Hutter, J.-C. (2023), \enquote{NODAGS-Flow: Nonlinear Cyclic Causal Structure Learning,} in \textit{International Conference on Artificial Intelligence and Statistics}.

\bibitem[{Shafer and Vovk(2008)}]{Shafer2008}
Shafer, G. and Vovk, V. (2008), \enquote{A Tutorial on Conformal Prediction,} \textit{J. Mach. Learn. Res.}, 9, 371–421.

\bibitem[{Shalit et~al.(2017)Shalit, Johansson, and Sontag}]{shalit2017pehe}
Shalit, U., Johansson, F.~D., and Sontag, D.~A. (2017), \enquote{{Estimating individual treatment effect: generalization bounds and algorithms},} in \textit{{Proceedings of the 34th International Conference on Machine Learning}}, {PMLR}, pp. 3076--3085.

\bibitem[{Song and Liang(2023)}]{SongLiang2021Nearly}
Song, Q. and Liang, F. (2023), \enquote{Nearly optimal Bayesian Shrinkage for high dimensional regression,} \textit{China Science Mathematics}, 66, 409--442.

\bibitem[{Song et~al.(2020)Song, Sun, Ye, and Liang}]{SongLiang2020eSGLD}
Song, Q., Sun, Y., Ye, M., and Liang, F. (2020), \enquote{Extended Stochastic Gradient MCMC for Large-Scale Bayesian Variable Selection,} \textit{Biometrika}, 107, 997--1004.

\bibitem[{Sun and Liang(2022)}]{SunLiang2022kernel}
Sun, Y. and Liang, F. (2022), \enquote{A kernel-expanded stochastic neural network,} \textit{Journal of the Royal Statistical Society Series B}, 84, 547--578.

\bibitem[{Sun et~al.(2022)Sun, Song, and Liang}]{SunSLiang2021}
Sun, Y., Song, Q., and Liang, F. (2022), \enquote{Consistent Sparse Deep Learning: Theory and Computation,} \textit{\JASA}, 117, 1981--1995.

\bibitem[{Tibshirani et~al.(2019)Tibshirani, Foygel~Barber, Candes, and Ramdas}]{Tib2019wconformal}
Tibshirani, R.~J., Foygel~Barber, R., Candes, E., and Ramdas, A. (2019), \enquote{Conformal Prediction Under Covariate Shift,} in \textit{Advances in Neural Information Processing Systems}, eds. Wallach, H., Larochelle, H., Beygelzimer, A., d\textquotesingle Alch\'{e}-Buc, F., Fox, E., and Garnett, R., Curran Associates, Inc., vol.~32.

\bibitem[{van~de Geer et~al.(2014)van~de Geer, B\"uhlmann, Ritov, and Dezeure}]{vandeGeer2014}
van~de Geer, S., B\"uhlmann, P., Ritov, Y., and Dezeure, R. (2014), \enquote{On asymptotically optimal confidence regions and tests for high-dimensional models,} \textit{Ann. Statist.}, 42, 1166--1202.

\bibitem[{Vovk et~al.(2005)Vovk, Gammerman, and Shafer}]{Vovk2005AlgorithmicLI}
Vovk, V., Gammerman, A., and Shafer, G. (2005), \textit{Algorithmic Learning in a Random World}, Springer.

\bibitem[{Wager and Athey(2018)}]{wager2018nonpar2}
Wager, S. and Athey, S. (2018), \enquote{Estimation and Inference of Heterogeneous Treatment Effects using Random Forests,} \textit{Journal of the American Statistical Association}, 113, 1228--1242.

\bibitem[{Williams(2023)}]{williams2023modelfreeGFI}
Williams, J.~P. (2023), \enquote{Model-free generalized fiducial inference,} .

\bibitem[{Yeager et~al.(2019)Yeager, Hanselman, Walton, Murray, Crosnoe, Muller, Tipton, Schneider, Hulleman, Hinojosa, Paunesku, Romero, Flint, Roberts, Trott, Iachan, Buontempo, Yang, Carvalho, Hahn, Gopalan, Mhatre, Ferguson, Duckworth, and Dweck}]{yeager2019nlsm}
Yeager, D.~S., Hanselman, P., Walton, G.~M., Murray, J.~S., Crosnoe, R., Muller, C., Tipton, E., Schneider, B., Hulleman, C.~S., Hinojosa, C.~P., Paunesku, D., Romero, C., Flint, K., Roberts, A., Trott, J., Iachan, R., Buontempo, J., Yang, S.~M., Carvalho, C.~M., Hahn, P.~R., Gopalan, M., Mhatre, P., Ferguson, R., Duckworth, A.~L., and Dweck, C.~S. (2019), \enquote{A national experiment reveals where a growth mindset improves achievement,} \textit{Nature}, 573.

\bibitem[{Zabell(1992)}]{Zabell1992Fisher}
Zabell, S.~L. (1992), \enquote{R. A. Fisher and Fiducial Argument,} \textit{Statistical Science}, 7, 369--387.

\bibitem[{Zetterqvist and Sjölander(2015)}]{drimplement}
Zetterqvist, J. and Sjölander, A. (2015), \enquote{Doubly Robust Estimation with the R Package drgee,} \textit{Epidemiologic Methods}, 4, 69--86.

\bibitem[{Zhang and Zhang(2014)}]{ZhangZhang2014}
Zhang, C.-H. and Zhang, S.~S. (2014), \enquote{Confidence intervals for low dimensional parameters in high dimensional linear models,} \textit{Journal of the Royal Statistical Society: Series B (Statistical Methodology)}, 76, 217--242.

\end{thebibliography}

\end{document}